\documentclass[11pt,a4paper]{article}
\usepackage{xcolor}
\usepackage{booktabs}
\usepackage{float}
\usepackage{wrapfig}

\usepackage{color}
\usepackage{amsmath}
\usepackage{amssymb}
\usepackage{amsthm}
\usepackage{mathtools}
\usepackage{thmtools}
\usepackage[mathscr]{eucal}
\usepackage{bm}
\usepackage{bbm}
\usepackage{relsize}
\usepackage{graphicx}
\usepackage{multirow}
\usepackage{multicol}
\usepackage[toc,page]{appendix}
\usepackage{ml-defs}
\usepackage{annote-equations}

\makeatletter
\AddToHook{package/fancyhdr/after}{
  \@ifpackagelater{fancyhdr}{2022/01/01}{}{
    \renewcommand\fancyhead[2][]{
      \expandafter\long\expandafter\gdef\csname nxai@fh@#1\endcsname{#2}
      \f@nch@fancyhf\fancyhead h[#1]{\csname nxai@fh@#1\endcsname}}}}
\makeatother

\usepackage[nocustomfonts]{nxai}
\newcommand{\affmark}[1]{\raisebox{0.55ex}{\fontsize{7}{7}\selectfont #1}}

\usepackage{enumitem}

\usepackage{hyperref}
\usepackage{url}
\newcommand{\vertiii}[1]{{\left\vert\kern-0.25ex\left\vert\kern-0.25ex\left\vert #1
    \right\vert\kern-0.25ex\right\vert\kern-0.25ex\right\vert}}

\definecolor{linkcol}{HTML}{3073AD}
\definecolor{citecol}{HTML}{3073AD}
\definecolor{urlcol}{HTML}{3073AD}
\hypersetup{
   colorlinks=true,
   linkcolor=linkcol,
   citecolor=citecol,
   urlcolor=urlcol,
}

\title{Kronecker Delta Rule for Long Context Linear Attention}

\nxaisetalogo{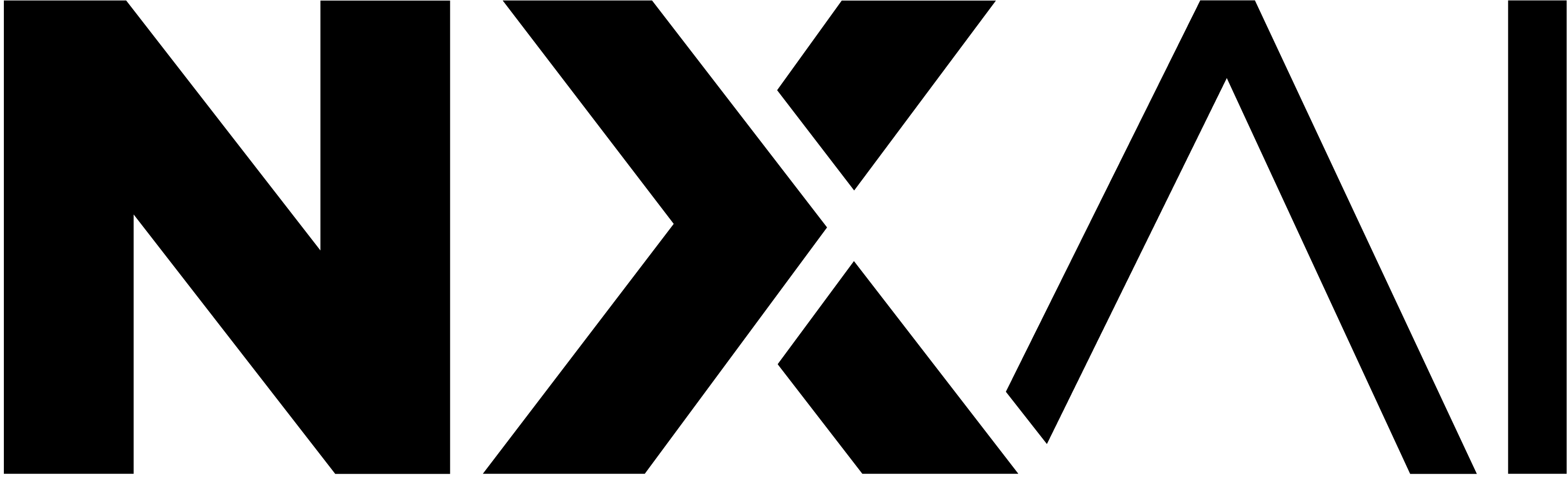}
\nxaisetlogoheight{4mm}
\nxaisetdate{}
\nxaisetfooter{Corresponding authors: Anamaria-Roberta Hartl, hartl@ml.jku.at, Levente Z\'olyomi, levente.zolyomi@nx-ai.com}
\author{Anamaria-Roberta Hartl*\affmark{1}\and Levente Z\'olyomi*\affmark{1,2}\and Michael List\affmark{1}\and G\"unter Klambauer\affmark{1,2}\and Sebastian B\"ock\affmark{2}\and Sepp Hochreiter\affmark{1,2}}
\nxaisetaffiliations{
\affmark{1} Institute for Machine Learning, Johannes Kepler University, Linz 
\affmark{2} NXAI GmbH, Linz 
\affmark{*} Equal contribution.
}

\nxaiabstract{
We propose KronLSTM, a linear attention architecture based on the Kronecker Delta Rule, a novel update rule for associative matrix memories. KronLSTM extends the xLSTM family with a memory-revision mechanism that corrects targeted associations while modifying the memory only along a single direction in matrix space. This minimally invasive memory revision reduces interference with previously stored information compared with delta-rule updates, making it particularly well suited for long-context language modeling. The resulting memory dynamics can be mathematically described as a Markov chain that converges to a steady-state distribution, while the expected norm of the memory matrix remains uniformly bounded throughout the entire input sequence. 
Hence, KronLSTM ensures stable memory dynamics without requiring additional normalization and stabilization mechanisms for the memory state.
Experiments show that KronLSTM is competitive with, and often outperforms, existing linear attention architectures, including models of the (Gated) DeltaNet family, with particularly strong performance on long context benchmarks. These results establish the Kronecker Delta Rule as an effective and analytically tractable foundation for stable associative memory in recurrent linear attention architectures.
Our \href{https://github.com/NX-AI/KronLSTM}{code} and \href{https://huggingface.co/collections/NX-AI/kronlstm}{models} are open-source.
}

\begin{document}

\maketitle

\begin{figure}[H]
  \centering
  \includegraphics[width=0.85\textwidth]{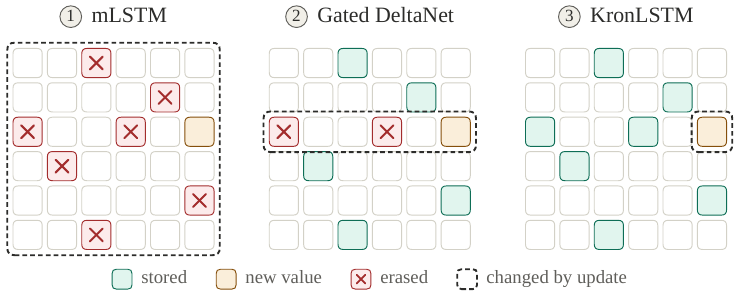}
  \caption{Representation of the memory matrix state revision for mLSTM, Gated DeltaNet, and KronLSTM. 
Interference with stored values: mLSTM affects all values, Gated DeltaNet affects those with overlapping keys, and KronLSTM affects only those with overlapping key and value pairs.}
  \label{fig:main-figure}
\end{figure}

\section{Introduction}
\label{sec:introduction}

\textbf{Large language models (LLMs)} 
predominantly use the Transformer architecture \citep{vaswani17attention,touvron2023llama,bai2023qwen,openai2025gpt-oss},
whose softmax attention mechanism 
has computational costs that grow quadratically with
context length, 
while its key--value cache grows linearly.
Consequently, inference becomes increasingly costly as contexts grow.
To reduce these costs, recent work has explored replacing softmax attention
with linear attention mechanisms \citep{katharopoulos20transformers,Schlag:21},
which can be traced back to the idea of fast weight programmers
\citep{Schmidhuber:92}. 
Linear attention compresses the context
into a fixed-size memory matrix that is updated as new tokens arrive,
yielding linear computational complexity and a constant memory footprint.
Recently, recurrent architectures based on associative memories
\citep{Kohonen:72,Anderson:72}, including xLSTM
\citep{beck24xlstm}, as well as related state-space and memory-based
approaches such as Mamba \citep{gu2024mamba,dao2024mamba2,lahoti2026mamba3},
DeltaNet \citep{Yang:24,yang2025gated,kimiteam2025kda,hatamizadeh2026gdn2}, RWKV-5 and 6 \citep{peng2024eagleArxiv},
and other recent architectures \citep{Sun:24,Liu:25,Behrouz:25titans},
have emerged as competitive alternatives to Transformers.

\begin{wrapfigure}{R}{0.50\linewidth}
\vspace{-\intextsep}
\centering
\includegraphics[width=\linewidth]{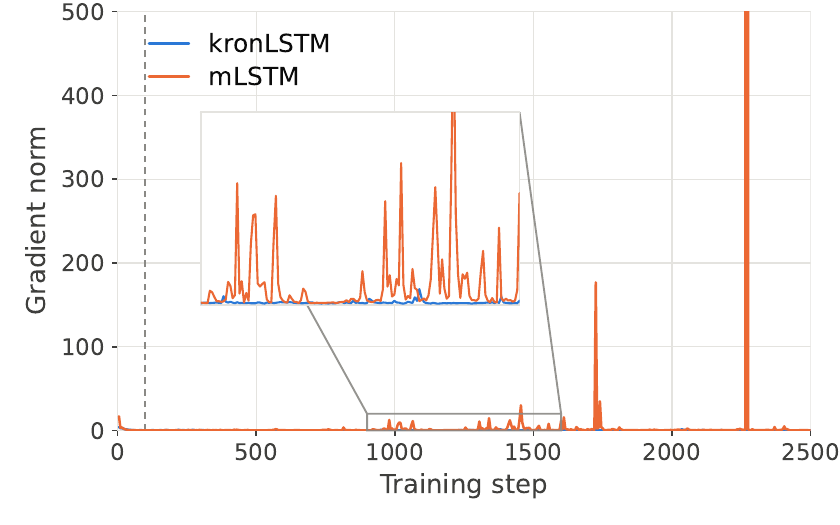}
\caption{Stress test with high learning rates for the same 
 mLSTM and KronLSTM architectures on language modeling. 
 Learning becomes unstable for mLSTM 
as the gradient norm spikes show, while KronLSTM yields stable learning.}
\label{fig:stress}
\end{wrapfigure}

\textbf{A crucial challenge for linear attention is controlling
memory growth.} 
If the memory of recurrent
architectures grows without being bounded, its readout can become
arbitrarily large, leading to numerical instabilities,
particularly during learning. The original LSTM
\citep{Hochreiter:91,hochreiter97lstm} uses a bounded squashing function to
control the exposed cell state, while its forget gate regulates the
accumulation of information in memory. RWKV-5 and RWKV-6
\citep{peng2024eagleArxiv} similarly control memory growth
through a forget gate. Mamba architectures
\citep{gu2024mamba,dao2024mamba2,lahoti2026mamba3} also restrain
memory growth through a forget mechanism derived from a stable
state-space ordinary differential equation.
However, forget-based memory control becomes increasingly restrictive for
long contexts. To retain information over long time spans, forget gates
must remain close to one; otherwise, information from the
beginning of the context is progressively attenuated. xLSTM
\citep{beck24xlstm} addresses this limitation by allowing forget gates close to
one and introducing normalizer and stabilizer states to control the scale
of its matrix memory and readout. However, normalization based on ratios
of potentially large quantities can itself lead to numerical difficulties
and learning instabilities, particularly in larger xLSTM networks
\citep{Beck:25xlstm7b}. 
mLSTM's numerical instabilities are illustrated by the stress test in
Figure~\ref{fig:stress}.

\textbf{The DeltaNet family} \citep{Yang:24,yang2025gated,kimiteam2025kda,hatamizadeh2026gdn2} controls memory growth through
Delta Rule updates, whose dynamics converge to a steady-state distribution
with bounded memory ensuring stable learning dynamics. 
However, revising a stored value can perturb previously
stored values, leading to unwanted memory interference for long sequences.
We introduce KronLSTM, which, like (Gated) DeltaNet, provides bounded memory
through a delta-style update, but uses the Kronecker Delta Rule.
The Kronecker Delta Rule is less invasive: it modifies the memory only along
a single direction in matrix space, reducing interference with
previously stored associations. This targeted memory revision results
in a less noisy gradient flow over
time.

Our contributions are: 
\begin{itemize}[topsep=0pt]
    \item \textbf{(i) Kronecker Delta Rule} is introduced as a new update rule for associative matrix memories, which corrects the scalar associations while having minimal interference with previously stored values. 
    \item \textbf{(ii) KronLSTM} is introduced as a novel linear attention architecture based on the Kronecker Delta Rule equipped with a vector forget gate, a $\beta$ gate, and a target gate.
    \item \textbf{(iii) Stability properties} for the Kronecker Delta Rule are mathematically proved via bounds on the expected memory norm.  These bounds ensure stable learning and proper gradient flow. 
    \item \textbf{(iv) Experiments} empirically validate the strong performance of KronLSTM at long-context language modeling and retrieval tasks. KronLSTM performs strongly across language modeling, recall, and state tracking benchmarks.
\end{itemize}

\section{KronLSTM}
\label{sec:KronLSTM}

In Subsection~\ref{subsec:update-rule}, 
we introduce the Kronecker Delta Rule and analyze its memory dynamics, and \\ in Subsection~\ref{subsec:kronlstm-block}, the Kronecker Delta Rule is embedded in the KronLSTM sequence-mixer block.

\subsection{Kronecker Delta Rule}
\label{subsec:update-rule}

\paragraph{Linear attention update rules for matrix memory.}
Linear attention \citep{katharopoulos20transformers} compresses the context
into a matrix memory $\BS$ that is updated per token and read out per query,
\begin{equation} \label{eq:linatt}
\BS_{t+1} \;=\; \BS_t \;+\; \Bv_t \Bk_t^\top \, ,
\qquad
\Bh_t \;=\; \BS_t \Bq_t \;=\; \sum_{i < t} \Bv_i \,\big(\Bk_i^\top \Bq_t\big) \, ,
\end{equation}
with $\BS_t \in \dR^{d_v \times d_k}$, keys $\Bk_t$ and queries $\Bq_t$ in
$\dR^{d_k}$, and values $\Bv_t$ in $\dR^{d_v}$. Therefore, $\BS$ is an associative
memory \citep{Kohonen:72,Anderson:72} with a Hebbian rule, i.e.,\, 
an outer product update rule.
Modern linear attention architectures like 
xLSTM \citep{beck24xlstm},
Mamba \citep{gu2024mamba,dao2024mamba2,lahoti2026mamba3},
RWKV-5 and 6 \citep{peng2024eagleArxiv}
rely on these associative memories equipped with gates and transformations.
The memory growth is controlled by a forget gate, which
stabilizes the memory dynamics.

In contrast, the Delta Rule
\citep{Widrow:60,Schlag:21} corrects the memory with the 
difference of the retrieval to the current value when writing into the memory
\begin{equation}
\BS_{t+1} \;=\; \BS_t \;-\; \beta_t \big(\BS_t \Bk_t - \Bv_t\big)\,\Bk_t^\top
\;=\; \BS_t \big(\BI - \beta_t\, \Bk_t \Bk_t^\top\big) \;+\; \beta_t\, \Bv_t \Bk_t^\top ,
\label{eq:deltarule}
\end{equation}
with step size $\beta_t$. Each step is gradient descent on
$\tfrac12 \norm{\BS_t \Bk_t - \Bv_t}^2$, and the transition 
$\BI - \beta_t \Bk_t \Bk_t^\top$ is a generalized Householder matrix, which admits chunkwise-parallel training
\citep{Yang:24}. 

The two rules differ in how an association, once stored, is changed.
The Hebbian write of \eqref{eq:linatt} is additive, so the memory grows with
the sequence and an earlier value can be buried by later writes but never
revised. The delta rule \eqref{eq:deltarule} revises: it removes the value
currently stored at $\Bk_t$ before writing $\Bv_t$. The correction, however,
proceeds along $(\BS_t \Bk_t - \Bv_t)\Bk_t^\top$, a direction determined by
the current memory content, so values stored at keys that overlap with
$\Bk_t$ are affected as well. We aim at an error-correcting write 
that acts solely on the pair $(\Bk_t, \Bv_t)$.

\paragraph{Kronecker Delta Rule.}
Let $\BS_t \in \dR^{d_v \times d_k}$ be the memory matrix, $\Bk_t \in \dR^{d_k}$ and
$\Bv_t \in \dR^{d_v}$ the key and value at step $t$, $c_t \in \dR$ a target
and $\beta_t$ a step size. The scalar $\Bv_t^\top \BS_t \Bk_t$ measures how
strongly the memory associates $\Bk_t$ with $\Bv_t$. 
The Kronecker Delta Rule corrects this
scalar towards the target along the outer product of the pair as follows
\begin{equation} \label{eq:KDR}
\BS_{t+1} \;=\; \BS_t \;-\; \beta_t \big(\Bv_t^\top \BS_t \Bk_t \, - \, c_t\big)\,
\Bv_t \Bk_t^\top  \;=\;
\BS_t
\;-\;
\underbrace{\beta_t \big(\Bv_t^\top \BS_t \Bk_t\big)\, \Bv_t \Bk_t^\top}_{\text{erase}}
\;+\;
\underbrace{\beta_t \, c_t\, \Bv_t \Bk_t^\top}_{\text{write}} \, . 
\end{equation}
In place of the $d_v$-dimensional residual $\BS_t \Bk_t - \Bv_t$ of the delta
rule, a scalar is corrected, and the direction $\Bv_t \Bk_t^\top$ of
the correction is determined by the input pair alone without 
utilizing the memory content.
A target $c_t \equiv 1$ stores each association with unit strength,
while $c_t = 0$ erases it. The target $c_t$ may depend on the input, 
i.e.,\ serves as a gate, 
so that what is written becomes input-dependent. 
The right-hand side of \eqref{eq:KDR} separates
the update into an erase and a write term, both scaled by $\beta_t$. Choosing
the target $\tilde{c}_t = c_t / \beta_t$ removes the step size from the write,
so that $\beta_t$ alone determines how much of the stored association is
erased and $c_t$ alone how much is written. KronLSTM uses this form, with both
quantities computed from the input as gates
(Section~\ref{subsec:kronlstm-block}). 

\paragraph{Properties of the Kronecker Delta Rule.} 
{\em (i) Error correction} (see Appendix Property~\ref{prop:Agradient} 
and Property~\ref{prop:Atimecontraction}). An update is a step of gradient descent with step
size $\beta_t$ on $\tfrac12 \big(\Bv_t^\top \BS \Bk_t - c_t\big)^2$.
Re-presenting the same pair gives
$\Bv_t^\top \BS_{t+1} \Bk_t - c_t = \left( \Bv_t^\top \BS_t \Bk_t - c_t \right) (1 - \beta_t \norm{\Ba_t}^2)$, so it
contracts the error iff $0 < \beta_t < 2 / \norm{\Ba_t}^2$, and the Newton step
$\beta_t = 1 / \norm{\Ba_t}^2$ meets the target in a single update; for
$c_t = 0$ this deletes the association. 
{\em (ii) Delta Rule in vectorized form} (see Appendix Property~\ref{prop:Avector}). 
With the Kronecker product $\Ba_t = \Bk_t \otimes \Bv_t = \vecop(\Bv_t \Bk_t^\top)$
and the vectorized memory $\Bs = \vecop(\BS)$, we have
$\Bv_t^\top \BS \Bk_t = \Ba_t^\top \Bs$ and
$\norm{\Ba_t}^2 = \norm{\Bk_t}^2 \norm{\Bv_t}^2$, and \eqref{eq:KDR}
becomes
\begin{equation} \label{eq:vecrule}
\Bs_{t+1} \; = \; \big(\BI - \beta_t\, \Ba_t \Ba_t^\top\big)\, \Bs_t
\; + \; \beta_t\, c_t\, \Ba_t \, ,
\end{equation}
the Widrow--Hoff Delta Rule \citep{Widrow:60} with input $\Ba_t$ and target
$c_t$, which gives the rule its name. The transition factor is a generalized
Householder matrix that does not depend on $c_t$, so the chunkwise-parallel
formulation of DeltaNet \citep{Yang:24} applies with $\Ba_t$ in
place of $\Bk_t$, the target entering only the driving term. For
$\beta_t \norm{\Ba_t}^2 \in (1,2)$ the transition factor has a negative eigenvalue, the
idea behind the state-tracking capabilities of Delta Rule models
\citep{grazzi25unlocking}.

{\em (iii) Interference} (see Appendix Property~\ref{prop:Ainterference}). For any other stored pair $(\Bk_i, \Bv_i)$, the update
changes the association by
\begin{equation} \label{eq:interference}
\Delta\big(\Bv_i^\top \BS \Bk_i\big) \; = \;
-\,\beta_t\, \big(\Bv_t^\top \BS_t \Bk_t - c_t\big)\,
\big(\Bv_i^\top \Bv_t\big)\big(\Bk_t^\top \Bk_i\big) \, ,
\end{equation}
the product of value overlap and key overlap. Orthogonality in either factor
protects a stored association, whereas the Hebbian and the Delta
rule interference is controlled by key overlap alone.

{\em (iv) Capacity} (see Appendix Property~\ref{prop:Atimecapacity}). Each triple $(\Bk_t, \Bv_t, c_t)$ imposes one linear
constraint $\Ba_t^\top \Bs = c_t$ on the $d_k d_v$ entries of $\BS$, so up to
$d_k d_v$ scalar associations can be stored exactly when the features
$\{\Ba_i\}$ are linearly independent, against $d_k$ value vectors for
$\BS \Bk_i = \Bv_i$. In particular, a single key can hold several mutually
orthogonal values, which the read-out $\BS \Bk$ returns as their sum.
See Appendix~\ref{sec:Akdr} for more properties and more details on the above properties. 

\paragraph{Convergence, steady state, and memory norm bounds.}
We will present expected bounds on the memory norm, which
in turn imply stable memory dynamics that obviate
numerical instabilities.
Memories grow with the length of the context and the number of vectors or associations to store; therefore
long contexts are most relevant when analyzing memory stability.
We model a long context as a stream of i.i.d.\ triples
$(\Bk_t,\Bv_t,c_t) \sim p$ with fixed step size $\beta$ and set
\begin{equation} \label{eq:ssetting}
\bar{\BA} := \EXP[\Ba \Ba^\top] \succ \BZe ,
\quad
\bar{\Bb} := \EXP[c\, \Ba] ,
\quad
\Bs^{\rL\rS} := \bar{\BA}^{-1} \bar{\Bb} ,
\quad
\rho := c - \Ba^\top \Bs^{\rL\rS} ,
\quad
\kappa := \beta \max_i \norm{\Ba_i}^2 ,
\end{equation}
so $\BS^{\rL\rS} = \argmin_{\BS} \EXP[(\Bv^\top \BS \Bk - c)^2]$ is the
least-squares memory and $\rho$ the irreducible residual, with
$\EXP[\rho\,\Ba] = \BZe$. Throughout, $0 < \beta\norm{\Ba_i}^2 < 2$, and
$q_\beta := 1 - \beta(2-\kappa)\lambda_{\min}(\bar{\BA})$.

\begin{theorem}[Unbiased mean and steady state]
\label{thm:leastsquares}
The expected iterate obeys the {\em exact} recursion
$\EXP[\Bs_{t+1}] = (\BI - \beta\bar{\BA})\EXP[\Bs_t] + \beta\bar{\Bb}$,
whose unique fixed point is $\Bs^{\rL\rS}$, so
$\norm{\EXP[\BS_t] - \BS^{\rL\rS}}_{\rF} \le r^t
\norm{\BS_0 - \BS^{\rL\rS}}_{\rF}$ with
$r = \max_k \abs{1 - \beta\lambda_k(\bar{\BA})} < 1$. Moreover the
iterates form a time-homogeneous Markov chain on $\dR^{d_k d_v}$ whose
kernel contracts on laws with finite second moment; by the standard
theory of Markov chains on general state spaces \citep{Meyn:93} it has a
{\em unique} stationary distribution $\pi_\beta$, the law of $\BS_t$
converges to it geometrically from any initialization, and the
stationary mean is {\em exactly} $\EXP_{\pi_\beta}[\BS] = \BS^{\rL\rS}$.
\end{theorem}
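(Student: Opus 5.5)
The plan has three parts: the mean recursion, contraction of the Markov kernel, and the stationary mean.

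\textbf{Mean recursion.} Take expectations in the vectorized rule \eqref{eq:vecrule}. Because the triples are i.i.d., $(\Ba_t,c_t)$ is independent of $\Bs_t$, which is a function of $(\Ba_i,c_i)_{i<t}$ only. Hence
\[
\EXP[\Ba_t\Ba_t^\top \Bs_t] = \bar{\BA}\,\EXP[\Bs_t], \qquad \EXP[c_t\Ba_t]=\bar{\Bb}.
\]
This gives the exact affine recursion, with no approximation. Its fixed points solve $\bar{\BA}\Bs=\bar{\Bb}$, so the unique fixed point is $\Bs^{\rL\rS}$, using $\bar{\BA}\succ\BZe$. Subtracting the fixed point gives
\[
\EXP[\Bs_{t+1}]-\Bs^{\rL\rS}=(\BI-\beta\bar{\BA})(\EXP[\Bs_t]-\Bs^{\rL\rS}).
\]
Since $\BI-\beta\bar{\BA}$ is symmetric, its operator norm is $r=\max_k\abs{1-\beta\lambda_k(\bar{\BA})}$. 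We have $\lambda_k(\bar{\BA})\le \EXP\norm{\Ba}^2\le\max_i\norm{\Ba_i}^2<2/\beta$, so each $\beta\lambda_k\in(0,2)$ and therefore $r<1$. Iterating and noting that the Frobenius norm of $\BS$ is the Euclidean norm of $\Bs$ gives the geometric bound.

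\textbf{Markov property and contraction.} Write the step as $\Bs_{t+1}=F(\Bs_t,\xi_t)$ with i.i.d.\ innovations $\xi_t=(\Bk_t,\Bv_t,c_t)$, so the chain is time-homogeneous. For contraction I would use a synchronous coupling. Run two copies from $\Bs,\Bs'$ with the same $\xi$, so the difference $\Bd=\Bs-\Bs'$ evolves as $\Bd\mapsto(\BI-\beta\Ba\Ba^\top)\Bd$. Expanding the square,
\[
\EXP\norm{(\BI-\beta\Ba\Ba^\top)\Bd}^2=\norm{\Bd}^2-\beta\,\EXP\big[(\Ba^\top\Bd)^2(2-\beta\norm{\Ba}^2)\big]\le \norm{\Bd}^2-\beta(2-\kappa)\,\Bd^\top\bar{\BA}\Bd\le q_\beta\norm{\Bd}^2 .
\]
Here $q_\beta\in[0,1)$, since $\kappa<2$ and $\beta(2-\kappa)\lambda_{\min}\le\beta\lambda_{\min}(2-\beta\lambda_{\min})\le 1$. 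So the kernel is a strict contraction with factor $\sqrt{q_\beta}$ in the Wasserstein-2 distance on laws with finite second moment. That space is complete, so Banach's fixed point theorem yields a unique stationary law $\pi_\beta$ and geometric convergence $W_2(\mathcal L(\Bs_t),\pi_\beta)\le q_\beta^{t/2}W_2(\mathcal L(\Bs_0),\pi_\beta)$ from any initialization.

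The main obstacle is the assertion that $\pi_\beta$ is unique among \emph{all} stationary laws, not only those with finite second moment. I would first check that $\pi_\beta$ itself has a finite second moment: apply the same expansion to $\Bs-\Bs^{\rL\rS}$, where the driving term is $\beta\rho\Ba$ and is bounded. The general-state-space theory of \citep{Meyn:93} is then invoked to pass from this contraction to uniqueness and convergence in law. Alternatively, one can argue directly that the coupling distance tends to $0$ almost surely, since the random matrix products contract.

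\textbf{Stationary mean.} Let $\Bs\sim\pi_\beta$, which has a finite first moment. Stationarity means $\mathcal L(F(\Bs,\xi))=\pi_\beta$ with $\xi$ independent of $\Bs$. The same expectation computation as in the first part gives $\EXP_\pi\Bs=(\BI-\beta\bar{\BA})\EXP_\pi\Bs+\beta\bar{\Bb}$. By uniqueness of the fixed point, $\EXP_{\pi_\beta}[\Bs]=\Bs^{\rL\rS}$, exactly and for every admissible $\beta$.
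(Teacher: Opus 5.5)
Your proposal is correct and follows essentially the paper's own route: the exact mean recursion from independence, a synchronous-coupling $\mathcal{W}_2$-contraction closed by Banach's fixed-point theorem on $(\mathcal{P}_2,\mathcal{W}_2)$ (Lemma~\ref{lem:Aw2}), and the stationary mean read off the same affine fixed-point equation. Your factor $q_\beta$ coincides with the paper's $1-\bar{\rho}$, since $\gamma_\beta=\min_i\beta(2-\beta\|\mathbf{a}_i\|^2)=\beta(2-\kappa)$. The only place you go beyond the paper is uniqueness among stationary laws without a second moment; the paper proves uniqueness only within $\mathcal{P}_2$. There your direct almost-sure coupling is the argument to use rather than a vague appeal to Meyn--Tweedie. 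Iterating your one-step bound gives $\mathbb{E}\|\Pi_t\|_F^2\le d_kd_v\,q_\beta^{t}$ for the product $\Pi_t$ of the random factors $\mathbf{I}-\beta\mathbf{a}\mathbf{a}^\top$. Hence $\Pi_t\to\mathbf{0}$ almost surely, and the coupled difference $\Pi_t\mathbf{d}_0$ vanishes for any independent initial difference.
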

Theorem~\ref{thm:leastsquares} combines Appendix Theorem~\ref{thm:Amean}
and Appendix Theorem~\ref{thm:Asteady} which have been proven in
Appendix~\ref{sec:Akdr}.

\begin{theorem}[Memory norm bounds]
\label{thm:bounded}
With $\lambda_{\min}, \lambda_{\max}$ the extreme eigenvalues of
$\bar{\BA}$, the least-squares memory is sandwiched by the
{\em explained} target energy,
\begin{equation} \label{eq:twosided}
\frac{\EXP[c^2] - \EXP[\rho^2]}{\lambda_{\max}(\bar{\BA})}
\;\; \le \;\; \norm{\BS^{\rL\rS}}_{\rF}^2 \;\; \le \;\;
\frac{\EXP[c^2] - \EXP[\rho^2]}{\lambda_{\min}(\bar{\BA})} \, ,
\end{equation}
and all its singular values obey the same upper bound; residual energy
can only shrink it. In the non-expansive regime $\kappa \le 1$, the
steady state and --- started at $\BS_0 = \BZe$ --- every iterate satisfies
\begin{equation} \label{eq:cleanone}
\EXP_{\pi_\beta}\big[\norm{\BS}_{\rF}^2\big]
\; \le \; \frac{\EXP[c^2]}{\lambda_{\min}(\bar{\BA})} \, ,
\qquad
\EXP\big[\norm{\BS_t}_{\rF}^2\big]
\; \le \;
\Big(1 + q_\beta^{\,t/2}\Big)^{2}
\frac{\EXP[c^2]}{\lambda_{\min}(\bar{\BA})}
\; \le \;
\frac{4\, \EXP[c^2]}{\lambda_{\min}(\bar{\BA})} \, ,
\end{equation}
the transient factor decays geometrically from $4$ to $1$.
\end{theorem}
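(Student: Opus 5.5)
The plan has three parts: an energy identity for the least-squares memory, a one-step Lyapunov drift inequality for the second moment, and Minkowski's inequality for the transient.

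\textbf{Sandwich bound.} Write $\Bs^{\rL\rS} = \bar{\BA}^{-1}\bar{\Bb}$ and decompose the target as $c = \Ba^\top\Bs^{\rL\rS} + \rho$. Since $\EXP[\rho\,\Ba]=\BZe$, the cross term vanishes and
\begin{equation*}
\EXP[c^2] = \Bs^{\rL\rS\top}\bar{\BA}\,\Bs^{\rL\rS} + \EXP[\rho^2] .
\end{equation*}
The Rayleigh quotient bounds $\lambda_{\min}\norm{\Bs}^2 \le \Bs^\top\bar{\BA}\Bs \le \lambda_{\max}\norm{\Bs}^2$, together with $\norm{\Bs}_2=\norm{\BS}_{\rF}$, give \eqref{eq:twosided} directly. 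Each singular value satisfies $\sigma_i^2 \le \sum_j\sigma_j^2 = \norm{\BS^{\rL\rS}}_{\rF}^2$, so the singular values obey the same upper bound. The explained energy equals $\EXP[c^2]-\EXP[\rho^2]$, so residual energy can only shrink the bound.

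\textbf{Second-moment drift.} Let $\Be_t = \Bs_t - \Bs^{\rL\rS}$. From \eqref{eq:vecrule}, $\Be_{t+1} = (\BI-\beta\Ba\Ba^\top)\Be_t + \beta\rho\,\Ba$. Expand $\norm{\Be_{t+1}}^2$ and use $(\BI-\beta\Ba\Ba^\top)^2 = \BI - \beta(2-\beta\norm{\Ba}^2)\Ba\Ba^\top \preceq \BI-\beta(2-\kappa)\Ba\Ba^\top$. Conditioning on $\Be_t$, which is independent of the fresh triple, gives the contraction $\EXP\norm{\Be_{t+1}}^2 \le q_\beta\,\EXP\norm{\Be_t}^2 + \text{(noise)}$.

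The cross term $2\beta\EXP[\rho\,\Be_t^\top(\BI-\beta\Ba\Ba^\top)\Ba]$ does not vanish in general, because $\EXP[\rho\,\Ba\Ba^\top\Ba]\neq\BZe$. This is the main obstacle. To avoid it, I would instead run the drift directly on $\Bs_t$ rather than on $\Be_t$:
\begin{equation*}
\norm{\Bs_{t+1}}^2 = \norm{\Bs_t}^2 - \beta(2-\beta\norm{\Ba}^2)(\Ba^\top\Bs_t)^2 + 2\beta c(1-\beta\norm{\Ba}^2)\Ba^\top\Bs_t + \beta^2c^2\norm{\Ba}^2 .
\end{equation*}
With $\kappa\le1$ we have $0\le 1-\beta\norm{\Ba}^2$. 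Set $u=\Ba^\top\Bs_t$, $g=\beta\norm{\Ba}^2\in(0,1]$, and apply Young's inequality to the cross term, $2c(1-g)u \le (1-g)(u^2+c^2)$. Then the per-sample increment is at most
\begin{equation*}
\beta\bigl[-(2-g)u^2+(1-g)u^2+(1-g)c^2+gc^2\bigr] = \beta(c^2-u^2).
\end{equation*}
Taking expectations and using $\EXP[u^2]\ge\lambda_{\min}\EXP\norm{\Bs_t}^2$ yields
\begin{equation*}
\EXP\norm{\Bs_{t+1}}^2 \le (1-\beta\lambda_{\min})\EXP\norm{\Bs_t}^2 + \beta\EXP[c^2].
\end{equation*}
The fixed point of this recursion is $\EXP[c^2]/\lambda_{\min}$. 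Starting from $\BS_0=\BZe$, induction bounds every iterate by $\EXP[c^2]/\lambda_{\min}$, which is already stronger than the stated transient bound. Stationarity under $\pi_\beta$ from Theorem~\ref{thm:leastsquares} gives the steady-state bound.

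\textbf{Transient factor and fallback.} To reproduce the paper's form $(1+q_\beta^{t/2})^2$, I would use the triangle inequality in $L^2$ (Minkowski). Write $\Bs_t$ as a coupled chain started at a stationary sample plus the difference of the two chains. The Householder contraction makes the $L^2$ norm of the difference at most $q_\beta^{t/2}$ times the initial distance, since both chains share the same transition and the noise cancels. The initial distance is $\EXP_{\pi_\beta}[\norm{\BS}_{\rF}^2]^{1/2}$ because $\BS_0=\BZe$. Squaring the Minkowski bound gives the factor $(1+q_\beta^{t/2})^2\le4$. I would present the direct drift argument as the primary proof and the coupling as confirming the stated constant. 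The step to verify most carefully is the Young's-inequality cancellation, which relies on $1-g\ge0$, i.e.\ on $\kappa\le1$.
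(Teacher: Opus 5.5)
Your proposal is correct. The sandwich bound and the singular-value remark match the paper's argument: the energy identity from the normal equations, then the Rayleigh quotient. Your fallback paragraph is essentially the paper's proof of the running bound. Your primary route to the steady-state and running bounds, however, is genuinely different.

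The paper splits $\EXP_{\pi_\beta}[\norm{\BS}_{\rF}^2]=\norm{\BS^{\rL\rS}}_{\rF}^2+\tr\BV_\beta$. It bounds $\tr\BV_\beta$ through the trace of the exact stationarity equation for the covariance, which uses that the stationary mean is exactly $\BS^{\rL\rS}$. It then needs $-\EXP[\rho^2]$ to cancel against $\frac{\kappa}{2-\kappa}\EXP[\rho^2]$ when $\kappa\le 1$. Finally it transfers to the iterates via the $\cW_2$-contraction with rate $q_\beta$ plus Minkowski, which is where $(1+q_\beta^{t/2})^2$ comes from.

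You run the drift on $\norm{\Bs_t}^2$ instead of on the error, which sidesteps the non-vanishing cross term you correctly spotted. Your Young step is in fact the exact identity $\norm{\Bs_{t+1}}^2-\norm{\Bs_t}^2=\beta(c^2-u^2)-\beta(1-g)(u-c)^2$, so $\kappa\le 1$ is precisely what makes the last term nonpositive. This argument is more elementary: it needs neither the stationary mean nor the covariance equation, and for the running bound not even the existence of $\pi_\beta$. It also yields $\EXP[\norm{\BS_t}_{\rF}^2]\le\EXP[c^2]/\lambda_{\min}(\bar{\BA})$ for every $t$ from $\BS_0=\BZe$, so the transient factor in the statement is superfluous.

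The paper's route buys finer information in return, none of which your drift delivers because it hinges on $1-g\ge 0$:
\begin{itemize}
\item the sharper bound with $-\frac{2(1-\kappa)}{2-\kappa}\EXP[\rho^2]$, showing that residual energy lowers the steady-state norm;
\item a variant valid for all $\kappa\in(0,2)$;
\item explicit $\cW_2$ convergence from arbitrary starts.
\end{itemize}

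State two small points explicitly. First, the induction step uses $1-\beta\lambda_{\min}(\bar{\BA})\ge 0$, which holds since $\beta\lambda_{\min}\le\beta\lambda_{\max}\le\kappa\le 1$. Second, cancelling the second moment under stationarity requires $\pi_\beta$ to have finite second moment; alternatively, pass to the limit using $\cW_2$ convergence.
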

Theorem~\ref{thm:bounded} combines Appendix Theorem~\ref{thm:Anorm},
Appendix Theorem~\ref{th:normbound}, and Appendix Theorem~\ref{th:online}
which are proven in
Appendix~\ref{sec:Akdr}.
From the first step onward, the expected memory of the rule is
bounded by a constant determined by the data distribution alone.
More details and theoretical results are
given in Appendix~\ref{sec:Akdr}.

\subsection{The KronLSTM architecture}
\label{subsec:kronlstm-block}

In modern deep learning architectures for sequence modeling,
the matrix update rules are stabilized and enhanced with gating mechanisms.
The mLSTM \citep{beck24xlstm} places the Hebbian write \eqref{eq:linatt}
under a sigmoid forget gate and an exponential input gate, and Gated DeltaNet
\citep{yang2025gated} decays the memory of \eqref{eq:deltarule} by a scalar
before the correction,
\begin{equation}
\BS_{t+1} \;=\; \Rf_t\, \BS_t \;+\; \Ri_t\, \Bv_t \Bk_t^\top \, ,
\qquad
\BS_{t+1} \;=\; \Rf_t\, \BS_t\,\big(\BI - \beta_t\, \Bk_t \Bk_t^\top\big)
\;+\; \beta_t\, \Bv_t \Bk_t^\top \, .
\label{eq:gated-baselines}
\end{equation}
Kimi Delta Attention \citep{kimiteam2025kda} makes the decay (key) channel-wise, and
Gated DeltaNet-2 \citep{hatamizadeh2026gdn2} replaces the single $\beta_t$ by
separate channel-wise erase and write gates. KronLSTM proceeds in the same way: the step
size $\beta_t$ and the target $c_t$ of \eqref{eq:KDR} become gates
computed from the input; a forget gate decays the memory, and the rule is
embedded in the projections, short convolutions, normalizations, and output
gate of a standard sequence-mixer layer. All quantities below are per head,
with key and value dimensions $d_k$ and $d_v$; we suppress the head index and
write $\Bx_t \in \dR^{d}$ for the layer input at step $t$.

\paragraph{KronLSTM update.}
The forget gate is a diagonal matrix
$\BF_t$ with diagonal entries in $(0,1)$. It
acts on the key dimension and hence decays the memory from the right,
$\BS_t \BF_t$, retaining each key direction separately. 
We use target $\tilde{c}_t=c_t / \beta_t$ to decouple erase and write.
The erase term of
\eqref{eq:KDR} is evaluated on the decayed memory $\BS_t \BF_t$. We obtain
\begin{equation} \label{eq:gated-rule}
\BS_{t+1} \;=\;
\underbrace{\BS_t \BF_t}_{\text{forget}}
\;-\;
\underbrace{\beta_t \big(\Bv_t^\top \BS_t \BF_t\, \Bk_t\big)\, \Bv_t \Bk_t^\top}_{\text{erase}}
\;+\;
\underbrace{c_t\, \Bv_t \Bk_t^\top}_{\text{write}} ,
\end{equation}
so that forgetting, erasing, and writing are controlled by three separate
gates, $\BF_t$, $\beta_t$ and $c_t$.
The standard coupled and the unit-target variant $c_t \equiv 1$ are compared
in Appendix~\ref{sec:Afurther-results}, Table \ref{tab:write-rule-ablation}.

\paragraph{Gates.}
The erase gate $\beta_t$ and the write gate $c_t$ are per-head scalars obtained
from linear projections of the input followed by a sigmoid,
\begin{equation}
\beta_t = \sigma\big(\Bw_\beta^\top \Bx_t + b_\beta\big) \in (0,1), \qquad
c_t = \sigma\big(\Bw_{\Rc}^\top \Bx_t + b_{\Rc}\big) \in (0,1),
\label{eq:beta-c-gates}
\end{equation}
both bounded by construction. The diagonal vector $\bff_t$ of the forget gate
$\BF_t = \diag(\bff_t)$ is obtained by raising a sigmoid gate to a learned
positive per-head power, following the parameterization of Mamba
\citep{gu2024mamba} and Gated DeltaNet-2 \citep{hatamizadeh2026gdn2}
\begin{equation}  \label{eq:forget-gate}
  \bff_t \;=\; \Bsi(-(\Bs_{\Rf})_t)^{a}
           \;=\; \big(1 - \Bsi((\Bs_{\Rf})_t)\big)^{a},
  \ \
  (\Bs_{\Rf})_t = \BW_{\Rf} \Bx_t + \Bb_{\Rf} \in \dR^{d_k},
  \ \
  \alpha = \exp(a_{\log}) \in \dR_{>0},
\end{equation}
where $a_{\log}$ is a single learned scalar per head, broadcast over the $d_k$
key dimensions, and the exponent is taken component-wise. The sigmoid sets retention separately for each key direction as a function of
the input. At the same time, the exponent rescales the head's retention horizon by $1/a$,
so that heads can operate at widely different timescales without high
pre-activations. Since $a_{\log}$ is learned in log space, the baseline
retention timescale spans orders of magnitude across heads, while within a
head the per-direction variation comes entirely from the input. The read-out
is gated by a sigmoid output gate
$\bfo_t = \Bsi(\BW_{\Ro} \Bx_t + \Bb_{\Ro}) \in (0,1)^{d_v}$.

\begin{figure}[t]
  \centering
  \includegraphics[width=0.56\linewidth]{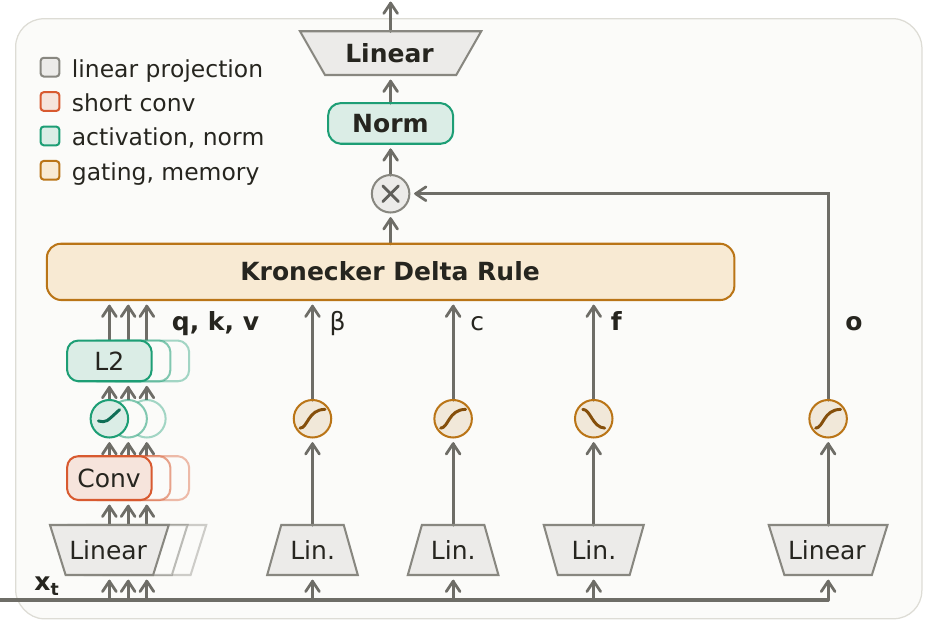}
  \caption{Architecture of the KronLSTM block.}
  \label{fig:block-structure}
\end{figure}

\paragraph{KronLSTM block.}
Queries, keys, and values are linear projections of $\Bx_t$, each followed by a
short causal depthwise convolution and a SiLU activation, and are then
$\ell_2$-normalized with $\Bq_t, \Bk_t \in \dR^{d_k}$ and $\Bv_t \in \dR^{d_v}$. The memory
$\BS_t \in \dR^{d_v \times d_k}$ is updated by \eqref{eq:gated-rule} and read
with the query, $\tilde{\Bh}_t = \BS_t \Bq_t \in \dR^{d_v}$. The read-out passes through
the sigmoid output gate and a per-head RMSNorm, and the concatenated heads are mixed by
the output projection,
\begin{equation} \nonumber
\Bh_t \;=\; \BW_{\text{out}}\, \mathrm{RMSNorm}\big(\bfo_t \odot \tilde{\Bh}_t\big).
\end{equation}\\
With $\ell_2$-normalized keys and values and a sigmoid $\beta_t$, we have
$\beta_t \norm{\Bk_t}^2 \norm{\Bv_t}^2 = \beta_t < 1$, so every update is non-expansive and
the memory operates in the regime of Theorem~\ref{thm:bounded}; the forget
gate, with entries below one, only contracts it further.
The architecture of the KronLSTM block is shown in Figure~\ref{fig:block-structure}. 
It carries no state
besides the memories $\BS_t$ of its heads: there is no normalizer state and no
stabilizer state as in the mLSTM. 
KronLSTM can serve as a drop-in replacement for the attention layer in a Transformer block,
analogous to mLSTM and sLSTM in the xLSTM architecture.

\section{Experimental Validation}
\label{sec:experiments}

We experimentally validate the KronLSTM architecture on different benchmarks
with a focus on language modeling tasks.
We compare KronLSTM with the following recurrent linear attention architectures:
Gated DeltaNet (GDN) \citep{yang2025gated}, Kimi Delta Attention (KDA) \citep{kimiteam2025kda}, Mamba-3 \citep{lahoti2026mamba3}, and Gated DeltaNet-2 (GDN-2) \citep{hatamizadeh2026gdn2}.
Recent deployments of linear attention in frontier models follow two hybrid-setting paradigms: either interleave the linear layers with a small number of softmax attention layers
\citep{nvidia25nemotron3nano,merrill2026olmohybrid,kimiteam2025kda} or use sliding-window attention (SWA) instead  \citep{cabannes2026short, hatamizadeh2026gdn2}. The advantage of the latter is keeping the memory footprint constant with respect to the context length.
Therefore, we evaluate every architecture in two settings: as a pure linear attention model, and as a hybrid with SWA layers. For the hybrid setting, we employ a window size of 2048 at a $5{:}1$ ratio
of linear attention to SWA layers.
All models are pretrained on the FineWeb-Edu dataset using the Llama2 tokenizer \citep{touvron2023llama2}.

We train at a context of 8k, motivated by previous works \citep{Guo:26loglinear, Beck:25tfla, Beck:25scaling}, where the throughput advantage of linear attention compared to the softmax attention emerges from 8k context onward. We evaluate the models within and extrapolating outside their context length (Section~\ref{subsec:retrieval}). The experiments in the following sections show 1.5B parameter models trained on 100B tokens. In Appendix~\ref{sec:Afurther-results}, we provide a set of experiments at 400M model size.

\subsection{Language Modeling}
\label{subsec:language-modeling}

We assess the performance of KronLSTM compared to leading linear attention variants on standard language modeling benchmarks, concretely on five commonsense reasoning tasks (HellaSwag \citep{zellers2019hellaswag}, PIQA \citep{bisk2020piqa}, ARC-easy and
ARC-challenge \citep{Clark2018ThinkYH}, WinoGrande \citep{sakaguchi2021winogrande}) and on LAMBADA \citep{paperno-etal-2016-lambada}, reporting accuracy on all six tasks, their average (Avg-6), and LAMBADA perplexity. Table~\ref{tab:mcqa-1p5b} shows the results of both linear and hybrid models. 
Among all competitors, KronLSTM has the highest average accuracy over the six tasks
and the lowest perplexity on LAMBADA. 
As a linear model, KronLSTM performs best on five out of six tasks 
and improves on average on the strongest competitor, KDA, by 1.6 accuracy points.
The hybrid setting confirms KronLSTM's performance advantage, leading on four of six tasks and achieving the lowest perplexity on LAMBADA.
Hybrid models with sliding-window attention change the averages by at most about one accuracy point. While mLSTM, Mamba-3 and GDN-2 benefit most from adding SWA, GDN and KDA change only minimally; KronLSTM remains robust, leading among all configurations.

\begin{table}[H]
\centering
\small
\setlength{\tabcolsep}{9pt}
\caption{Downstream performance of 1.5B/100B models.
Avg-6 is the mean accuracy over the five commonsense reasoning tasks and LAMBADA.
Best per column in \textbf{bold}, second best \underline{underlined}, separately for linear and hybrid models.}
\begin{tabular}{lcccccc|c|c}
\toprule
Model & Hella & PIQA & ARC-e & ARC-c & Wino & LMB acc & Avg-6 $\uparrow$ & LMB ppl $\downarrow$ \\
\midrule
\multicolumn{9}{c}{\textit{Linear models}} \\
\midrule
    GDN  & 57.00 & \textbf{73.61} & 70.54 & 41.04 & 57.22 & 46.94 & 57.73 & 12.58  \\
    Mamba-3 & 57.30 & 73.01 & 70.29 & 40.19 & \underline{57.46} & 45.64 & 57.31 & 12.95  \\
    KDA  & \underline{57.48} & 73.39 & \underline{71.84} & 41.55 & 57.14 & \underline{47.35} & \underline{58.12} & 12.67  \\
   GDN-2  & 57.45 & 72.96 & 71.13 & \underline{41.89} & 57.30 & 46.92 & 57.94 & \underline{12.54} \\
   mLSTM & 56.05 & 73.07 & 71.30 & 41.30 & 56.99 & 44.38 & 57.18 & 13.87 \\
   \midrule
  KronLSTM  & \textbf{59.19} & \underline{73.45} & \textbf{74.03} & \textbf{43.60} & \textbf{60.06} & \textbf{47.97} & \textbf{59.72} & \textbf{11.72} \\
  \midrule
\multicolumn{9}{c}{\textit{Hybrid models}} \\
\midrule
    GDN  & \underline{58.46} & 72.74 & 70.54 & 42.58 & 56.99 & 46.63 & 57.99 & 12.95  \\
    Mamba-3 & 58.14 & \textbf{74.10} & \underline{72.22} & \underline{43.09} & 57.30 & 46.38 & 58.54 & 13.01 \\
    KDA  & 57.93 & 72.74 & 71.04 & 41.21 & 56.99 & \underline{47.22} & 57.85 & 12.60 \\
    GDN-2  & 58.35 & 72.91 & \textbf{72.64} & 42.75 & \underline{58.88} & 47.18 & \underline{58.79} & \underline{11.85} \\
   mLSTM & 56.63 & 72.80 & 69.74 & 42.58 & 57.06 & 45.29 & 57.35 & 14.38 \\
     \midrule
    KronLSTM  & \textbf{59.80} & \underline{73.78} & 71.93 & \textbf{44.11} & \textbf{58.96} & \textbf{48.77} & \textbf{59.56} & \textbf{11.18} \\
\bottomrule
\end{tabular}

\label{tab:mcqa-1p5b}
\end{table}

\subsection{Retrieval Capabilities}
\label{subsec:retrieval}

On long-context tasks, the Kronecker Delta Rule should improve the most over previous update rules, as interference grows with context length.
We assess the performance of KronLSTM on two benchmarks with different context lengths.

The first benchmark is the real-world recall suite of \citet{arora2024simple} (SWDE, SQuAD, FDA,
TriviaQA, NQ, DROP) for 2k context.
Results are given in Table~\ref{tab:recall-1p5b}.
KronLSTM has the best average accuracy, and it
is the best method in three out of six tasks
in both pure linear attention models and hybrid models. 
In general, hybrid models using SWA gain five to nine accuracy points. 

The second benchmark is the RULER
needle-in-a-haystack (NIAH) tasks \citep{hsieh2024ruler} where 
models are evaluated at 4k, 8k and 16k context length,
with 16k assessing the extrapolation capabilities of the models.
Results are presented in Table~\ref{tab:ruler-k128}.
Among all linear attention models, 
KronLSTM is the best or second-best method across all sequence lengths,
except for task S1.
KronLSTM has the highest accuracy on S3 and MQ tasks
across all sequence lengths.
Among all hybrid models, KronLSTM achieves the highest accuracy 
(averaged over all lengths) on all tasks 
and has the highest accuracy at 16k on each task, confirming its strong long context capabilities.
The improvement of KronLSTM over competitors like Gated DeltaNet
is consistent with 
the interference analysis of Property~\ref{prop:Ainterference}. 
On S2 and S3, whose needles are unique sentences embedded in 
natural language text, KronLSTM outperforms GDN except for S2 at 16k.
For hybrid models, KronLSTM outranks GDN-2 and Mamba-3 by 15.2 accuracy points on S2 at 4k. KronLSTM's margin increases on MQ and MV with sequence length.
On MQ, KronLSTM's lead distance grows from 2.4 to 4.1 accuracy points when 
increasing the context from 4k to 16k.
On MV, KronLSTM is 3.5 accuracy points behind the best method, GDN,
for 4k context, while KronLSTM is the leading method for 
8k and 16k, with a 1.1 and 3.8 accuracy point lead, respectively.
\begin{table}[H]
\centering
\small
\setlength{\tabcolsep}{9pt}
\caption{Results for 1.5B models on the real-world recall suite, given in accuracies for 
2k context length. Best per column in \textbf{bold}, second best \underline{underlined}.
}
\begin{tabular}{lcccccc|c}
\toprule
Model & SWDE & SQD & FDA & TQA & NQ & DROP & Avg-6 $\uparrow$ \\
\midrule
\multicolumn{8}{c}{\textit{Linear models}} \\
\midrule
GDN                        & 28.62 & \underline{37.33} & \underline{14.79} & 60.49 & \textbf{16.47} & \textbf{21.18} & \underline{29.81} \\
Mamba-3               & 27.54 & 35.86 & 11.43 & 59.18 & 15.39 & \underline{21.03} & 28.41 \\
KDA                        & 25.83 & 37.30 & 12.98 & \textbf{62.14} & \underline{16.31} & 20.46 & 29.17 \\
GDN-2                      & \underline{29.70} & 36.29 & 14.43 & 60.07 & 15.90 & 19.93 & 29.39 \\
mLSTM & 27.54 & 36.96 & 11.98 & 59.77 & 15.11 & 21.37 & 28.79 \\
\midrule
KronLSTM  & \textbf{30.87} & \textbf{39.28} & \textbf{15.34} & \underline{61.61} & 15.46 & 19.31 & \textbf{30.31} \\
\midrule
\multicolumn{8}{c}{\textit{Hybrid models}} \\
\midrule
GDN          & 43.20 & 40.58 & 23.59 & 61.91 & 17.90 & \textbf{24.44} & 35.27 \\
Mamba-3 & 42.84 & \textbf{44.67} & \underline{29.22} & \underline{63.92} & 17.26 & \underline{23.86} & \underline{36.96} \\
KDA          & 42.30 & 42.79 & \textbf{30.04} & 61.02 & \underline{18.12} & 21.80 & 36.01 \\
GDN-2        & 43.38 & 42.19 & 28.31 & 60.84 & 17.36 & 23.05 & 35.85 \\
mLSTM & \underline{46.08} & 43.47 & 27.77 & 59.66 &  17.29 & 23.96 & 36.37 \\
\midrule
KronLSTM  & \textbf{46.89} & \underline{43.57} & 27.86 & \textbf{64.51} & \textbf{18.82} & 22.76 & \textbf{37.40} \\
\bottomrule
\end{tabular}
\label{tab:recall-1p5b}
\end{table}

For longer contexts, the memory updates of GDN more frequently interfere with 
already stored content and perturb it, resulting in more degraded performance at 16k.
In contrast, the Kronecker Delta Rule is less invasive and performs better
at longer contexts.
The exception is S1, where KronLSTM underperforms.
S1 is the most artificial task with frequently repeated noise sentences,
which is unrealistic for real-world text.
We attribute the performance drop of the KronLSTM in this task
to the residual accumulation for recurring keys.
Hybrid models with SWA have a considerably smaller S1 accuracy, except KronLSTM.
The hybrid models containing the baseline linear attention layers reach at most 51.8, 26.6, and 12.6 accuracy for 4k, 8k, and 16k
context length, respectively. 
These accuracies closely match the fraction of the context covered by the attention window, indicating that the needle is primarily retrieved when it lies within the window, with Mamba-3 and mLSTM dropping to zero at 16k.
The results suggest limited reliance on the recurrent architecture’s memory. KronLSTM exceeds these accuracies on average across all context lengths and achieves the best performance across all six tasks at 16k.
Section~\ref{subsec:long-context} examines how these methods perform
after context extension.

\subsection{Long Context Extension}
\label{subsec:long-context}

In applications, language models do not extrapolate to longer context but
are trained on them, typically by continued pretraining at increased context length
\citep{fu2024data,gao2024prolong}.
This stage unlocks retrieval far beyond the original pretraining length \citep{fu2024data}.
Linear attention models have no positional encoding to adapt; rather, they must learn to keep more associations over longer time spans. 
We therefore branch the 1.5B linear KronLSTM and GDN-2 runs at 75B tokens and train them for the
remaining 25B tokens at a sequence length of 32k on a long-document mixture
derived from ProLong-64K \citep{gao2024prolong}, so that the context-length-extended
models are token-matched to the 100B models of the previous sections.
GDN-2 serves as a reference model in this experiment since it also decouples erase and write.
Further details on training are described in Appendix~\ref{sec:Alongctx}. Table~\ref{tab:lc32k} shows the commonsense reasoning, LAMBADA, and recall results of the extended models.
Both models benefit from length extension on the reasoning and recall tasks, 
and perplexity improves by about 4 perplexity points.
KronLSTM keeps its lead: it has the higher average accuracy and the lower perplexity,
and it is better on five of the six recall tasks.

\begin{table}[H]
\centering
\caption{Results of 1.5B models on the NIAH tasks for 8k pretraining context and 4k, 8k, and 16k evaluation context. S1--S3: S-NIAH-1/2/3; MK: MK-NIAH-1; MQ: MQ-NIAH; MV: MV-NIAH.
Best per task and length in \textbf{bold}, second best \underline{underlined}, separately for linear and hybrid models. }
\scalebox{0.91}{
\begin{tabular}{p{1.65cm}p{0.3cm}cccccc | cccccc}
\toprule
& & \multicolumn{6}{c}{\textit{Linear models}} & \multicolumn{6}{c}{\textit{Hybrid models}} \\
\cmidrule(lr){3-8} \cmidrule(lr){9-14}
Model & Len. & S1 & S2 & S3 & MK & MQ & MV & S1 & S2 & S3 & MK & MQ & MV \\
\midrule
\multirow{3}{*}{GDN}
 & 4k  & \textbf{100.0} & \underline{96.0} & \underline{54.8} & \underline{28.0} & \underline{44.8} & \textbf{46.8} & \underline{51.8} & 52.6 & 21.8 & 31.0 & \textbf{40.6} & 38.3 \\
 & 8k  & \textbf{100.0} & 45.6 & 13.0 & \underline{24.0} & \underline{21.2} & \underline{23.5} & \underline{26.6} & \underline{31.2} & \textbf{22.2} & 24.6 & \underline{17.8} & 18.8 \\
 & 16k & \underline{99.2} & 17.8 & \underline{5.4} & \textbf{17.8} & \underline{8.6} & \underline{13.8} & \underline{12.6} & \underline{16.8} & 3.6 & \underline{18.4} & \underline{6.9} & 8.8 \\
\addlinespace
\multirow{3}{*}{Mamba-3}
 & 4k  & 92.0 & 68.2 & 26.4 & 26.8 & 20.0 & 18.4 & 51.6 & \underline{56.0} & 30.2 & 34.4 & 19.1 & 21.3 \\
 & 8k  & 59.0 & 27.0 & 5.4 & 20.6 & 4.2 & 3.6 & \underline{26.6} & \underline{31.2} & 15.0 & 24.2 & 10.7 & 22.4 \\
 & 16k & 28.0 & 13.6 & 3.0 & 13.4 & 5.3 & 4.5 & 0.0 & 0.0 & 0.0 & 0.0 & 0.0 & 0.0 \\
\addlinespace
\multirow{3}{*}{KDA}
 & 4k  & \underline{99.8} & 89.4 & 12.0 & 20.8 & 18.4 & 19.1 & \underline{51.8} & 55.8 & 31.4 & 44.8 & 33.0 & \underline{42.5} \\
 & 8k  & 99.0 & 35.4 & 11.6 & 15.8 & 13.3 & 13.3 & \underline{26.6} & \underline{31.2} & 5.0 & 22.8 & 16.3 & \underline{22.8} \\
 & 16k & 93.6 & 14.2 & 1.6 & 11.8 & 3.1 & 5.7 & \underline{12.6} & \underline{16.8} & 4.4 & 17.2 & 3.1 & 11.7 \\
\addlinespace
\multirow{3}{*}{GDN-2}
 & 4k  & \textbf{100.0} & 72.8 & 50.2 & 26.6 & 28.4 & 38.1 & \underline{51.8} & \underline{56.0} & \underline{45.6} & \textbf{51.8} & 33.2 & 40.5 \\
 & 8k  & \underline{99.8} & 44.6 & \underline{15.4} & 17.6 & 13.4 & 21.4 & \underline{26.6} & \underline{31.2} & \underline{18.8} & \underline{25.4} & \textbf{23.2} & 22.7 \\
 & 16k & \textbf{99.6} & 8.2 & 4.2 & 7.2 & 2.7 & 5.1 & 7.4 & 16.6 & \underline{6.0} & 17.6 & 3.0 & \underline{12.8} \\
\addlinespace
    \multirow{3}{*}{mLSTM}
   & 4k  & 92.4 & 89.2 & 1.4 & 27.0 & 22.5 & 19.7 & \underline{51.8} & \underline{56.0} & 30.6 & 49.0 & 21.9 & 38.0 \\
   & 8k  & 41.6 & \underline{49.4} & 2.4 & \textbf{26.2} & 5.7 & 12.4 & \underline{26.6} & \underline{31.2} & 16.8 & 26.8 & 14.9 & \textbf{24.6} \\
   & 16k & 17.8 & \textbf{22.6} & 2.0 & 15.4 & 5.0 & 6.9 & \underline{12.6} & 0.6 & 0.0 & 0.0 & 0.4 & 0.8 \\
  \addlinespace
\midrule
\multirow{3}{*}{KronLSTM}
 & 4k  & 66.2 & \textbf{98.2} & \textbf{73.4} & \textbf{28.2} & \textbf{47.2} & \underline{43.3} & \textbf{64.8} & \textbf{71.2} & \textbf{51.2} & \underline{51.0} & \underline{38.8} & \textbf{44.5} \\
 & 8k  & 29.4 & \textbf{60.4} & \textbf{20.4} & 23.4 & \textbf{24.8} & \textbf{24.6} & \textbf{33.6} & \textbf{35.0} & 17.4 & \textbf{28.6} & \underline{23.1} & 21.8 \\
 & 16k & 11.6 & 16.0 & \textbf{11.8} & \underline{16.0} & \textbf{12.7} & \textbf{17.6} & \textbf{16.2} & \textbf{17.4} & \textbf{7.0} & \textbf{21.0} & \textbf{12.2} & \textbf{13.2} \\
\bottomrule
\end{tabular}}

\label{tab:ruler-k128}
\end{table}

\begin{table}[H]
  \centering
  \small
  \setlength{\tabcolsep}{5pt}
 \caption{Long-context (32k) extension results for GDN-2 and KronLSTM models. Avg-6 is the mean accuracy of the five reasoning tasks plus LAMBADA acc; the remaining columns give accuracy on the real-world recall suite evaluated at 2048 tokens. The best per column is in \textbf{bold}.}
  \begin{tabular}{lcc|cccccc|c}
  \toprule
  Model & Avg-6 $\uparrow$ & LMB ppl $\downarrow$ & SWDE & SQD & FDA & TQA & NQ & DROP & Recall avg $\uparrow$ \\
  \midrule
    \multicolumn{10}{c}{\textit{Linear models}} \\
    \midrule
      GDN-2 & 58.75 & 8.36 & 29.61 & 36.39 & \textbf{19.60} & 60.19 & 14.54 & 19.36 & 29.95 \\
      KronLSTM & \textbf{60.19} & \textbf{7.66} & \textbf{31.68} & \textbf{39.88} & 18.15 & \textbf{60.37} & \textbf{15.36} & \textbf{21.03} & \textbf{31.08} \\
        \midrule
    \multicolumn{10}{c}{\textit{Hybrid models}} \\
    \midrule
        GDN-2 & 59.32 & 8.27 & 40.41 & 40.95 & \textbf{30.40} & 62.50 & 15.24 & 21.75 & 35.21 \\
        KronLSTM & \textbf{60.10} & \textbf{7.64} & \textbf{46.08} & \textbf{44.10} & 23.96 & \textbf{63.39} & \textbf{16.95} & \textbf{22.90} & \textbf{36.23} \\
  \bottomrule
  \end{tabular}
  \label{tab:lc32k}
  \end{table}

Results for context extension at the RULER tasks are given in Figure~\ref{fig:lc-32k} and Table~\ref{tab:ruler-lc32k}.
As expected, KronLSTM shows its advantage by extending the training context length. While both architectures perform strongly on single-needle retrieval at shorter contexts, KronLSTM increasingly outperforms GDN-2 on the more challenging NIAH tasks, ranging from 8k to 32k. At 16k, this advantage is particularly pronounced on S-3 and MV, with KronLSTM leading on most tasks except S1, where GDN-2 remains stronger. At 32k, KronLSTM maintains its lead across all five tasks, although both architectures experience a substantial drop in retrieval accuracy compared to 16k.
The gap between the two models widens with haystack size, as predicted by the interference analysis in Section~\ref{subsec:update-rule}: longer contexts accumulate more interference, and KronLSTM's update rule suffers less of it.

\begin{figure}[ht]
  \centering
  \includegraphics[width=0.8\linewidth]{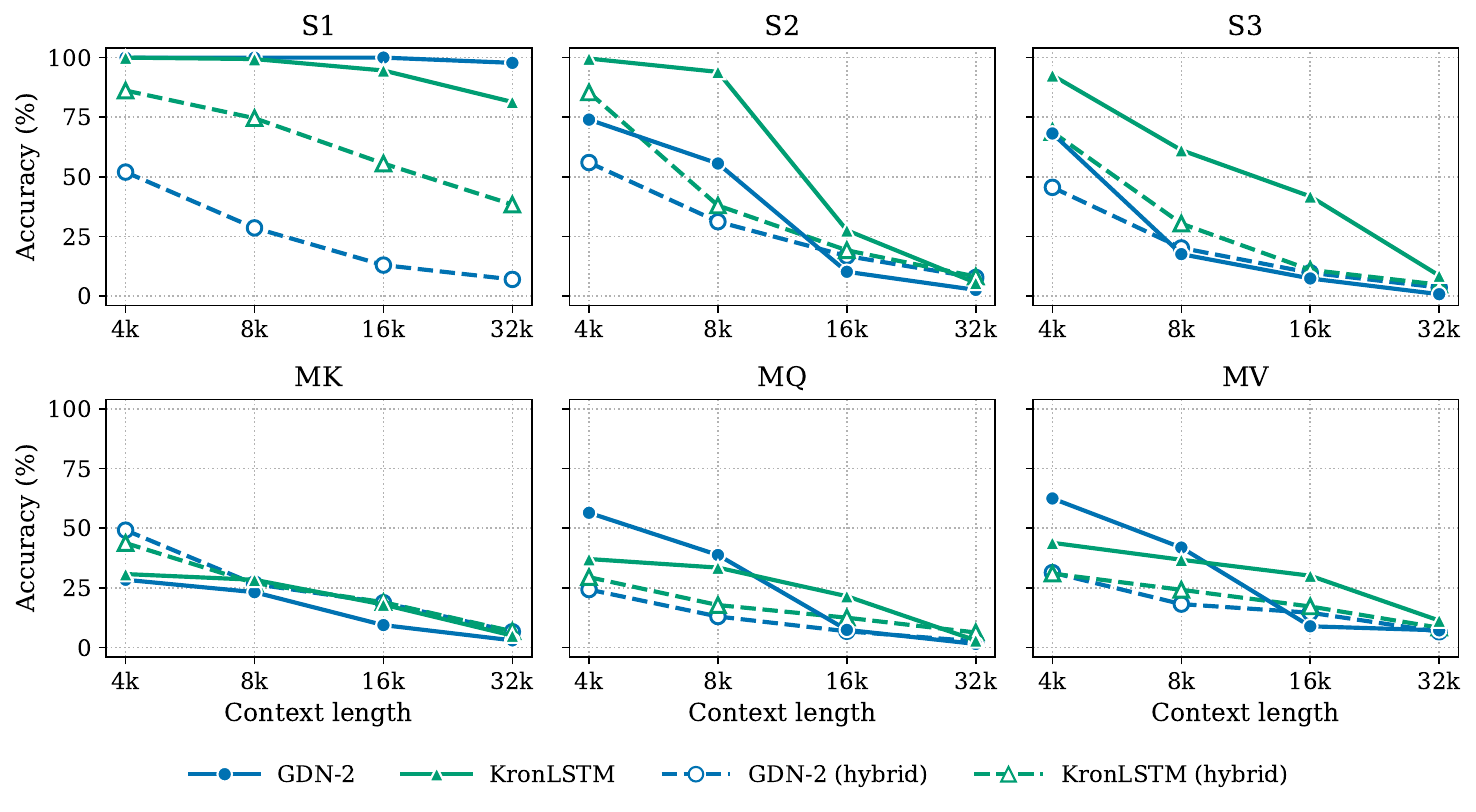}
   \caption{GDN-2 and KronLSTM results on NIAH for 32k context extension for linear and SWA-hybrid models.}
  \label{fig:lc-32k}
\end{figure} 

\subsection{State Tracking Capabilities}
\label{sec:state-tracking}

KronLSTM can perform state tracking by setting $\beta_t = 2\sigma(\Bw_{\beta}^\top \Bx_t + b_\beta) \in (0,2)$, as described in \citet{grazzi25unlocking} for DeltaNet.
We evaluate KronLSTM's state tracking capabilities on
parity, $S_3$ permutation, and modulo arithmetic.
We train up to sequence length 128 and evaluate up to 1024. 
Figure \ref{fig:state-tracking} shows that KronLSTM has similar state tracking capabilities as Gated DeltaNet on the parity and $S_3$ tasks, both ahead of Mamba-3, which wins on modulo arithmetic.

\begin{figure}[h]
  \centering
  \includegraphics[width=0.7\linewidth]{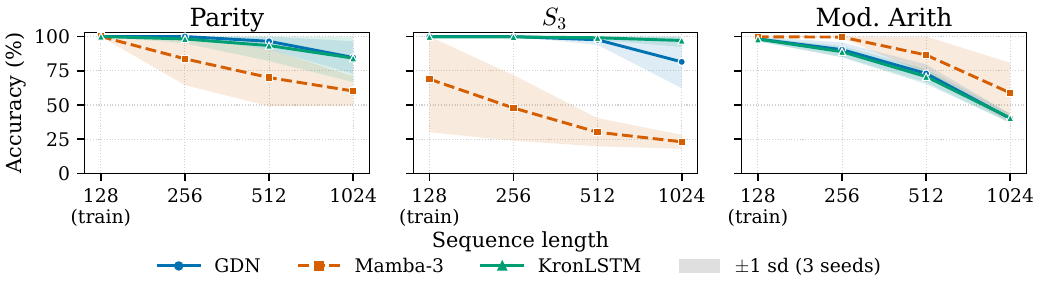}
   \caption{State tracking results for Gated DeltaNet (GDN), Mamba-3, and KronLSTM.}
   \label{fig:state-tracking}
\end{figure} 

\subsection{Training Efficiency}

\makeatletter
\providecommand\captionof[1]{\def\@captype{#1}\caption}
\makeatother

\noindent
\begin{minipage}[t]{0.52\linewidth}
\vspace{0pt}
 Figure~\ref{fig:throughput} reports end-to-end training throughput of the purely linear 1.5B models on a single NVIDIA H100 (80\,GB), following \citet{hatamizadeh2026gdn2}: each step processes 16,384 tokens, split from 2K$\times$8 to 16K$\times$1 (sequence length $\times$ batch size), and all models share the same training stack. KronLSTM reaches 36.8--44.5K tokens/s, 11--18\% more than GDN-2 at every shape and within 4--8\% of KDA up to 8K$\times$2. GDN, whose scalar decay admits the cheapest kernels, remains the fastest. Throughput is nearly flat while a step holds at least two sequences; at 16K$\times$1 all models slow down, since the inter-chunk recurrence parallelizes only over sequences and heads.
\end{minipage}\hfill
\begin{minipage}[t]{0.45\linewidth}
\vspace{0pt}
\centering
\includegraphics[width=\linewidth]{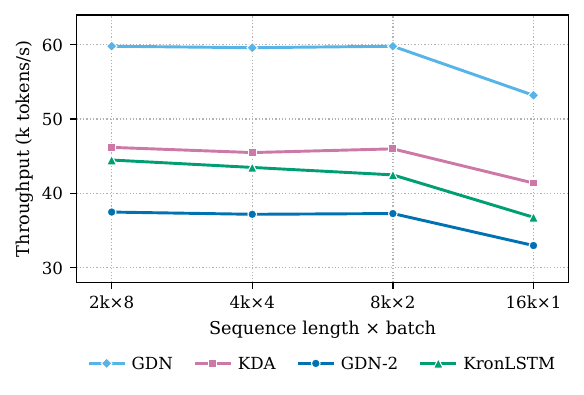}
\setlength{\abovecaptionskip}{2pt}
\setlength{\belowcaptionskip}{0pt}
\captionof{figure}{Training throughput of KronLSTM against baselines on a single H100-80GB.}
\label{fig:throughput}
\end{minipage}

\section{Conclusion}

We have introduced the Kronecker Delta Rule, an update rule for associative matrix memories that corrects a scalar association along a single direction, and KronLSTM, which embeds this rule in a gated linear attention mixer.
We proved that the memory dynamics converge to a unique steady state and that the expected memory norm is bounded at every step.
KronLSTM has the best average accuracy on commonsense reasoning and real-world recall, both as a linear model and as a hybrid.
Averaged over the context lengths, it is the best or second-best linear model on every natural-text RULER task, 
and on S1 it is the only one that retrieves needles beyond the sliding attention window in hybrid models. After extending the context to 32k, KronLSTM outperforms GDN-2 on all natural-text retrieval tasks from 16k onward.
Future work includes scaling up KronLSTM and training hybrid models containing full softmax attention layers.

\subsection*{AI use statement}

The Kronecker Delta Rule and the KronLSTM architecture originated with the authors without AI assistance. LLMs were used to assist in developing the theoretical results and in implementing the method. Claude Fable 5 and Claude Fable 5.1 assisted in deriving the properties of the Kronecker Delta Rule and the convergence and memory norm bounds of Section~\ref{subsec:update-rule} and Appendix~\ref{sec:Akdr}; every statement and proof was checked independently by the authors, who take full responsibility for their correctness. Claude Fable 5 was further used to examine candidate formulations of the Kronecker Delta Rule, in particular the coupled, decoupled, and unit-target write variants, for their viability; the choice of the form in \eqref{eq:gated-rule} based on was made by the authors on the basis of ablation studies such as in Appendix~\ref{sec:Afurther-results}. Fable 5.1 was used to assist in deriving the chunkwise-parallel formulation of KronLSTM. The formulation was subsequently checked thoroughly by our team. Claude Fable 5 and Claude Fable 5.1 were used to optimize the chunkwise-parallel and recurrent kernels of KronLSTM; the resulting kernels were numerically verified against naive reference implementations written by the authors. We did not use generative AI tools to interpret results or to design experiments. We have not used generative AI tools to generate synthetic data or to propose hypotheses, and translation, dataset cleaning and reformatting, and qualitative or thematic data analysis do not apply to this work. Additionally, we used Claude Fable 5 and Claude Opus 5 to draft parts of the manuscript and to edit it for readability. All AI-drafted text was revised by the authors. We have reviewed all AI-assisted work and take responsibility for the final content of this work, including text, claims, and artifacts produced with the aid of generative AI.

\subsubsection*{Acknowledgments}

This work was supported by the Austrian Science Fund (FWF) 10.55776/COE12 and the European Union’s Horizon Europe research and innovation program under grant agreement number 101214398 (ELLIOT).
The ELLIS Unit Linz, the LIT AI Lab, and the Institute for Machine Learning are supported by the Federal State of Upper Austria.
We acknowledge the EuroHPC Joint Undertaking for awarding us access to Leonardo at CINECA, Italy, to Marenostrum 5 at the Barcelona Supercomputing Centre (BSC), Spain, and to JUPITER at J\"ulich Supercomputing Centre (JSC), Germany.
We would like to thank Stephanie Holly for checking the mathematical proofs of \ref{checked} and B\'{a}lint L\'{a}szl\'{o} Szarvas for proofreading and making suggestions to this manuscript.

\bibliography{references}
\bibliographystyle{plainnat}

\newpage 

\newlength{\auxparskip}
\setlength{\auxparskip}{\parskip} 
\setlength{\parskip}{0pt}
\tableofcontents

\newpage 

\begin{appendices}

\renewcommand{\thesection}{A\arabic{section}}
\renewcommand{\thefigure}{A\arabic{figure}}
\renewcommand{\thetable}{A\arabic{table}}
\renewcommand{\theequation}{A\arabic{equation}}

\setcounter{section}{0}
\setcounter{figure}{0}
\setcounter{table}{0}
\setcounter{equation}{0}

\setcounter{theoremA}{0}
\setcounter{definitionA}{0}
\setcounter{corollaryA}{0}
\setcounter{propositionA}{0}
\setcounter{lemmaA}{0}
\setcounter{propertyA}{0}

\section{Notation Table}
\label{sec:Anotation}

\begin{table}[H]
\centering
\setlength{\tabcolsep}{25pt}
\scriptsize{%
\begin{tabular}{l c l}
\toprule
\textbf{Definition} & \textbf{Symbol} & \textbf{Type} \\
\midrule
\multicolumn{3}{l}{\textbf{Scalars and dimensions}} \\
time step (token index) & $t$ & $\dN$ \\
key and query dimension per head & $d_k$ & $\dN$ \\
value dimension per head & $d_v$ & $\dN$ \\
layer input dimension  & $d$ & $\dN$ \\
number of stored associations (App.) & $n$ & $\dN$ \\
step size / erase gate & $\beta_t$ & $(0,1)$ \\
fixed step size (theory) & $\beta$ & $\dR_{>0}$ \\
target / write gate & $c_t$
  & $(0,1)$ in Sec.~\ref{subsec:kronlstm-block}, $\dR$ in Sec.~\ref{subsec:update-rule} \\
decoupled target & $\tilde{c}_t = c_t / \beta_t$ & $\dR_{>0}$ (unbounded) \\
normalized step size & $\kappa = \beta \max_i \norm{\Ba_i}^2$ & $(0,2)$ \\
irreducible residual & $\rho = c - \Ba^\top \Bs^{\rL\rS}$ & $\dR$ \\
contraction factor of the mean & $r$ & $[0,1)$ \\
second-moment contraction factor
  & $q_\beta = 1 - \beta(2-\kappa)\lambda_{\min}(\bar{\BA})$ & $[0,1)$ \\
extreme eigenvalues of $\bar{\BA}$ & $\lambda_{\min}, \lambda_{\max}$ & $\dR_{>0}$ \\
forget-gate exponent 
  & $\alpha = \exp(a_{\log})$ & $\dR_{>0}$ \\
scalar read-out 
  & $h_t = \Bv_t^\top \BS_t \Bk_t$ & $\dR$ \\
residual 
& $\psi_t = y_t - c_t$ & $\dR$ \\
relaxation 
  & $\omega = \beta \norm{\Bk_t}^2 \norm{\Bv_t}^2$ & $(0,2)$ \\
driving-noise energy 
  & $\sigma^2 = \EXP\big[\rho^2 \norm{\Ba}^2\big]$ & $\dR_{\ge 0}$ \\
\midrule
\multicolumn{3}{l}{\textbf{Vectors}} \\
layer input & $\Bx_t$ & $\dR^{d}$ \\
query, key & $\Bq_t, \Bk_t$ & $\dR^{d_k}$ \\
value & $\Bv_t$ & $\dR^{d_v}$ \\
Kronecker feature
  & $\Ba_t = \Bk_t \otimes \Bv_t = \vecop(\Bv_t \Bk_t^\top)$ & $\dR^{d_k d_v}$ \\
vectorized memory & $\Bs_t = \vecop(\BS_t)$ & $\dR^{d_k d_v}$ \\
least-squares memory (vectorized)
  & $\Bs^{\rL\rS} = \bar{\BA}^{-1} \bar{\Bb}$ & $\dR^{d_k d_v}$ \\
cross moment & $\bar{\Bb} = \EXP[c\, \Ba]$ & $\dR^{d_k d_v}$ \\
forget-gate vector & $\bff_t$ & $(0,1)^{d_k}$ \\
forget-gate pre-activation & $(\Bs_{\Rf})_t$ & $\dR^{d_k}$ \\
output gate & $\bfo_t$ & $(0,1)^{d_v}$ \\
memory read-out & $\Bh_t = \BS_t \Bq_t$ & $\dR^{d_v}$ \\
block output & $\By_t$ & $\dR^{d}$ \\
\midrule
\multicolumn{3}{l}{\textbf{Matrices}} \\
memory matrix & $\BS_t$ & $\dR^{d_v \times d_k}$ \\
forget gate & $\BF_t = \diag(\bff_t)$ & $\dR^{d_k \times d_k}$ \\
feature second moment & $\bar{\BA} = \EXP[\Ba \Ba^\top]$
  & $\dR^{d_k d_v \times d_k d_v}$, $\succ \BZe$ \\
least-squares memory & $\BS^{\rL\rS}$ & $\dR^{d_v \times d_k}$ \\
stationary covariance (App.) & $\BV_\beta$ & $\dR^{d_k d_v \times d_k d_v}$ \\
transition factor (App.) & $\BM_i = \BI - \beta \Ba_i \Ba_i^\top$
  & $\dR^{d_k d_v \times d_k d_v}$ \\
\midrule
\multicolumn{3}{l}{\textbf{Parameters}} \\
erase-gate weights and bias & $\Bw_\beta, b_\beta$ & $\dR^{d}, \dR$ \\
write-gate weights and bias & $\Bw_{\Rc}, b_{\Rc}$ & $\dR^{d}, \dR$ \\
forget-gate weights and bias & $\BW_{\Rf}, \Bb_{\Rf}$
  & $\dR^{d_k \times d}, \dR^{d_k}$ \\
output-gate weights and bias & $\BW_{\Ro}, \Bb_{\Ro}$
  & $\dR^{d_v \times d}, \dR^{d_v}$ \\
output projection & $\BW_{\text{out}}$ & $\dR^{d \times H d_v}$ ($H$ undefined) \\
log forget exponent & $a_{\log}$ & $\dR$ (per head) \\
\midrule
\multicolumn{3}{l}{\textbf{Distributions and operators}} \\
distribution of triples $(\Bk, \Bv, c)$ & $p$ & distribution \\
stationary distribution & $\pi_\beta$ & law on $\dR^{d_k d_v}$ \\
logistic sigmoid & $\sigma(\cdot)$ & $\dR \to (0,1)$ \\
Kronecker product, vectorization & $\otimes$, $\vecop$ & \\
element-wise product & $\odot$ & \\
\bottomrule
\end{tabular}
}
\caption{Notation used throughout the paper and the appendix.}
\label{tab:notation}
\end{table}

\newpage

\section{Related Work}
\label{sec:Arelated-work}

To the best of our knowledge, no existing work uses {\em exactly} this update that is a scalar
bilinear read-out $\Bv^\top \BS \Bk$ regressed onto the constant target $1$. The established
delta-rule linear attention family uses the vector read-out $\BS \Bk \to \Bv$. The rule studied
here is therefore the {\em bilinear-readout, scalar-target} member of the
delta-rule. The constraints are
$\abra{\Bv \Bk^\top, \BS}_{\rF} = 1$ rather than $\BS \Bk = \Bv$. 

\subsection{Long Short-Term Memory}

KronLSTM belongs to the xLSTM \citep{beck24xlstm}
family that revisits LSTM \citep{Hochreiter:91,hochreiter97lstm}
for the era of large language models (LLMs) with two modifications:
exponential gating and a redesigned memory.
mLSTM replaces the scalar cell state by a matrix memory
and drops memory mixing, making the recurrence fully
parallelizable over the sequence.
mLSTM enables recurrent LLMs with fast, constant-memory inference
at scale \citep{Beck:25tfla}, and
\citet{Beck:25scaling} establish scaling behavior competitive with
Transformers at linear time complexity. 
xLSTM has also been applied
to language modeling \citep{Beck:25xlstm7b},
vision \citep{alkin2025vision}, biological and chemical sequences
\citep{Schmidinger:25}, and robotics \citep{schmied2025large}.

The matrix memory of mLSTM can be viewed as a classical
correlation-based associative memory \citep{Kohonen:72,Anderson:72}
embedded within LSTM-style gating.
In addition to providing an efficient mechanism for storing and
retrieving associations, the exponential gates allow mLSTM to
revise storage decisions. This distinguishes mLSTM from purely
accumulative linear attention mechanisms.
KronLSTM follows this line of research by retaining the matrix memory
and the ability to revise storage decisions, while introducing a
different memory-update rule.

A central challenge of recurrent architectures with growing memory is
maintaining stable memory dynamics over long sequences.
The original LSTM uses a bounded squashing function for the exposed
cell state, while xLSTM introduces a normalizer state and a stabilizer
state to control the scale of its matrix memory and retrievals.
However, normalization based on ratios of large quantities can lead
to numerical and optimization difficulties, particularly in large
networks \citep{Beck:25xlstm7b}. MC-LSTM \citep{hoedt2021mc} instead redistributes its cell state with
column-stochastic matrices, which conserve mass and never amplify the state.
These considerations motivate alternative matrix-memory update rules
that retain memory revision while providing stable memory dynamics.
The Kronecker Delta Rule addresses this issue and
provides convergence to a steady-state distribution with a
norm-bounded memory in expectation. Furthermore, KronLSTM yields advantageous
gradient flow over time compared with mLSTM
\citep{Hochreiter:00book}.

\subsection{The delta-rule linear attention family}

The direct ancestor is the fast weight programmer perspective of
\citet{Schlag:21}, who introduced the delta-rule write
$\BS_{t+1} = \BS_t - \beta\,(\BS_t \Bk_t - \Bv_t)\,\Bk_t^\top$ into linear attention
\citep{katharopoulos20transformers}, building on the fast-weight programmers of the
early 1990s \citep{Schmidhuber:92}. \citet{Yang:24} (DeltaNet)
made this rule hardware-efficient by exploiting the WY representation of the resulting
products of generalized Householder matrices. This is exactly the product structure the bilinear
rule inherits, 
with the Kronecker product $\Bk_t \otimes \Bv_t$ taking the role of $\Bk_t$. Subsequent extensions
include Gated DeltaNet \citep{yang2025gated}, which adds a scalar forget gate
$\alpha_t \BS_t$; RWKV-7 \citep{Peng:25}, which employs a generalized delta rule with
a decoupled in-context learning rate; and DeltaProduct \citep{Siems:25arXiv},
which applies multiple Householder factors per token to improve state tracking. This is directly
relevant to the negative-eigenvalue regime $\beta \norm{\Ba_t}^2 \in (1,2)$ of the
transition factors $\BI - \beta \Ba_t \Ba_t^\top$, which allows state-tracking.

\subsection{The online-learning perspective}

Longhorn \citep{Liu:25} derives state-space recurrences explicitly as
closed-form solutions of per-token online learning objectives; the bilinear rule fits
this template with the objective $\tfrac12\,(\Bv^\top \BS \Bk - 1)^2$. Test-time training (TTT)
layers \citep{Sun:24} and Titans \citep{Behrouz:25titans} treat the memory as a
model trained by gradient descent at inference time. Most relevant as a unifying
framework is the Miras taxonomy of \citet{Behrouz:25miras}, which classifies memory
architectures by their attentional-bias objective, retention mechanism, and optimizer.
The new bilinear rule occupies a new place of that taxonomy: it is a {\em bilinear} attentional
bias with a constant scalar target. A recent related observation is that test-time training
with key--value binding is secretly linear attention \citep{Liu:26tttarXiv}. The
mesa-optimization line \citep{Oswald:23} provides the complementary
least-squares-solver perspective on such layers.

\subsection{Classical update and learning rules}

When vectorized, the rule is the Widrow--Hoff LMS\,/\,delta rule \citep{Widrow:60}
applied to the Kronecker product input  $\Ba_t = \Bk_t \otimes \Bv_t$ with target $1$. 
The stability range
$\beta \norm{\Ba_t}^2 \in (0,2)$ is the classical LMS step-size condition. The
projection interpretation is Kaczmarz's method \citep{Kaczmarz:37}: with the Newton step
$\beta^\star = 1/(\norm{\Bk}^2 \, \norm{\Bv}^2)$, the update is the orthogonal
projection onto the hyperplane $\{\BS : \abra{\Bv \Bk^\top, \BS}_{\rF} = 1\}$, and general
$\beta$ gives relaxed projections. The modern convergence theory, including the linear
rate under random sampling of constraints, is due to \citet{Strohmer:09}. The
connections between Kaczmarz iterations, stochastic gradient descent, and learning are
surveyed by \citet{Chen:18}. Fej\'er monotonicity and relaxed projection
methods are standard in the projection-methods literature \citep{Cegielski:12}.

\subsection{Relation to cognitive and neuroscience}

Because $\BS$ scores key--value pairs rather than retrieving values, the closest relative
in the cognitive and neuroscience literature is familiarity (recognition) memory: the
anti-Hebbian model of familiarity discrimination in perirhinal cortex of
\citet{Bogacz:01} shows that scoring memories attain a far higher capacity than
retrieval memories, which is the same $d^2$-versus-$d$ capacity separation derived for the
bilinear rule. The classical retrieval-side background is given by correlation matrix
memories \citep{Kohonen:72,Anderson:72} for the Hebbian write, and by
delta-rule\,/\,pseudoinverse associative memories for capacity under Widrow--Hoff
learning \citep{Prados:89}.

\section{Kronecker Delta Rule: Setting, Name, Properties}
\label{sec:Akdr}

\subsection{Setup and notation.}
We use the notation of the main paper (Section~\ref{subsec:update-rule}), specialized
to $d_k = d_v =: d$ as in our experiments (where $d_{qk}=d_v$); the general
$d_v \times d_k$ case is identical with $d^2$ replaced by $d_k d_v$.
Let $\BS_t \in \dR^{d\times d}$ be the memory matrix, $\Bk_t \in \dR^{d}$ the key and
$\Bv_t \in \dR^{d}$ the value at step $t$, $c_t \in \dR$ a scalar target, and $\beta_t$
a step size. The scalar $\Bv_t^\top \BS_t \Bk_t$ measures how strongly the memory
associates $\Bk_t$ with $\Bv_t$; the Kronecker Delta Rule corrects it towards the
target $c_t$ along the outer product $\Bv_t \Bk_t^\top$,
\begin{equation} \label{eq:ATimeRule}
\boxed{\; \BS_{t+1} \;=\; \BS_t \;-\; \beta_t \big(\Bv_t^\top \BS_t \Bk_t \, - \, c_t\big)\,
  \Bv_t \Bk_t^\top \, , \;}
\end{equation}
which is Eq.~\eqref{eq:KDR} of the main paper. For the analysis, we abbreviate the
scalar read-out and its residual,
\begin{align}
y_t \; &:= \; \Bv_t^\top \BS_t \Bk_t && \text{(scalar read-out)}, \\
\psi_t \; &:= \; y_t - c_t && \text{(residual)},
\end{align}
so that the update reads compactly $\BS_{t+1} = \BS_t - \beta_t\, r_t\, \Bv_t \Bk_t^\top$.
The target sequence $c_t$ subsumes different variants:
$c_t \equiv 1$ is the basic Kronecker Delta Rule with target $1$,
$c_t \equiv c > 0$ the Constant-Target Kronecker Delta Rule, $c_t = \norm{\Bv_t}^2$ the
Norm-Target Kronecker Delta Rule, and $c_t = \norm{\Bk_t} \norm{\Bv_t}$ the
Double-Norm-Target Kronecker Delta Rule. The target $c_t$ may also be produced by a
gate or a network, so that {\em what} is written becomes itself input-dependent.
Throughout we use $\Bv^\top \BS \Bk = \TR\!\big(\BS\, \Bk \Bv^\top\big) = \abra{\Bv \Bk^\top,\BS}_{\rF}$.

\subsection{Name Kronecker Delta Rule}

We use the Kronecker product
\begin{equation}
\Ba_t \; = \; \Bk_t\otimes\Bv_t
\end{equation}
in a vectorized update rule.
$\Ba_t$ is a vector of length $d^2$, where
the vector $\Bv_t$ is multiplied by each component of
$\Bk_t$ and then stacked to one vector.
However, we have $\Bv_t \Bk_t^\top = \Bv_t\otimes\Bk_t^\top$.
Let $\Bs = \vecop(\BS) \in \dR^{d^2}$ and $\Ba_t = \Bk_t \otimes \Bv_t
\in \dR^{d^2}$,
so that $\Bv_t^\top \BS \Bk_t = \Ba_t^\top \Bs$. The rule becomes
\begin{equation}
\Bs_{t+1} \; = \; \big(\BI - \beta\, \Ba_t \Ba_t^\top\big)\, \Bs_t +
\beta\, c_t \, \Ba_t,
\end{equation}
which is exactly {\em Delta Rule with Kronecker product} input $\Ba_t$
and target $c_t$.

\subsection{Properties}
\begin{propertyA}[Memory Space is Hilbert Space and Kronecker Update Rule is a Projection onto a Hyperplane] 
 \label{prop:AHilbert} 
The memory space $\dR^{d \times d}$ is a Hilbert space
endowed with the Frobenius inner product
\[
\abra{\BX, \BY}_{\rF} \, = \, \TR \left( \BX^\top
  \BY \right) \, ,
\]
where $\BX,\BY \in \dR^{d \times d}$.
Hence the target condition
\[
\Bv_t^\top \BS \Bk_t \, = \, c_t
\]
is a {\em hyperplane} in this Hilbert space.
The update
\[
\BS_{t+1} \, = \, \BS_t - \beta \left( \Bv_t^\top \BS_t \Bk_t - c_t
\right) \Bv_t \Bk_t^\top
\]
is a {\em orthogonal projection} onto it, if
\[
\beta  \, = \, \frac{1}{\norm{\Bk_t}_2^2 \norm{\Bv_t}_2^2} \, .
\]
The update
\[
\BS_{t+1} \, = \, \BS_t - \beta \left( \Bv_t^\top \BS_t \Bk_t - c_t
\right) \Bv_t \Bk_t^\top
\]
is a {\em relaxed orthogonal projection} onto it, with relaxation
parameter
\[
\omega_t \, = \, \beta \norm{\Bk_t}_2^2 \norm{\Bv_t}_2^2 \, .
\]
\end{propertyA}

\begin{propertyA}[Exact online gradient descent: a bilinear delta rule
  with target $c_t$]
 \label{prop:Agradient} 
  The rule \eqref{eq:ATimeRule} is stochastic gradient descent
  with step size $\beta$ on the instantaneous loss
\begin{equation}
\rL_t(\BS) = \tfrac{1}{2}\,\big(\Bv_t^\top \BS \Bk_t - c_t\big)^2,
\qquad
\nabla_S\, \rL_t(\BS) = \big(\Bv_t^\top \BS \Bk_t - c_t\big)\, \Bv_t \Bk_t^\top .
\end{equation}
It differs from DeltaNet, which performs SGD on $\tfrac12 \norm{ \BS
  \Bk_t - \Bv_t}^2$
with update $\BS_{t+1} = \BS_t - \beta (\BS_t \Bk_t - \Bv_t)
\Bk_t^\top$:
there the target is the {\em vector} $\Bv_t$ (imposing $d$ constraints
per pair),
whereas here the target is the {\em scalar} $c_t$ and
the pair $(\Bk_t, \Bv_t)$ jointly forms the input,
imposing a single bilinear constraint $\Bv_t^\top \BS \Bk_t = c_t$ per pair.
\end{propertyA}

\begin{propertyA}[Superposition: the memory is linear in the target sequence]
\label{prop:Atimesuperpos}
For a fixed pair sequence $(\Bk_t, \Bv_t)_{t=1}^{T}$ and step size
$\beta$, the map from the target sequence
$\Bc = (c_1, \dots, c_T)$ to the final memory $\BS_T(\Bc)$ with
$\BS_0 = \BZe$ is {\em linear}:
\begin{equation}
\BS_T(\Bc + \Bc') \; = \; \BS_T(\Bc) \, + \, \BS_T(\Bc'),
\qquad
\BS_T(\alpha\, \Bc) \; = \; \alpha \ \BS_T(\Bc) \, ,
\end{equation}
explicitly, with $\Ba_t = \Bk_t \otimes \Bv_t$,
\begin{equation}
\vecop\big(\BS_T(\Bc)\big) \; = \; \beta \sum_{t=1}^{T} c_t
\Bigg[\prod_{\tau = t+1}^{T} \big(\BI - \beta \Ba_\tau
\Ba_\tau^\top\big)\Bigg] \Ba_t \, .
\end{equation}
Memories written with different target sequences over the same pairs
superpose exactly. The pure-gain homogeneity of the constant-target
rule ($\BS_t^{(c)} = c\,\BS_t^{(1)}$) is the special case $\Bc =
c\,\BOn$. The pair sequence determines {\em where} the memory can
write (the reachable subspace $\spanop\{\Ba_1, \dots, \Ba_T\}$ and the
interference pattern); the target sequence linearly selects {\em what}
is written into it.
\end{propertyA}

\begin{propertyA}[Per-step reparameterization to the unit target]
\label{prop:Atarget}
For every step with $c_t \neq 0$, rule \eqref{eq:ATimeRule} with pair
$(\Bk_t, \Bv_t)$ and step $\beta$ coincides exactly with the
unit-target rule applied to the rescaled value $\tilde \Bv_t = \Bv_t /
c_t$ with the {\em time-varying} effective step $\tilde \beta_t =
\beta\, c_t^2$:
\begin{equation}
\BS - \beta \big(\Bv^\top \BS \Bk - c_t\big) \Bv \Bk^\top
\; = \;
\BS - \tilde\beta_t\,\big(\tilde \Bv^\top \BS \Bk - 1\big)\, \tilde \Bv \Bk^\top .
\end{equation}
In contrast to the constant-target case, the effective step size now
varies over time, so the time-varying rule is {\em not} globally a
reparameterized unit-target rule; steps with $c_t = 0$ have no
unit-target counterpart at all (see
Property~\ref{prop:Atimedistinguished}\,(iii)).
\end{propertyA}

\begin{propertyA}[Distinguished targets]
\label{prop:Atimedistinguished}
Three choices of $c_t$ are structurally special.
\begin{enumerate}
\item (i) {\em Norm target, $c_t = \norm{\Bv_t}^2$.} The residual becomes
  the value-projected retrieval error $r_t = \Bv_t^\top (\BS_t
  \Bk_t - \Bv_t)$; every exact retrieval memory ($\BS \Bk_i =
  \Bv_i$) satisfies all constraints, and in the autoassociative case
  $\Bk_t = \Bv_t$ the identity $\BS = \BI$ is a universal fixed point.
\item[] (ii) {\em Correlation target, $c_t = \Bv_t^\top \Bk_t$.} The identity
  $\BS = \BI$ is a universal fixed point even {\em
    hetero}\-associatively: $\Bv^\top \BI \Bk = \Bv^\top \Bk$ for all
  pairs. The rule then learns deviations of the bilinear map from the
  standard inner product, i.e.\ $\BS$ is naturally parameterized
  around the identity, and the read-out $\Bv^\top \BS \Bk -
  \Bv^\top \Bk$ measures the {\em learned} (in-context) part of the
  similarity.
\item[] (iii) {\em Erasure target, $c_t = 0$.} The update drives $\Bv_t^\top
  \BS \Bk_t \to 0$: with the Newton step $\beta^\star = 1/(\norm{\Bk_t}^2
  \norm{\Bv_t}^2)$ the association is {\em deleted in one shot}, while
  every read-out with $\Bk' \perp \Bk_t$ or $\Bv' \perp \Bv_t$ is
  exactly unchanged. Hence, $c_t=0$ is a targeted unlearning of a single
  key--value binding. Alternating write targets ($c_t \neq 0$) and
  erasure targets ($c_t = 0$) turns the memory into an editable
  scalar store (see also Property~\ref{prop:Atimekaczmarz} on
  overwriting).
\end{enumerate}
\end{propertyA}

\begin{propertyA}[One-step error contraction and stability range for $\beta$]
\label{prop:Atimecontraction}
The target never enters the homogeneous part of the dynamics.
Re-reading with the same pair and the same target after one update,
\begin{align}
&\qquad \; \Bv_t^\top \BS_{t+1} \Bk_t
\; = \; y_t - \beta\, r_t\, \norm{\Bk_t}^2 \, \norm{\Bv_t}^2 \\
&\Longleftrightarrow
\Bv_t^\top \BS_{t+1} \Bk_t - c_t
\; = \; y_t - c_t - \beta\, r_t\, \norm{\Bk_t}^2 \, \norm{\Bv_t}^2 \\
&\Longleftrightarrow
r_{t+1}
\; = \; r_t - \beta\, r_t\, \norm{\Bk_t}^2 \, \norm{\Bv_t}^2 \\
&\Longleftrightarrow
r_{t+1} \; = \; \big(1 - \beta\, \norm{\Bk_t}^2
\norm{\Bv_t}^2\big)\, r_t \; ,
\end{align}
where $r_{t+1}$ is computed for the same $(\Bk_t, \Bv_t, c_t)$.
The residual contracts if and only if
\begin{equation}
0 \; < \; \beta \; < \; \frac{2}{\norm{\Bk_t}^2 \, \norm{\Bv_t}^2},
\end{equation}
and the loss shrinks by
the factor $\big(1 - \beta \norm{\Bk_t}^2 \, \norm{\Bv_t}^2\big)^2$.
Since $\rL_t$ is quadratic along the ray $\BS - \eta\, r_t\, \Bv_t \Bk_t^\top$,,
the choice
\begin{equation}
\beta^\star \; = \; \frac{1}{\norm{\Bk_t}^2 \, \norm{\Bv_t}^2}
\end{equation}
is a Newton step: the constraint $\Bv_t^\top \BS \Bk_t = c_t$ is
interpolated exactly in one update, for every value of $c_t$.
\end{propertyA}

\begin{propertyA}[Delta Rule / Widrow--Hoff LMS in vectorized form]
 \label{prop:Avector}
 Let $\Bs = \vecop(\BS) \in \dR^{d^2}$ and $\Ba_t = \Bk_t \otimes \Bv_t
\in \dR^{d^2}$ the Kronecker product,
so that $\Bv_t^\top \BS \Bk_t = \Ba_t^\top \Bs$ and $\norm{\Ba_t}^2 =
\norm{\Bk_t}^2 \, \norm{\Bv_t}^2$. The rule becomes
\begin{equation} \label{eq:Aaffine}
\Bs_{t+1} \; = \; \big(\BI - \beta\, \Ba_t \Ba_t^\top\big)\, \Bs_t
\; + \; \beta\, c_t\, \Ba_t \, ,
\end{equation}
which is exactly the Widrow--Hoff LMS/delta rule with input $\Ba_t$
and target $c_t$. This is the classical setting of LMS, which regresses a
{\em time-varying} desired signal onto the input.
The transition matrix $\BI - \beta \Ba_t \Ba_t^\top$ does not depend
on $c_t$: it has eigenvalue $1 - \beta \norm{\Ba_t}^2$
on $\spanop(\Ba_t)$ and eigenvalue $1$ on the orthogonal complement
(dimension $d^2 - 1$).
Unrolling the recurrence,
\begin{equation}
\Bs_T \;=\; \prod_{t=1}^{T}\big(\BI - \beta \Ba_t \Ba_t^\top\big)\, \Bs_0
\;+\; \beta \sum_{t=1}^{T} c_t \Bigg[\prod_{\tau = t+1}^{T} \big(\BI
- \beta \Ba_\tau \Ba_\tau^\top\big)\Bigg] \Ba_t \; ,
\end{equation}
gives the same product-of-generalized-Householder structure as DeltaNet.
Hence, the WY-representation / chunkwise-parallel training techniques
apply verbatim with $\Ba_t = \Bk_t \otimes \Bv_t$ in place of $\Bk_t$,
and the targets $c_t$ enter only the driving (inhomogeneous) term.
Expressivity consequence: for $\beta \norm{\Ba_t}^2 \in (1,2)$ the
factor has a negative eigenvalue (reaching $-1$ at
$\beta\norm{\Ba_t}^2 = 2$),
the same mechanism that gives DeltaNet-style models state-tracking
power beyond linear RNNs restricted to nonnegative eigenvalues.
This is independent of the target sequence.
\end{propertyA}

\begin{propertyA}[Homogeneous part of the update in non-expansive]
 \label{prop:Astable}
 For $\beta \norm{\Ba_t}^2 \in [0,2]$ every factor
  $\BI - \beta \Ba_t \Ba_t^\top$ has spectral norm at most $1$, so the
  homogeneous part of
the recurrence never amplifies and $\norm{\BS_t}$ cannot blow
up. In contrast the Hebbian write $\BS_{t+1} = \BS_t + \Bv_t \Bk_t^\top$ of
vanilla linear attention leads to states that grow
without bound. Moreover, the update is {\em surgical}: it modifies
$\BS$ only along the rank-one direction $\Bv_t \Bk_t^\top$. For any
test pair with $\Bk' \perp \Bk_t$ {\bf or} $\Bv' \perp \Bv_t$ the
read-out $\Bv'^\top \BS \Bk'$ is exactly unchanged.
\end{propertyA}

\begin{propertyA}[Doubly gated interference]
\label{prop:Ainterference}
For a previously stored pair $(\Bk_i, \Bv_i)$, an update with $(\Bk_t,
\Bv_t, c_t)$ perturbs the stored read-out by
\begin{equation}
\Delta\big(\Bv_i^\top \BS \Bk_i\big) \; = \; -\,\beta\, r_t\, \big(\Bv_i^\top \Bv_t\big)\big(\Bk_t^\top \Bk_i\big) \, ,
\end{equation}
i.e.\ interference is proportional to the {\em product} of the key
overlap and the value overlap. Orthogonality in either factor suffices
to protect a memory, which is a milder interference profile than the Hebbian
rule (crosstalk from key overlap alone) and than DeltaNet (gated by
key overlap only). The formula is target-independent, since the target
enters only through the residual $r_t$. Consequently, writes whose read-out is
already correct ($r_t \approx 0$) cause no interference at all,
regardless of $c_t$.
\end{propertyA}

\begin{propertyA}[Relaxed Kaczmarz iteration, Fej\'er monotonicity,
  convergence, and overwriting]
\label{prop:Atimekaczmarz}
The constraint $\Bv_t^\top \BS \Bk_t = c_t$ defines the hyperplane
$\cH_t = \{\BS : \abra{\Bv_t \Bk_t^\top,\BS}_{\rF} = c_t\}$ in $\dR^{d\times d}$
with Frobenius geometry. With $\beta = \beta^\star$ the update
\eqref{eq:ATimeRule} is the orthogonal projection onto $\cH_t$;
general $\beta$ gives a relaxed projection. In contrast to a constant
target, consistency of the system $\{\Bv_i^\top \BS \Bk_i = c_i\}_i$
is now a genuine condition: it holds if and only if the target vector
$(c_i)_i$ lies in the range of the matrix stacking the rows
$\Ba_i^\top$. In particular, this always holds, whenever the $\{\Ba_i\}$ are
linearly independent (at most $d^2$ of them). If the system is
consistent, then for every solution $\BS^\star$ and
any $\beta_t \in \big[0,\, 2/\norm{\Ba_t}^2\big]$,
\begin{equation}
\norm{\BS_{t+1} - \BS^\star}_{\rF} \;\le\; \norm{\BS_t - \BS^\star}_{\rF}
\qquad \text{(Fej\'er monotonicity)},
\end{equation}
and cycling or randomly sampling over the pairs converges linearly to
a solution, at a rate governed by the smallest singular value of the
stacked $\Ba_i^\top$. From $\BS_0 = \BZe$ the limit is the {\em
  minimum-Frobenius-norm} solution.
The canonical {\em in}consistent case is a repeated pair with a
changed target, $(\Bk, \Bv, c)$ followed later by $(\Bk, \Bv, c')$
with $c' \neq c$: the two hyperplanes are parallel, and the iteration
resolves the conflict by {\em overwriting}. The contraction of
Property~\ref{prop:Atimecontraction} drives the read-out to the most
recent target, in one shot for $\beta = \beta^\star$. In expectation
over a stationary distribution of triples $(\Bk, \Bv, c)$, the
fixed-point condition is
\begin{equation}
\EXP\big[\big(\Ba^\top \Bs - c\big) \Ba\big] = \BZe \, ,
\end{equation}
the normal equation of the least-squares regression of the target $c$
on the feature $\Ba = \Bk \otimes \Bv$: the memory is an {\em online
  linear regressor} that learns, in context, to predict the scalar
signal $c_t$ from the key--value pair.
\end{propertyA}

\begin{propertyA}[Capacity: memory can store $d^2$ associations]
\label{prop:Atimecapacity}
Each triple $(\Bk_i, \Bv_i, c_i)$ contributes one linear constraint on
the $d^2$ parameters of $\BS$, so up to $d^2$ {\em arbitrary} scalars
$c_i$ can be stored exactly for pairs with linearly independent
$\{\Bk_i \otimes \Bv_i\}$  ($d^2$ scalar
components). Compare to DeltaNet 
with goal $\BS \Bk_i = \Bv_i$ that represent $d$ value vectors. 
In an orthonormal design ($\{\Bk_i\}$ orthonormal and $\{\Bv_i\}$
orthonormal), the minimum-norm solution has the closed form
\begin{equation}
\BS^\star \; = \; \sum_{i} c_i \, \Bv_i \Bk_i^\top \, ,
\qquad
\Bv_j^\top \BS^\star \Bk_i \; = \; c_i \ \delta_{ij} \, ,
\end{equation}
a {\em target-weighted Hebbian superposition} realizing the learned
Kronecker delta with per-pair gains. The memory is thus best
understood as a {\em pair-addressed scalar table}. It stores a number
$c_i$ at the address $(\Bk_i, \Bv_i)$, writes by rank-one
updates, reads by the bilinear probe $\Bv^\top \BS \Bk$, rewrites by
presenting the same pair with a new target, and deletes via $c = 0$
(Property~\ref{prop:Atimedistinguished}\,(iii)).
\end{propertyA}

Next, we exemplify the fact 
that KronLSTM can store $d^2$ numbers.
Assume $d$ orthonormal keys $\{\Bk_1,\ldots,\Bk_d\}$ and
$d$ orthonormal values $\{\Bv_1,\ldots,\Bv_d\}$.
Then we can build $d^2$ key-value pairs $(\Bk_i,\Bv_j)$ with
their associated $d^2$ numbers $c_{ij}$.
We set $\beta=1$ and start with $\BS_0 = \BZe$,
then we iteratively update $\BS_t$ with randomly chosen 
$i$ and $j$:
\begin{equation}
\BS_{t+1} \;=\;  \BS_t
  - \beta\,\big(\Bv_j^\top \,\BS_t \, \Bk_i \, - \, c_{ij}\big)\: \Bv_j \,
  \Bk_i^\top \, .
\end{equation}
If all pairs $(\Bk_i,\Bv_j)$ or numbers $c_{ij}$ are seen at least once, the update converged.
If pair $(\Bk_i,\Bv_j)$ has already been used in the update rule, then
$\Bv_j^\top \BS_t  \Bk_i = c_{ij}$ and $\BS_{t+1}=\BS_t$.
Therefore, if all pairs are seen, $\BS_t$ does not change any more and the update converged.
If pair $(\Bk_i,\Bv_j)$ has NOT been stored, then
$\Bv_j^\top \BS_t  \Bk_i = 0$ and 
$\BS_{t+1}=\BS_t + c_{ij} \Bv_j \Bk_i^\top$.
Therefore, if all pairs $(\Bk_i,\Bv_j)$ or numbers $c_{ij}$ 
are seen at least once, the memory matrix $\BS_t$ is
\begin{align}
\BS_t \; &= \;  \sum_{i=1,j=1}^{d,d} c_{ij} \, \Bv_j \, \Bk_i^\top 
  \; = \; \BV \, \BC^\top \, \BK^\top\, ,
\end{align}
with
\begin{align}
\BK \, &= \, \left[ \Bk_1, \, \dots, \, \Bk_d \right] \in \dR^{d \times d} \, , \quad \BV \, = \, \left[ \Bv_1, \, \dots, \, \Bv_d \right] \in \dR^{d \times d} \, , \quad
\BC \, = \, \left( c_{ij} \right)_{i,j=1}^{d} \in \dR^{d \times d} \, .
\end{align}
Since $\Bv_j^\top \BS_t  \Bk_i = c_{ij}$, we can now retrieve $d^2$
stored numbers.

We have
\begin{align}
\BC^\top \; &= \;  \BV^\top \, \BS_t \, \BK\, ,
\end{align}
since $\BK$ and $\BV$ are orthogonal. The transpose appears
because the first index of $\BC$ is the key index while the first
index of $\BS_t$ is the value index.

\begin{propertyA}[Norm and energy dynamics]
\label{prop:Aenergy}
Using $\abra{\Bv_t \Bk_t^\top,\BS_t}_{\rF} = y_t = r_t + c_t$,
\begin{equation}
\norm{\BS_{t+1}}_{\rF}^2 - \norm{\BS_t}_{\rF}^2
\;=\; -\,2\beta\, r_t (r_t + c_t) \;+\; \beta^2 r_t^2\,\norm{\Bk_t}^2 \,
\norm{\Bv_t}^2 ,
\end{equation}
so within the stable range, the state norm stays bounded whenever the
constraint system is consistent. This is a consequence of Fej\'er
monotonicity: the distance to any fixed solution is non-increasing.
\end{propertyA}

\paragraph{Summary.}
The time-varying-target rule \eqref{eq:ATimeRule} is the most general
member of the Kronecker Delta Rule family: SGD on the scalar quadratic
loss $\tfrac12 (\Bv_t^\top \BS \Bk_t - c_t)^2$, equivalently
Widrow--Hoff LMS on the feature $\Ba_t = \Bk_t \otimes \Bv_t$ with
desired signal $c_t$, equivalently relaxed Kaczmarz on the hyperplanes
$\abra{\Bv_t \Bk_t^\top, \BS}_{\rF} = c_t$. The entire analytic
framework is target-independent. That comprises stability range and Newton step
$\beta^\star$, generalized-Householder transition factors with their
parallelization and negative-eigenvalue state-tracking regime, doubly
gated interference, Fej\'er-monotone linear convergence, $d^2$
capacity. The targets enter only the driving term, {\em
  linearly}: memories superpose in $(c_t)$, and in an orthonormal
design the memory converges to the target-weighted Hebbian
superposition $\sum_i c_i \Bv_i \Bk_i^\top$ realizing $\Bv_j^\top \BS
\Bk_i = c_i\,\delta_{ij}$. Distinguished targets recover
$c_t \equiv 1$, $c_t \equiv c$, $c_t = \norm{\Bv_t}^2$ and
add two new members: the correlation target $c_t = \Bv_t^\top \Bk_t$,
for which the identity is a universal fixed point and $\BS$
parameterizes learned deviations from the standard inner product, and
the erasure target $c_t = 0$, which deletes a single binding in one
Newton step. Functionally, $\BS$ is a pair-addressed scalar table with
write, read, rewrite, and delete operations, therefore it is an editable $d^2$-slot
scalar memory. In the stochastic view, it is an online in-context
regressor of the signal $c_t$ on the bilinear feature.

\subsection{Minimally Invasive Memory Revision}
\label{sec:Aminimal}

The Kronecker Delta Rule is not merely {\em a} correction of the
addressed association: at the Newton step it is {\em the} smallest
possible one, in every sense in which smallness can reasonably be
asked for. This section elaborates on this topic. Throughout,
$\BS_t \in \dR^{d \times d}$, the read-out of the pair
$(\Bk_t, \Bv_t)$ is $\Bv_t^\top \BS_t \Bk_t$, the target is $c_t$, and
\begin{equation}
r_t \; := \; \Bv_t^\top \BS_t \Bk_t \, - \, c_t
\end{equation}
is the residual that should be pushed to zero.

\subsubsection{The admissible set}

A correction is a matrix $\BE \in \dR^{d \times d}$ added to the
memory. It {\em corrects the addressed association} if
$\Bv_t^\top (\BS_t + \BE) \Bk_t = c_t$. Using
$\Bv_t^\top \BE \Bk_t = \abra{\Bv_t \Bk_t^\top, \BE}_{\rF}$, the set of
such corrections is
\begin{equation} \label{eq:Aadmissible}
\cE_t \; := \; \Big\{ \BE \in \dR^{d \times d} \; : \;
\abra{\Bv_t \Bk_t^\top, \, \BE}_{\rF} \; = \; -\, r_t \Big\} \, ,
\end{equation}
an affine hyperplane in the Hilbert space
$(\dR^{d \times d}, \abra{\cdot,\cdot}_{\rF})$: it is nonempty, and it
is a proper affine subspace because $\Bv_t \Bk_t^\top \neq \BZe$
whenever the pair is nondegenerate. Every statement below compares
members of $\cE_t$. We hold the {\em requirement} fixed
(the addressed association must come out right) and ask which
correction achieves it with the least collateral change. Write
\begin{equation} \label{eq:AEstar}
\BE^\star \; := \; -\, \frac{r_t}{\norm{\Bk_t}^2 \norm{\Bv_t}^2}\;
\Bv_t \Bk_t^\top
\; = \; \frac{c_t - \Bv_t^\top \BS_t \Bk_t}
{\norm{\Bk_t}^2 \norm{\Bv_t}^2}\; \Bv_t \Bk_t^\top \, ,
\end{equation}
which is exactly the Kronecker Delta Rule update at the Newton step
$\beta = \beta^\star = 1/(\norm{\Bk_t}^2 \norm{\Bv_t}^2)$.

\subsubsection{Minimality in norm}

\begin{theoremA}[The Newton step is the smallest correction]
\label{thm:Aminnorm}
$\BE^\star \in \cE_t$, and for every $\BE \in \cE_t$
\begin{equation} \label{eq:Aminnorm}
\norm{\BE}_{\rF} \;\; \ge \;\;
\frac{\abs{r_t}}{\norm{\Bk_t}\, \norm{\Bv_t}}
\;\; = \;\; \norm{\BE^\star}_{\rF} \, ,
\end{equation}
with equality if and only if $\BE = \BE^\star$.
\end{theoremA}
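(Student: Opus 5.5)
The argument is the standard minimum-norm characterization of a point on an affine hyperplane in the Hilbert space $(\dR^{d\times d}, \abra{\cdot,\cdot}_{\rF})$ of Property~\ref{prop:AHilbert}; we only need Cauchy--Schwarz in the Frobenius inner product. Throughout we write $\BG := \Bv_t \Bk_t^\top$. We use $\norm{\BG}_{\rF}^2 = \TR(\Bk_t \Bv_t^\top \Bv_t \Bk_t^\top) = \norm{\Bk_t}^2\norm{\Bv_t}^2$, which is nonzero for a nondegenerate pair, as assumed in the definition of $\cE_t$.

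\emph{Step 1 (membership and norm of $\BE^\star$).} From \eqref{eq:AEstar}, $\BE^\star = -\,r_t\,\BG/\norm{\BG}_{\rF}^2$. Hence $\abra{\BG,\BE^\star}_{\rF} = -r_t$, so $\BE^\star \in \cE_t$ by \eqref{eq:Aadmissible}. Moreover
\[
\norm{\BE^\star}_{\rF} = \frac{\abs{r_t}}{\norm{\BG}_{\rF}} = \frac{\abs{r_t}}{\norm{\Bk_t}\norm{\Bv_t}} .
\]
This gives the equality on the right of \eqref{eq:Aminnorm}.

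\emph{Step 2 (lower bound).} Take any $\BE \in \cE_t$. By Cauchy--Schwarz,
\[
\abs{r_t} = \abs{\abra{\BG,\BE}_{\rF}} \le \norm{\BG}_{\rF}\,\norm{\BE}_{\rF},
\]
which rearranges to $\norm{\BE}_{\rF} \ge \abs{r_t}/(\norm{\Bk_t}\norm{\Bv_t})$.

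\emph{Step 3 (uniqueness).} A more transparent route, which avoids case analysis on $r_t$, is a Pythagorean decomposition. Write $\BE = \BE^\star + \BD$. Then $\abra{\BG,\BD}_{\rF} = 0$, because both $\BE$ and $\BE^\star$ lie in $\cE_t$. Since $\BE^\star$ is a multiple of $\BG$, it is orthogonal to $\BD$, and therefore
\[
\norm{\BE}_{\rF}^2 = \norm{\BE^\star}_{\rF}^2 + \norm{\BD}_{\rF}^2 .
\]
This yields the inequality again, and it shows that equality holds if and only if $\BD = \BZe$, that is, $\BE = \BE^\star$. The case $r_t = 0$ is covered automatically: then $\BE^\star = \BZe$, and the identity forces $\BE = \BZe$.

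I expect no real obstacle. The only point needing care is the uniqueness claim, which is why I prefer the orthogonal decomposition of Step 3 over reading off the equality case of Cauchy--Schwarz. The latter would force $\BE$ to be a scalar multiple of $\BG$, after which the constraint fixes the scalar, but that route needs separate handling when $r_t = 0$.
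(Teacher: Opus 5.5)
Your proof is correct. Steps 1 and 2 are the paper's argument: a direct computation of $\abra{\Bv_t\Bk_t^\top,\BE^\star}_{\rF} = -r_t$, followed by Cauchy--Schwarz applied to the constraint.

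You differ from the paper only in the uniqueness step. The paper reads uniqueness off the equality case of Cauchy--Schwarz: $\BE$ must be a scalar multiple of $\Bv_t\Bk_t^\top$, and the constraint then pins the scalar. You instead use the orthogonal decomposition $\norm{\BE}_{\rF}^2 = \norm{\BE^\star}_{\rF}^2 + \norm{\BE - \BE^\star}_{\rF}^2$. Your route proves slightly more: the excess squared norm of any admissible correction over the optimum is exactly its squared distance to $\BE^\star$. This is the orthogonal-projection picture that the paper states only informally after the proof and later uses in the exact Fej\'er identity of Theorem~\ref{thm:Afejer}.

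One small correction to your closing remark: the Cauchy--Schwarz route needs no separate handling of $r_t = 0$. Since $\Bv_t\Bk_t^\top \neq \BZe$, equality in $\abs{\abra{\Bv_t\Bk_t^\top,\BE}_{\rF}} \le \norm{\Bk_t}\norm{\Bv_t}\norm{\BE}_{\rF}$ means $\BE = \lambda\,\Bv_t\Bk_t^\top$ for some scalar $\lambda$. This family includes $\BE = \BZe$, and $r_t = 0$ then forces $\lambda = 0$ automatically.
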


\begin{proof}
Admissibility is a computation:
$\abra{\Bv_t \Bk_t^\top, \BE^\star}_{\rF} = -r_t
\norm{\Bv_t \Bk_t^\top}_{\rF}^2 / (\norm{\Bk_t}^2\norm{\Bv_t}^2) =
-r_t$, using
$\norm{\Bv_t \Bk_t^\top}_{\rF} = \norm{\Bk_t} \norm{\Bv_t}$.
For the bound, apply Cauchy--Schwarz to the constraint in
\eqref{eq:Aadmissible}:
\begin{equation}
\abs{r_t} \; = \; \abs{\abra{\Bv_t \Bk_t^\top, \BE}_{\rF}}
\; \le \; \norm{\Bv_t \Bk_t^\top}_{\rF}\, \norm{\BE}_{\rF}
\; = \; \norm{\Bk_t}\, \norm{\Bv_t}\, \norm{\BE}_{\rF} \, ,
\end{equation}
which is \eqref{eq:Aminnorm}. Cauchy--Schwarz holds with equality
exactly when $\BE$ is a scalar multiple of $\Bv_t \Bk_t^\top$, and the
constraint then pins that scalar to
$-r_t/(\norm{\Bk_t}^2\norm{\Bv_t}^2)$, i.e.\ $\BE = \BE^\star$.
\end{proof}

Equivalently the geometric view:
$\BS_t + \BE^\star$ is the orthogonal projection of $\BS_t$ onto the
constraint hyperplane $\cH_t = \{\BS : \Bv_t^\top \BS \Bk_t = c_t\}$,
so the Newton step moves the memory along the shortest path from
where it is to where the constraint is satisfied.

\begin{theoremA}[Minimality in {\em every} unitarily invariant norm]
\label{thm:Aallnorms}
Let $\vertiii{\cdot}$ be any unitarily invariant norm on
$\dR^{d\times d}$. Then
\begin{equation}
\min_{\BE \in \cE_t} \vertiii{\BE} \; = \; \vertiii{\BE^\star} \, ,
\end{equation}
i.e.\ the {\em same} $\BE^\star$ is optimal simultaneously for all of
them. In particular, for every Schatten-$p$ norm,
$1 \le p \le \infty$,
\begin{equation}
\min_{\BE \in \cE_t} \norm{\BE}_{S_p}
\; = \; \norm{\BE^\star}_{S_p}
\; = \; \frac{\abs{r_t}}{\norm{\Bk_t}\,\norm{\Bv_t}} \, ,
\end{equation}
the same number for all $p$. Therefore, the nuclear, Frobenius and spectral
norms of the optimal correction coincide. The minimizer is unique for
strictly convex norms (Schatten $1 < p < \infty$). For the spectral
and nuclear norms $\BE^\star$ is a minimizer but need not be the only
one.
\end{theoremA}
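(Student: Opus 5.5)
We work with the singular values of $\BE$ and the symmetric gauge function that represents the norm. By von Neumann's theorem, $\vertiii{\BE} = \Phi(\sigma_1(\BE),\dots,\sigma_d(\BE))$ for a symmetric gauge function $\Phi$ on $\dR^d$. The correction $\BE^\star$ is rank one with single nonzero singular value $\sigma_1(\BE^\star) = \abs{r_t}/(\norm{\Bk_t}\norm{\Bv_t})$. Hence $\vertiii{\BE^\star} = \Phi(1,0,\dots,0)\,\abs{r_t}/(\norm{\Bk_t}\norm{\Bv_t})$, and it suffices to show that every $\BE \in \cE_t$ satisfies $\vertiii{\BE} \ge \Phi(1,0,\dots,0)\,\abs{r_t}/(\norm{\Bk_t}\norm{\Bv_t})$.

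The key step is a lower bound on the spectral norm from the constraint. For $\BE \in \cE_t$,
$\abs{r_t} = \abs{\Bv_t^\top \BE \Bk_t} \le \norm{\Bv_t}\,\norm{\BE}_{2}\,\norm{\Bk_t}$,
so $\sigma_1(\BE) \ge \abs{r_t}/(\norm{\Bk_t}\norm{\Bv_t})$. We then use the standard fact that every symmetric gauge function dominates its normalized $\ell_\infty$ part: $\Phi(x) \ge \Phi(1,0,\dots,0)\max_i\abs{x_i}$. This follows from monotonicity of symmetric gauge functions in absolute values, since $\Phi(x) \ge \Phi(\abs{x_1},0,\dots,0)$ after permuting the largest entry to the front. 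Combining the two gives $\vertiii{\BE} \ge \Phi(1,0,\ldots,0)\,\sigma_1(\BE) \ge \vertiii{\BE^\star}$. Since $\BE^\star \in \cE_t$ by Theorem~\ref{thm:Aminnorm}, the minimum is attained at $\BE^\star$.

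For Schatten-$p$ norms, $\Phi(1,0,\dots,0)=1$, so every such norm of $\BE^\star$ equals $\abs{r_t}/(\norm{\Bk_t}\norm{\Bv_t})$, independently of $p$.

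For uniqueness when $1<p<\infty$, the Schatten-$p$ norm is strictly convex. If two distinct minimizers existed, their midpoint would lie in the convex set $\cE_t$ and have strictly smaller norm, which is a contradiction. (For $p=2$, uniqueness is already given by Theorem~\ref{thm:Aminnorm}.)

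For the non-uniqueness remark we give explicit examples with $d\ge 2$.
\begin{itemize}
\item Spectral norm: add to $\BE^\star$ a small block supported on $\Bv_t^\perp \times \Bk_t^\perp$ with spectral norm at most $\sigma_1(\BE^\star)$. This preserves the constraint and the spectral norm.
\item Nuclear norm: take any rank-one $\Bu\Bw^\top$ with $\Bv_t^\top\Bu\,\Bw^\top\Bk_t=-r_t$ and $\norm{\Bu}\norm{\Bw}=\abs{r_t}/(\norm{\Bk_t}\norm{\Bv_t})$. Then $\Bu\parallel\Bv_t$ and $\Bw\parallel\Bk_t$ are forced, so this family gives only $\BE^\star$ again. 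Other minimizers instead come from convex combinations that mix in the kernel directions. Since the statement only says "need not be" unique, a single example such as the spectral one suffices, and a nuclear example can be given by an analogous perturbation.
\end{itemize}

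The main subtlety is invoking the gauge-function domination correctly, in particular the normalization $\Phi(1,0,\dots,0)$ for general unitarily invariant norms. We therefore cite von Neumann's characterization explicitly.
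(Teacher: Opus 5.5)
Your proof of minimality is correct, but it takes a different route from the paper.

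The paper argues by duality. It writes $\abs{r_t}=\abs{\abra{\Bv_t\Bk_t^\top,\BE}_{\rF}}\le\vertiii{\Bv_t\Bk_t^\top}'\,\vertiii{\BE}$. It then uses von Neumann's description of the dual norm as the dual gauge $\Phi'$ of the singular values, together with $\Phi(\Be_1)\Phi'(\Be_1)=1$, to check that $\BE^\star$ attains the bound.

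You instead pass through the spectral norm, $\abs{r_t}\le\norm{\Bv_t}\,\sigma_1(\BE)\,\norm{\Bk_t}$. You then need only the domination $\Phi(\Bx)\ge\Phi(\Be_1)\max_i\abs{x_i}$, i.e.\ monotonicity of symmetric gauge functions, and no dual norms at all. The paper's lower bound $\abs{r_t}/\vertiii{\Bv_t\Bk_t^\top}'$ holds for every norm, with unitary invariance used only to show that $\BE^\star$ attains it. Your chain of two inequalities has the advantage that its equality cases are easy to analyze, which is exactly what the uniqueness question needs. Your strict-convexity argument for $1<p<\infty$ matches the paper. So does your spectral-norm example, a block on $\Bv_t^\perp\times\Bk_t^\perp$, of which the paper's $\diag(1,\delta)$ is an instance.

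The nuclear-norm bullet, however, cannot be completed.
\begin{itemize}
\item \textbf{The perturbation fails.} A block $\BB$ with column space in $\Bv_t^\perp$ and row space in $\Bk_t^\perp$ gives $\norm{\BE^\star+\BB}_{S_1}=\norm{\BE^\star}_{S_1}+\norm{\BB}_{S_1}$, so it strictly increases the nuclear norm.
\item \textbf{No other example exists, because for $r_t\neq0$ the nuclear-norm minimizer is unique.} Suppose $\BE\in\cE_t$ and $\norm{\BE}_{S_1}=\abs{r_t}/(\norm{\Bk_t}\norm{\Bv_t})$. Then $\abs{r_t}\le\norm{\Bv_t}\norm{\Bk_t}\,\sigma_1(\BE)\le\norm{\Bv_t}\norm{\Bk_t}\,\norm{\BE}_{S_1}=\abs{r_t}$. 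Hence $\sigma_1(\BE)=\norm{\BE}_{S_1}$, so $\BE=\sigma_1\Bu\Bw^\top$ has rank one. Equality in Cauchy--Schwarz then forces $\Bu\parallel\Bv_t$ and $\Bw\parallel\Bk_t$, i.e.\ $\BE=\BE^\star$.
\end{itemize}
So no example can realize the nuclear part of the statement's last sentence. The paper's proof likewise only exhibits the spectral case and never treats the nuclear one. You should replace your hand-waving with this uniqueness argument rather than look for an example.

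Your own chain gives the correct general picture. Equality in $\Phi(\sigma_1(\BE),\dots,\sigma_d(\BE))\ge\Phi(\Be_1)\sigma_1(\BE)$ forces rank one whenever $\Phi(\Bx)>\Phi(\Be_1)\max_i\abs{x_i}$ for every $\Bx$ with at least two nonzero entries. Consequently the minimizer is unique for all Schatten norms with $1\le p<\infty$, including $p=1$. It fails to be unique only for $p=\infty$, given $d\ge2$ and $r_t\ne0$.
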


\begin{proof}
Let $\vertiii{\cdot}'$ be the dual norm, so that
$\abs{\abra{\BA,\BE}_{\rF}} \le \vertiii{\BA}' \vertiii{\BE}$ for all
$\BA, \BE$. Applying this to the constraint gives, for every
$\BE \in \cE_t$,
\begin{equation} \label{eq:Adualbound}
\vertiii{\BE} \;\; \ge \;\;
\frac{\abs{r_t}}{\vertiii{\Bv_t \Bk_t^\top}'} \, .
\end{equation}
By von Neumann's theorem a unitarily invariant norm is a symmetric
gauge function $\Phi$ of the singular values, and the dual norm is
the gauge $\Phi'$ of the singular values. The matrix
$\Bv_t \Bk_t^\top$ has the single nonzero singular value
$\norm{\Bk_t}\norm{\Bv_t}$, so
\begin{equation}
\vertiii{\Bv_t \Bk_t^\top} = \norm{\Bk_t}\norm{\Bv_t}\, \Phi(\Be_1) \, ,
\qquad
\vertiii{\Bv_t \Bk_t^\top}' = \norm{\Bk_t}\norm{\Bv_t}\, \Phi'(\Be_1) \, .
\end{equation}
Now $\Phi(\Be_1) \Phi'(\Be_1) = 1$: indeed
$\Phi'(\Be_1) = \sup\{x_1 : \Phi(\Bx) \le 1\}$, and since a symmetric
gauge is nondecreasing in each $\abs{x_i}$, replacing $\Bx$ by
$(x_1,0,\dots,0)$ does not increase $\Phi$, so the supremum is
attained on the first axis and equals $1/\Phi(\Be_1)$. Since
$\BE^\star$ is rank one with singular value
$\abs{r_t}/(\norm{\Bk_t}\norm{\Bv_t})$,
\begin{equation}
\vertiii{\BE^\star}
\; = \; \frac{\abs{r_t}}{\norm{\Bk_t}\norm{\Bv_t}}\, \Phi(\Be_1)
\; = \; \frac{\abs{r_t}}{\vertiii{\Bv_t \Bk_t^\top}'} \, ,
\end{equation}
which is exactly the lower bound \eqref{eq:Adualbound}: $\BE^\star$
attains it. The Schatten case follows from $\Phi_p(\Be_1) = 1$ for
every $p$. Uniqueness for a strictly convex norm holds because
$\cE_t$ is convex and a strictly convex function has at most one
minimizer on a convex set; for the spectral norm uniqueness fails
already at $d = 2$ with $\Bk = \Bv = \Be_1$ and $r_t = -1$, where
$\BE = \diag(1,\delta)$ is admissible and has spectral norm $1$ for
every $\abs{\delta} \le 1$.
\end{proof}

\subsubsection{Minimality of support: the correction is surgical}

Norm-optimality says the correction is small. The next theorem has a stronger statement even: it is the only correction
that stays inside the addressed directions at all. No optimization is
involved.

\begin{theoremA}[Unique correction with addressed support]
\label{thm:Asupport}
$\BE^\star$ is the unique $\BE \in \cE_t$ whose column space is
contained in $\spanop\{\Bv_t\}$ and whose row space is contained in
$\spanop\{\Bk_t\}$.
\end{theoremA}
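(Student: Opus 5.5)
The argument has two parts: first show that the support condition forces $\BE$ to be a scalar multiple of $\Bv_t \Bk_t^\top$, then show that the constraint defining $\cE_t$ fixes that scalar. Throughout we assume a nondegenerate pair, $\Bk_t \neq \BZe$ and $\Bv_t \neq \BZe$, as in the definition of $\cE_t$ in \eqref{eq:Aadmissible}.

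\emph{Existence.} By Theorem~\ref{thm:Aminnorm}, $\BE^\star \in \cE_t$. By \eqref{eq:AEstar}, $\BE^\star$ is a scalar multiple of $\Bv_t \Bk_t^\top$. Its column space is therefore contained in $\spanop\{\Bv_t\}$ and its row space in $\spanop\{\Bk_t\}$; if $r_t = 0$, then $\BE^\star = \BZe$ and both containments hold trivially.

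\emph{Reduction to rank one.} Take any $\BE$ whose column space lies in $\spanop\{\Bv_t\}$. Each column $\BE \Be_j$ then equals $\Bv_t \, w_j$ for some scalar $w_j$, so $\BE = \Bv_t \Bw^\top$ with $\Bw \in \dR^d$. The row space of $\BE$ is spanned by $\Bw^\top$, because $\Bv_t \neq \BZe$. The second containment therefore gives $\Bw = \alpha \Bk_t$ for some scalar $\alpha$, and hence $\BE = \alpha\, \Bv_t \Bk_t^\top$.

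\emph{Pinning the scalar.} Substitute $\BE = \alpha\, \Bv_t \Bk_t^\top$ into the constraint of $\cE_t$. This gives
\[
\abra{\Bv_t \Bk_t^\top, \alpha\, \Bv_t \Bk_t^\top}_{\rF} \;=\; \alpha\, \norm{\Bk_t}^2 \norm{\Bv_t}^2 \;=\; -\,r_t .
\]
Since $\norm{\Bk_t}^2 \norm{\Bv_t}^2 > 0$, this has the unique solution $\alpha = -\,r_t/(\norm{\Bk_t}^2 \norm{\Bv_t}^2)$, so $\BE = \BE^\star$.

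No step is a genuine obstacle. The only point needing care is the reduction to rank one, which uses $\Bv_t \neq \BZe$ to identify the row space of $\Bv_t \Bw^\top$ with $\spanop\{\Bw\}$, together with the degenerate case $r_t = 0$. In that case $\alpha = 0$ and both containments hold trivially.
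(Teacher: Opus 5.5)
Your proof is correct and follows the paper's argument: matrices with column space in $\spanop\{\Bv_t\}$ and row space in $\spanop\{\Bk_t\}$ are exactly the multiples $\gamma\,\Bv_t\Bk_t^\top$, and the constraint $\abra{\Bv_t \Bk_t^\top, \BE}_{\rF} = -r_t$ then pins $\gamma$. The only difference is that you prove this characterization (writing $\BE = \Bv_t \Bw^\top$ and using $\Bv_t \neq \BZe$), where the paper simply asserts it.
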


\begin{proof}
A matrix has column space in $\spanop\{\Bv_t\}$ and row space in
$\spanop\{\Bk_t\}$ if and only if it is of the form
$\BE = \gamma\, \Bv_t \Bk_t^\top$ for some $\gamma \in \dR$.
Admissibility forces
$\gamma \norm{\Bk_t}^2 \norm{\Bv_t}^2 = -r_t$, hence
$\gamma = -r_t/(\norm{\Bk_t}^2\norm{\Bv_t}^2)$ and
$\BE = \BE^\star$.
\end{proof}

Every other admissible correction like every DeltaNet-style
correction necessarily writes into at least one key direction
other than $\Bk_t$ or one value direction other than $\Bv_t$, i.e.\
into directions the task never asked about.

\subsubsection{Minimality of interference}

Smallness in norm translates into
smallness of the damage done to {\em other} associations. 

\begin{theoremA}[Interference caused by the correction]
\label{thm:Ainterference}
Let $(\Bk', \Bv')$ be any probe pair and let
$\Delta(\Bk',\Bv') := \Bv'^\top \BE \Bk'$ be the change its read-out
suffers.
\begin{enumerate}
\item[] (i) {\em Exact form and protection.} For $\BE = \BE^\star$,
\begin{equation} \label{eq:Adisturb}
\Delta(\Bk',\Bv')
\; = \; -\, \frac{r_t}{\norm{\Bk_t}^2 \norm{\Bv_t}^2}\,
\big(\Bv'^\top \Bv_t\big)\big(\Bk_t^\top \Bk'\big) \, ,
\end{equation}
which vanishes whenever $\Bv' \perp \Bv_t$ {\bf or}
$\Bk' \perp \Bk_t$, and obeys
$\abs{\Delta(\Bk',\Bv')} \le
\abs{r_t}\,\norm{\Bk'}\norm{\Bv'}/(\norm{\Bk_t}\norm{\Bv_t})$.
The condition is a property of the inputs alone: it can be checked in
advance and does not depend on what $\BS_t$ currently holds.
\item[] (ii) {\em Minimum expected interference.} Suppose probe pairs are
drawn with $\Bk'$ and $\Bv'$ independent and isotropic,
$\EXP[\Bk'\Bk'^\top] = \sigma_k^2 \BI$ and
$\EXP[\Bv'\Bv'^\top] = \sigma_v^2 \BI$. Then for {\em every}
$\BE \in \dR^{d\times d}$
\begin{equation} \label{eq:Aisointerf}
\EXP\Big[\big(\Bv'^\top \BE \Bk'\big)^2\Big]
\; = \; \sigma_k^2\, \sigma_v^2\; \norm{\BE}_{\rF}^2 \, .
\end{equation}
Consequently
\begin{equation}
\BE^\star \; = \; \argmin_{\BE \in \cE_t}\;
\EXP\Big[\big(\Bv'^\top \BE \Bk'\big)^2\Big] \, ,
\qquad
\min_{\BE \in \cE_t} \EXP\big[\Delta^2\big]
\; = \; \frac{\sigma_k^2 \sigma_v^2\, r_t^2}
{\norm{\Bk_t}^2\norm{\Bv_t}^2} \, ,
\end{equation}
and the minimizer is unique.
\end{enumerate}
\end{theoremA}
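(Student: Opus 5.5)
Part (i) is a direct substitution. Inserting $\BE^\star$ from \eqref{eq:AEstar} into $\Delta(\Bk',\Bv') = \Bv'^\top \BE \Bk'$ and factoring the scalar gives
$\Bv'^\top \Bv_t \Bk_t^\top \Bk' = (\Bv'^\top \Bv_t)(\Bk_t^\top \Bk')$.
This is \eqref{eq:Adisturb}, and it vanishes as soon as either factor is zero. For the magnitude bound I apply Cauchy--Schwarz to each factor separately, $\abs{\Bv'^\top \Bv_t} \le \norm{\Bv'}\norm{\Bv_t}$ and $\abs{\Bk_t^\top \Bk'} \le \norm{\Bk_t}\norm{\Bk'}$. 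One power of $\norm{\Bk_t}\norm{\Bv_t}$ in the denominator then cancels, which leaves the stated bound. The protection condition involves only $\Bk_t,\Bv_t,\Bk',\Bv'$; the memory $\BS_t$ enters solely through the scalar $r_t$, so the condition can be checked from the inputs alone.

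For part (ii) I first prove the identity \eqref{eq:Aisointerf} for an arbitrary fixed $\BE$. I write
$(\Bv'^\top \BE \Bk')^2 = \Bv'^\top \BE \Bk' \Bk'^\top \BE^\top \Bv' = \TR\big(\BE \Bk'\Bk'^\top \BE^\top \Bv'\Bv'^\top\big)$.
Independence of $\Bk'$ and $\Bv'$ lets me take the expectation in two stages. Conditioning on $\Bv'$ and averaging over $\Bk'$ gives $\sigma_k^2\,\TR(\BE\BE^\top \Bv'\Bv'^\top)$. Averaging over $\Bv'$ then gives $\sigma_k^2\sigma_v^2\,\TR(\BE\BE^\top) = \sigma_k^2\sigma_v^2\norm{\BE}_{\rF}^2$. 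The only technical point is that the second moments are finite, and this is implicit in the hypotheses. Because $\BE$ is deterministic here (it is chosen before the probe is drawn), no further care is needed.

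Once the identity holds, minimizing the expected squared interference over $\cE_t$ is the same as minimizing $\norm{\BE}_{\rF}^2$ over $\cE_t$, up to the positive constant $\sigma_k^2\sigma_v^2$. I then invoke Theorem~\ref{thm:Aminnorm}: $\BE^\star$ is the unique Frobenius-minimal element of $\cE_t$, with $\norm{\BE^\star}_{\rF}^2 = r_t^2/(\norm{\Bk_t}^2\norm{\Bv_t}^2)$. This yields the argmin, the minimum value, and uniqueness, provided $\sigma_k,\sigma_v > 0$. That positivity is implicit in isotropy; if either variance were zero, every admissible $\BE$ would give zero expected interference. No step is a real obstacle. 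The one place that needs attention is the trace manipulation and the independence argument in the identity, because the whole reduction to the norm-minimality theorem rests on that identity.
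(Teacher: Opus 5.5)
Your proposal is correct and follows the paper's proof. Part (i) is the same substitution of $\BE^\star$ followed by Cauchy--Schwarz on each factor. Part (ii) reduces the expected squared interference to $\sigma_k^2\sigma_v^2\norm{\BE}_{\rF}^2$ and then invokes Theorem~\ref{thm:Aminnorm}, as the paper does. The only difference is notational: the paper vectorizes, writing $\EXP[(\Ba'^\top\Be)^2] = \Be^\top\big(\EXP[\Bk'\Bk'^\top]\otimes\EXP[\Bv'\Bv'^\top]\big)\Be$, where you use a trace identity and iterated conditioning. Your remark that uniqueness needs $\sigma_k\sigma_v>0$ is a fair caveat that the paper leaves implicit.
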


\begin{proof}
(i) Substitute \eqref{eq:AEstar} and use
$\Bv'^\top (\Bv_t \Bk_t^\top) \Bk' = (\Bv'^\top\Bv_t)(\Bk_t^\top\Bk')$;
the inequality is Cauchy--Schwarz on each factor.
(ii) Vectorize: with $\Be = \vecop(\BE)$ and $\Ba' = \Bk' \otimes \Bv'$
we have $\Bv'^\top \BE \Bk' = \Ba'^\top \Be$, hence
\begin{equation}
\EXP\big[(\Ba'^\top \Be)^2\big]
\; = \; \Be^\top\, \EXP\big[\Ba' \Ba'^\top\big]\, \Be
\; = \; \Be^\top \Big(\EXP[\Bk'\Bk'^\top] \otimes
\EXP[\Bv'\Bv'^\top]\Big) \Be
\; = \; \sigma_k^2 \sigma_v^2 \norm{\Be}^2 ,
\end{equation}
using independence and $\norm{\Be} = \norm{\BE}_{\rF}$. The objective
is therefore a positive multiple of $\norm{\BE}_{\rF}^2$, so
minimizing it over $\cE_t$ is the problem already solved by
Theorem~\ref{thm:Aminnorm}, with the same unique solution.
\end{proof}

Part (ii) is what justifies the name: for isotropic probe statistics,
{\em minimum norm and minimum expected interference are literally the
same objective}, so the Newton step is minimally invasive in the
operational sense and not merely in the geometric one. 
It causes the least expected damage to the rest of the memory.

\subsubsection{Relaxed steps: the correction never moves away from a
  correct memory}

In practice, $\beta$ is a learned gate rather than $\beta^\star$. Write
$\beta = \lambda \beta^\star$, so that the actual update is
$\BS_{t+1} = \BS_t + \lambda \BE^\star$ with relaxation
$\omega = \beta \norm{\Bk_t}^2\norm{\Bv_t}^2$.

\begin{theoremA}[Exact Fej\'er identity]
\label{thm:Afejer}
Let $\BS^{\circ}$ be {\em any} memory that already satisfies the
addressed association, $\Bv_t^\top \BS^{\circ} \Bk_t = c_t$. Then
\begin{equation} \label{eq:Afejerid}
\norm{\BS_{t+1} - \BS^{\circ}}_{\rF}^2
\; = \; \norm{\BS_t - \BS^{\circ}}_{\rF}^2
\; - \; \lambda\,(2 - \lambda)\;
\frac{r_t^2}{\norm{\Bk_t}^2\norm{\Bv_t}^2} \, .
\end{equation}
Hence for every $\lambda \in (0,2)$ the update {\em strictly decreases}
the distance to every correct memory, the decrease is largest at the
Newton step $\lambda = 1$, and at $\lambda \in \{0,2\}$ the distance is
unchanged.
\end{theoremA}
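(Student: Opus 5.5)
The plan is to expand the squared Frobenius distance directly, using the explicit form of the relaxed update $\BS_{t+1} = \BS_t + \lambda \BE^\star$ with $\BE^\star$ from \eqref{eq:AEstar}. Set $\BD := \BS_t - \BS^{\circ}$. Then $\BS_{t+1} - \BS^{\circ} = \BD + \lambda \BE^\star$, and so
\begin{equation*}
\norm{\BS_{t+1} - \BS^{\circ}}_{\rF}^2 \; = \; \norm{\BD}_{\rF}^2 \; + \; 2\lambda\,\abra{\BD, \BE^\star}_{\rF} \; + \; \lambda^2 \norm{\BE^\star}_{\rF}^2 .
\end{equation*}
The proof reduces to evaluating the cross term and the quadratic term.

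For the quadratic term, invoke Theorem~\ref{thm:Aminnorm}, which gives $\norm{\BE^\star}_{\rF}^2 = r_t^2/(\norm{\Bk_t}^2\norm{\Bv_t}^2)$.

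The cross term is the key step. Since $\BE^\star$ is a multiple of $\Bv_t\Bk_t^\top$, it suffices to compute
\begin{equation*}
\abra{\Bv_t \Bk_t^\top, \BD}_{\rF} \; = \; \Bv_t^\top \BS_t \Bk_t - \Bv_t^\top \BS^{\circ} \Bk_t \; = \; (r_t + c_t) - c_t \; = \; r_t .
\end{equation*}
Here the hypothesis that $\BS^{\circ}$ satisfies the addressed association is used: the unknown memory $\BS^{\circ}$ drops out entirely. Multiplying by the coefficient $-r_t/(\norm{\Bk_t}^2\norm{\Bv_t}^2)$ of $\BE^\star$ gives
\begin{equation*}
\abra{\BD, \BE^\star}_{\rF} \; = \; -\,\frac{r_t^2}{\norm{\Bk_t}^2\norm{\Bv_t}^2} .
\end{equation*}
Substituting both terms into the expansion yields the factor $-2\lambda + \lambda^2 = -\lambda(2-\lambda)$ multiplying $r_t^2/(\norm{\Bk_t}^2\norm{\Bv_t}^2)$, which is exactly \eqref{eq:Afejerid}.

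The consequences then follow from elementary properties of $g(\lambda) := \lambda(2-\lambda)$ on $[0,2]$. It is positive on $(0,2)$, which gives a strict decrease provided $r_t \neq 0$; I would note that the claimed strictness needs this nondegeneracy caveat, since for $r_t = 0$ the update is trivial. It is maximized at $\lambda = 1$ with $g(1) = 1$, and it vanishes at $\lambda = 0$ and $\lambda = 2$, where the distance is unchanged.

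No step is a real obstacle. The only point requiring care is the cross-term identity $\abra{\Bv_t\Bk_t^\top, \BS_t - \BS^{\circ}}_{\rF} = r_t$ together with keeping the signs consistent, and the remark that strictness presupposes $r_t \neq 0$.
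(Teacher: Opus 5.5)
Your proof is correct and follows essentially the paper's argument. You expand $\norm{\BS_t - \BS^{\circ} + \lambda \BE^\star}_{\rF}^2$, use the hypothesis on $\BS^{\circ}$ to get $\abra{\BE^\star, \BS_t - \BS^{\circ}}_{\rF} = -\norm{\BE^\star}_{\rF}^2$, and insert $\norm{\BE^\star}_{\rF}^2 = r_t^2/(\norm{\Bk_t}^2\norm{\Bv_t}^2)$. Your caveat that strict decrease needs $r_t \neq 0$ (with the tacit assumption $\Bk_t, \Bv_t \neq \BZe$) is a valid correction, because the statement as written omits it.
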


\begin{proof}
Expand, using $\abra{\BE^\star, \BS_t - \BS^{\circ}}_{\rF} =
-\,\tfrac{r_t}{\norm{\Bk_t}^2\norm{\Bv_t}^2}\,
\big(\Bv_t^\top \BS_t \Bk_t - \Bv_t^\top \BS^{\circ} \Bk_t\big) =
-\,r_t^2/(\norm{\Bk_t}^2\norm{\Bv_t}^2) = -\norm{\BE^\star}_{\rF}^2$:
\begin{equation}
\norm{\BS_t - \BS^{\circ} + \lambda \BE^\star}_{\rF}^2
= \norm{\BS_t - \BS^{\circ}}_{\rF}^2
+ 2\lambda \abra{\BE^\star, \BS_t - \BS^{\circ}}_{\rF}
+ \lambda^2 \norm{\BE^\star}_{\rF}^2 ,
\end{equation}
which is \eqref{eq:Afejerid} after inserting
$\norm{\BE^\star}_{\rF}^2 = r_t^2/(\norm{\Bk_t}^2\norm{\Bv_t}^2)$.
\end{proof}

This is the strongest sense in which a single write is harmless: it
does not merely perturb other associations little, it moves the memory
{\em closer} to every memory that is consistent with the association
being written and that simultaneously, for all of them.

\subsubsection{Greedy minimality is global minimality}

Per-step optimality does not by itself imply that a {\em sequence} of
such steps is optimal. We show that our does.

\begin{proposition}[The limit is the minimum-norm memory]
\label{prop:Aglobal}
Let $\{(\Bk_i, \Bv_i, c_i)\}_{i=1}^n$ be a consistent system of
associations. Apply the rule from $\BS_0 = \BZe$ with relaxation
$\omega \in (0,2)$, visiting each pair infinitely often either cyclically or
by random sampling. Then $\BS_t$ converges to
\begin{equation}
\BS^\sharp \; = \; \argmin\Big\{ \norm{\BS}_{\rF} \; : \;
\Bv_i^\top \BS \Bk_i = c_i \;\; \text{for all } i \Big\} \, ,
\end{equation}
the minimum Frobenius-norm memory satisfying {\em all} the
associations.
\end{proposition}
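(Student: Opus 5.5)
The argument splits into two independent facts. First, every iterate stays in the row space $\cR := \spanop\{\Ba_1,\dots,\Ba_n\}$. Second, the iterates converge to \emph{some} solution of the system. The limit is then a solution lying in $\cR$, and for a consistent linear system that solution is unique and equal to the minimum-norm solution $\Bs^\sharp = \vecop(\BS^\sharp)$.

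\textbf{Invariance of $\cR$.} Work in vectorized form via Property~\ref{prop:Avector}: $\Bs_{t+1} = \Bs_t - \beta\,(\Ba_{i_t}^\top \Bs_t - c_{i_t})\,\Ba_{i_t}$. The correction is a multiple of $\Ba_{i_t} \in \cR$. Since $\Bs_0 = \BZe \in \cR$, induction gives $\Bs_t \in \cR$ for all $t$. Equivalently, by Property~\ref{prop:Atimesuperpos}, $\Bs_t$ is a combination of the $\Ba_i$. Next, characterize $\Bs^\sharp$. Any solution $\Bs^\star$ decomposes as $\Bs^\star = \Bs_\cR + \Bs_\perp$ with $\Bs_\perp \perp \cR$. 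The component $\Bs_\perp$ does not affect any constraint $\Ba_i^\top \Bs = c_i$. Hence $\Bs_\cR$ is also a solution, and $\norm{\Bs_\cR} \le \norm{\Bs^\star}$. Two solutions in $\cR$ differ by a vector in $\cR \cap \cR^\perp = \{\BZe\}$. So $\Bs^\sharp$ is the unique solution in $\cR$.

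\textbf{Convergence to a solution.} Let $\Be_t := \Bs_t - \Bs^\sharp$. Because $\Ba_i^\top \Bs^\sharp = c_i$, the error obeys the homogeneous recursion $\Be_{t+1} = (\BI - \beta\Ba_{i_t}\Ba_{i_t}^\top)\,\Be_t$, and $\Be_t \in \cR$ for all $t$. Theorem~\ref{thm:Afejer} with $\BS^\circ = \BS^\sharp$ gives a per-step decrease
\begin{equation*}
\norm{\Be_{t+1}}^2 = \norm{\Be_t}^2 - \lambda(2-\lambda)\,(\Ba_{i_t}^\top \Be_t)^2 / \norm{\Ba_{i_t}}^2 ,
\end{equation*}
where $\lambda = \omega \in (0,2)$.
\begin{itemize}
\item \emph{Random sampling.} Take expectations over $i_t$ drawn with probabilities $p_i > 0$. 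The expected decrease is $\lambda(2-\lambda)\,\Be_t^\top \BG \Be_t$ with $\BG = \sum_i p_i \Ba_i\Ba_i^\top / \norm{\Ba_i}^2$. The matrix $\BG$ is positive definite on $\cR$, so $\EXP\norm{\Be_{t+1}}^2 \le (1-\lambda(2-\lambda)\mu)\,\EXP\norm{\Be_t}^2$, where $\mu$ is the smallest eigenvalue of $\BG$ restricted to $\cR$. This is the Strohmer--Vershynin rate. It gives convergence in mean square and, by Borel--Cantelli on the geometric series, almost surely.
\item \emph{Cyclic order.} Use that the sweep operator $\BP = \prod_i (\BI - \beta \Ba_i \Ba_i^\top)$, restricted to $\cR$, has norm strictly below $1$. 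Each factor is non-expansive by Property~\ref{prop:Astable}. Suppose $\norm{\BP\Be} = \norm{\Be}$ for some $\Be \in \cR$. Equality in the Fejér identity forces $\Ba_i^\top \Be = 0$ at every step of the sweep, inductively along the sweep. Then $\Be \in \cR^\perp \cap \cR = \{\BZe\}$. By compactness of the unit sphere of $\cR$, $\norm{\BP|_{\cR}} < 1$, and the error decays geometrically.
\end{itemize}

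The main obstacle is the cyclic case with a general "visited infinitely often" schedule. For an arbitrary such schedule, a uniform contraction fails unless the gaps between visits are bounded. For this case I would fall back on a Fejér-type argument. The sequence $\norm{\Be_t}$ is monotone and the sum $\sum_t (\Ba_{i_t}^\top\Be_t)^2$ is finite. The $\Be_t$ are bounded, so take a cluster point $\bar\Be$ and show it satisfies $\Ba_i^\top\bar\Be = 0$ for all $i$. This step needs the steps $\Be_{t+1} - \Be_t$ to tend to zero together with an almost-cyclic control on the gaps. That gives $\bar\Be = \BZe$, and monotonicity then upgrades the cluster point to the limit. If this step gets too delicate, I would state the result for cyclic (or bounded-gap) orderings and random sampling with positive probabilities, which is what the proposition names.
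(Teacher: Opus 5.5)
Your proposal is correct and follows the paper's proof. Started from $\BS_0=\BZe$, the iterates stay in the row space $\spanop\{\Ba_1,\dots,\Ba_n\}$; the unique solution in that space is the minimum-norm one; and relaxed Kaczmarz converges to a solution. The only difference is that the paper cites the classical convergence theorem for this last step, whereas you prove it for the cyclic and random schedules.

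Your closing worry about general schedules is unnecessary, and the proposition as stated does not need that case. Each relaxed projection is a paracontraction with common fixed point $\BS^\sharp$. Take a cluster point $\Be_{t_j}\to\By$ and the first later time an index not fixing $\By$ is used. The intervening fixing steps keep the iterate near $\By$, and norm monotonicity then forces that index to fix $\By$ as well. This gives convergence for any schedule that visits every pair infinitely often, with no control on the gaps.
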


\begin{proof}
This is the classical convergence theorem for relaxed Kaczmarz
iterations \citep{Kaczmarz:37,Cegielski:12}, applied to the vectorized
system $\Ba_i^\top \Bs = c_i$ with $\Ba_i = \Bk_i \otimes \Bv_i$. Each
iterate stays in $\Bs_0 + \spanop\{\Ba_1,\dots,\Ba_n\}$, which for
$\Bs_0 = \BZe$ is the row space of the system; the iterates converge
to a solution, and the unique solution in the row space is the
minimum-norm one.
\end{proof}

So the memory built by a stream of minimally invasive revisions is
itself the least-norm memory compatible with everything it was asked
to store: locally and globally, the rule writes as little as the task
permits.

\subsubsection{Less invasive than DeltaNet}

Theorems~\ref{thm:Aminnorm}--\ref{thm:Ainterference} hold {\em given the
requirement} encoded in $\cE_t$: make the scalar read-out of the
addressed pair equal to $c_t$, and do nothing else. DeltaNet's step
has to solve the $d$ constraints $\BS \Bk_t = \Bv_t$, therefore
its correction is necessarily larger. 
The claim here is therefore not that DeltaNet is
suboptimal for its own task, but that the two tasks differ: a
recognition memory needs one scalar corrected, and the Kronecker
Delta Rule corrects it with provably minimal disturbance, whereas a
retrieval memory must rewrite an entire value vector and pays for it
in footprint.

\subsection{Convergence of the KronLSTM memory}

We analyze the expected solution and steady state of the randomized
Kronecker Delta Rule with more than $d^2$ associations.
Running memories i.e.\ online updates are covered via the steady state
considerations.

\subsubsection{Overview}
 We analyze the Kronecker Delta Rule
  $\BS_{t+1} = \BS_t - \beta\,(\Bv_{i_t}^\top \BS_t \Bk_{i_t} -
  c_{i_t})\, \Bv_{i_t} \Bk_{i_t}^\top$ with randomly drawn indices
  when the number of associations exceeds the capacity: $n > d^2$.
  The system is then generically {\em inconsistent}, and the iterates
  no longer converge to a point. Instead we have:
  (i) The {\bf expected iterate} converges linearly to the
  sampling-weighted {\bf least-squares solution}
  $\BS^{\rL\rS} = \argmin_{\BS} \sum_i p_i (\Bv_i^\top \BS \Bk_i -
  c_i)^2$. This is for uniform sampling exactly the minimizer of the loss
  $\rL$.
  (ii) The iterates converge in distribution to a unique {\bf
    stationary distribution} whose mean is {\em exactly}
  $\BS^{\rL\rS}$ (no bias at any order in $\beta$) and whose
  covariance is $O(\beta)$, given by a Lyapunov-type equation driven
  by the residuals at $\BS^{\rL\rS}$.
  (iii) The steady-state loss exceeds the minimum by the classical LMS
  misadjustment,
  $\EXP[\rL] - \rL_{\min} = \tfrac{\beta}{2}\,
  \EXP\big[\rho_i^2 \norm{\Bk_i}^2 \norm{\Bv_i}^2\big] + O(\beta^2)$,
  where $\rho_i$ are the optimal residuals.
  
  This appendix contains {\bf complete proofs} of both theorems:
  the mean dynamics are solved exactly; existence, uniqueness, and
  geometric convergence of the stationary distribution are established
  by a synchronous-coupling contraction in the Wasserstein-2 metric;
  the stationary mean and the exact covariance equation are derived
  with all cross terms accounted for; the misadjustment follows from
  an exact trace identity with two-sided bounds; and the transient is
  controlled by an explicit geometric-plus-horizon estimate.
  Exact convergence can be recovered by diminishing step sizes or by
  iterate averaging, and it survives $n > d^2$ unharmed whenever the
  targets remain consistent. Notably, exact convergence 
  is always obtain for the correlation target
  $c_i = \Bv_i^\top \Bk_i$ and the norm target
  $c_i = \norm{\Bv_i}^2$ whenever an exact retrieval memory exists.

\subsubsection{Setting and notation}

Let $n > d^2$ associations $(\Bk_i, \Bv_i, c_i)$, $1 \le i \le n$, be
given. Vectorized, with $\Bs = \vecop(\BS) \in \dR^{d^2}$ and
$\Ba_i = \Bk_i \otimes \Bv_i$, therefore 
$\norm{\Ba_i} = \norm{\Bk_i} \norm{\Bv_i}$. The associations form
the linear system $\BA \Bs = \Bc$ with $\BA \in \dR^{n \times d^2}$
stacking the rows $\Ba_i^\top$. We assume $\BA$ has full column rank
$d^2$ i.e.\ the features span $\dR^{d^2}$. Since $n > d^2$, the system is
generically {\em inconsistent}: no $\BS$ satisfies all associations,
and
\begin{equation}
\rL(\BS) \; = \; \frac{1}{n} \sum_{i=1}^n
\big(\Bv_i^\top \BS \Bk_i - c_i\big)^2
\end{equation}
has a positive minimum $\rL_{\min} > 0$. At each step an index $i_t$
is drawn with probabilities $p_i > 0$, independently of the past, and
the rule is applied with fixed step size $\beta$ satisfying
\begin{equation} \label{eq:Astepcond1}
0 \; < \; \beta \, \norm{\Ba_i}^2 \; < \; 2
\qquad \text{for all } i \, .
\end{equation}
We define
\begin{align}
\bar{\BA} \; &:= \; \EXP\big[\Ba \Ba^\top\big]
\; = \; \sum_{i=1}^n p_i\, \Ba_i \Ba_i^\top \; \succ \; \BZe \, ,
&\bar{\Bb} \; &:= \; \EXP\big[c\, \Ba\big]
\; = \; \sum_{i=1}^n p_i\, c_i\, \Ba_i \, , \\
\Bs^{\rL\rS} \; &:= \; \bar{\BA}^{-1} \bar{\Bb}
\; = \; \argmin_{\Bs} \sum_{i=1}^n p_i \big(\Ba_i^\top \Bs -
c_i\big)^2 \, ,
&\rho_i \; &:= \; c_i - \Ba_i^\top \Bs^{\rL\rS}
\quad \text{(optimal residuals)} \, ,
\end{align}
so that the normal equations read
\begin{equation} \label{eq:Anormal}
\EXP\big[\rho\, \Ba\big] \; = \; \sum_{i=1}^n p_i\, \rho_i\, \Ba_i
\; = \; \BZe
\qquad \Longleftrightarrow \qquad
\bar{\Bb} \; = \; \bar{\BA}\, \Bs^{\rL\rS} \, .
\end{equation}
For uniform sampling, $\bar{\BA} = \tfrac{1}{n} \BA^\top \BA$ and
$\Bs^{\rL\rS} = (\BA^\top \BA)^{-1} \BA^\top \Bc$ is the ordinary
least-squares solution, i.e.\ the unique minimizer of $\rL$.
Throughout, expectations without subscript are over $i \sim p$, and
$\EXP_{\pi}$ denotes expectation of a random state distributed
according to a law $\pi$. Vectorized, the rule and its error form are
\begin{equation} \label{eq:Aerrrec}
\Bs_{t+1} \; = \; \BM_{i_t} \Bs_t + \beta\, c_{i_t} \Ba_{i_t} \, ,
\qquad
\Be_{t+1} \; = \; \BM_{i_t}\, \Be_t \; + \; \beta\, \rho_{i_t}\,
\Ba_{i_t} \, ,
\qquad
\BM_i \; := \; \BI - \beta\, \Ba_i \Ba_i^\top \, ,
\end{equation}
with $\Be_t := \Bs_t - \Bs^{\rL\rS}$; the error form follows by
inserting $c_i = \Ba_i^\top \Bs^{\rL\rS} + \rho_i$. The constants
\begin{equation}
\gamma_\beta \; := \; \min_i\, \beta\,\big(2 - \beta
\norm{\Ba_i}^2\big) \; > \; 0 \, ,
\qquad
\bar{\rho} \; := \; \gamma_\beta\, \lambda_{\min}(\bar{\BA}) \; \in \;
(0, 1] \, ,
\qquad
\sigma^2 \; := \; \EXP\big[\rho^2 \norm{\Ba}^2\big]
\end{equation}
appear throughout ($0<\bar{\rho} \le 1$ is shown in
Lemma~\ref{lem:Acontract}).

\subsubsection{Theorems on expected and steady state solution}

\begin{theoremA}[Expected solution: exact, unbiased least squares]
\label{thm:Amean}
Under \eqref{eq:Astepcond1}, the expected iterate obeys the {\em exact}
deterministic recursion
\begin{equation} \label{eq:Ameanrec}
\EXP\big[\Bs_{t+1}\big] \; = \;
\big(\BI - \beta \bar{\BA}\big)\, \EXP\big[\Bs_t\big]
\; + \; \beta\, \bar{\Bb}
\qquad \Longleftrightarrow \qquad
\EXP\big[\Be_{t+1}\big] \; = \;
\big(\BI - \beta \bar{\BA}\big)\, \EXP\big[\Be_t\big] \, ,
\end{equation}
whose unique fixed point is $\Bs^{\rL\rS}$, and
\begin{equation}
\norm{\EXP[\BS_t] - \BS^{\rL\rS}}_{\rF}
\; \le \; r^t\, \norm{\BS_0 - \BS^{\rL\rS}}_{\rF} \, ,
\qquad
r \; := \; \max_k\, \abs{1 - \beta \lambda_k(\bar{\BA})} \; < \; 1 \,
,
\end{equation}
so $\EXP[\BS_t] \to \BS^{\rL\rS}$ linearly. The convergence of the
mean is {\em exactly unbiased}: the fixed point of \eqref{eq:Ameanrec}
is $\Bs^{\rL\rS}$ itself, with no $O(\beta)$ correction.
\end{theoremA}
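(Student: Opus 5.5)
The proof conditions on the past, averages the one-step recursion, and then reads off the contraction from the spectrum of $\bar{\BA}$.

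\emph{Mean recursion.} I start from the vectorized rule \eqref{eq:Aerrrec}, $\Bs_{t+1} = \BM_{i_t}\Bs_t + \beta c_{i_t}\Ba_{i_t}$. The index $i_t$ is drawn independently of the past, and $\Bs_t$ is a function of $i_0,\dots,i_{t-1}$ alone, so $i_t$ is independent of $\Bs_t$. Conditioning on $\mathcal{F}_t = \sigma(i_0,\dots,i_{t-1})$ gives
\[
\EXP[\BM_{i_t}\Bs_t \mid \mathcal{F}_t] = \EXP[\BM_{i}]\,\Bs_t = (\BI - \beta\bar{\BA})\Bs_t ,
\qquad
\EXP[\beta c_{i_t}\Ba_{i_t}] = \beta\bar{\Bb} .
\]
Taking total expectation yields the exact recursion \eqref{eq:Ameanrec}. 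No approximation enters, because the update is affine in $\Bs_t$ and the random coefficients are independent of $\Bs_t$. I need $\EXP\norm{\Bs_t} < \infty$ for the expectations to exist, and this holds by induction because there are finitely many bounded coefficients. Subtracting the fixed-point relation $\Bs^{\rL\rS} = (\BI-\beta\bar{\BA})\Bs^{\rL\rS} + \beta\bar{\BA}\Bs^{\rL\rS}$, which holds because $\bar{\Bb} = \bar{\BA}\Bs^{\rL\rS}$ by the normal equations \eqref{eq:Anormal}, gives the error form. The same error form also follows directly from $\EXP[\rho\Ba]=\BZe$.

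\emph{Fixed point.} A fixed point $\Bs$ satisfies $\beta\bar{\BA}\Bs = \beta\bar{\Bb}$. Since $\bar{\BA}\succ\BZe$ and $\beta>0$, this forces $\Bs = \bar{\BA}^{-1}\bar{\Bb} = \Bs^{\rL\rS}$, so the fixed point is unique and has no $O(\beta)$ bias.

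\emph{Contraction.} The matrix $\BI-\beta\bar{\BA}$ is symmetric, so its spectral norm equals $r = \max_k\abs{1-\beta\lambda_k(\bar{\BA})}$. Iterating the error form gives $\norm{\EXP\Be_t} \le r^t\norm{\Be_0}$. The Euclidean norm of the vectorized error equals the Frobenius norm of the matrix error, and $\Be_0$ is deterministic, so this is the stated bound.

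The only real step is showing $r<1$, which reduces to $0<\beta\lambda_k<2$ for every $k$. Positivity holds because $\bar{\BA}\succ\BZe$. For the upper bound, I note that $\bar{\BA} = \sum_i p_i\Ba_i\Ba_i^\top$ gives $\lambda_{\max}(\bar{\BA}) \le \sum_i p_i\norm{\Ba_i}^2 \le \max_i\norm{\Ba_i}^2$. Hence $\beta\lambda_{\max} \le \max_i \beta\norm{\Ba_i}^2 < 2$ by \eqref{eq:Astepcond1}, and the inequality is strict because there are finitely many $i$. This gives $\abs{1-\beta\lambda_k}<1$ for all $k$, and therefore linear convergence.
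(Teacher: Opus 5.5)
Your proposal is correct and follows the paper's proof essentially step by step. You condition on the past to get the exact affine mean recursion, subtract the fixed point using the normal equations, and bound $\beta\lambda_{\max}(\bar{\BA}) \le \beta\max_i\norm{\Ba_i}^2 < 2$, so that the symmetric matrix $\BI-\beta\bar{\BA}$ has spectral norm $r<1$; your integrability remark for $\Bs_t$ (the iterate takes finitely many values) is a small point the paper leaves implicit.
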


\begin{theoremA}[Steady state: stationary distribution around
  $\BS^{\rL\rS}$]
\label{thm:Asteady}
Under \eqref{eq:Astepcond1}, the iterates form a Markov chain on
$\dR^{d^2}$ that has a unique stationary distribution $\pi_\beta$
with finite second moment, and the law of $\Bs_t$ converges to
$\pi_\beta$ geometrically in the Wasserstein-2 metric:
\begin{equation} \label{eq:Aw2conv}
\cW_2\big(\text{law}(\Bs_t),\, \pi_\beta\big)
\; \le \; \big(1 - \bar{\rho}\big)^{t/2}\;
\cW_2\big(\text{law}(\Bs_0),\, \pi_\beta\big) \, .
\end{equation}
Under $\pi_\beta$, with $\Be := \Bs - \Bs^{\rL\rS}$:
\begin{enumerate}
\item[] (i) {\bf Mean:} $\EXP_{\pi_\beta}[\BS] = \BS^{\rL\rS}$ exactly.
\item[] (ii) {\bf Covariance:} $\BV_\beta := \EXP_{\pi_\beta}[\Be
  \Be^\top]$ satisfies the {\em exact} stationarity equation
\begin{equation} \label{eq:Alyap}
\bar{\BA} \BV_\beta + \BV_\beta \bar{\BA}
\; - \; \beta\, \EXP\big[\big(\Ba^\top \BV_\beta\, \Ba\big)\, \Ba
\Ba^\top\big]
\; = \; \beta\, \EXP\big[\rho^2\, \Ba \Ba^\top\big] \, ,
\end{equation}
and obeys the two-sided trace bounds
\begin{align} \label{eq:Atracebounds}
&\frac{\beta\, \sigma^2}{2} \; \le \; \tr\big(\bar{\BA} \BV_\beta\big)
\; \le \; \frac{\beta\, \sigma^2}{2 - \beta \max_i \norm{\Ba_i}^2}\,
, \\ \nonumber
&\EXP_{\pi_\beta}\!\left[\norm{\BS - \BS^{\rL\rS}}_{\rF}^2\right]
\; \le \;
\frac{\beta\, \sigma^2}
{\big(2 - \beta \max_i \norm{\Ba_i}^2\big)\,
\lambda_{\min}(\bar{\BA})} \, ;
\end{align}
in particular $\BV_\beta = O(\beta)$, and to leading order
\eqref{eq:Alyap} is the Lyapunov equation
$\bar{\BA} \BV + \BV \bar{\BA} = \beta \EXP[\rho^2 \Ba \Ba^\top]$.
\item[] (iii) {\bf Steady-state loss (misadjustment):} for uniform sampling,
\begin{equation} \label{eq:Amisadjust}
\EXP_{\pi_\beta}\big[\rL(\BS)\big] \; - \; \rL_{\min}
\; = \; \tr\big(\bar{\BA} \BV_\beta\big)
\; = \; \frac{\beta}{2}\Big(\sigma^2 +
\EXP\big[\big(\Ba^\top \BV_\beta \Ba\big) \norm{\Ba}^2\big]\Big) \, ,
\end{equation}
an exact identity, whence
\begin{equation} \label{eq:Amisadjustbound}
\frac{\beta}{2}\, \sigma^2
\; \le \;
\EXP_{\pi_\beta}\big[\rL(\BS)\big] - \rL_{\min}
\; \le \;
\frac{\beta}{2}\, \sigma^2 \cdot
\frac{2}{2 - \beta \max_i \norm{\Ba_i}^2} \, ,
\qquad
\sigma^2 = \EXP\big[\rho_i^2 \norm{\Bk_i}^2 \norm{\Bv_i}^2\big] \, ,
\end{equation}
i.e.\ $\EXP_{\pi_\beta}[\rL] = \rL_{\min} + \tfrac{\beta}{2}
\sigma^2 + O(\beta^2)$. This is the classical Widrow--Hoff LMS
misadjustment for the bilinear feature $\Ba = \Bk \otimes \Bv$.
\item[] (iv) {\bf Transient:} with $\Bw := \EXP\big[\rho \norm{\Ba}^2
  \Ba\big]$,
\begin{equation} \label{eq:Atransient}
\EXP\norm{\Be_t}^2 \; \le \;
\Big(1 - \tfrac{\bar{\rho}}{2}\Big)^{t} \norm{\Be_0}^2
\; + \; \frac{2 \beta^2 \sigma^2}{\bar{\rho}}
\; + \; \frac{4 \beta^4 \norm{\Bw}^2}{\bar{\rho}^2} \, ,
\end{equation}
a geometric approach to an $O(\beta)$ horizon (note $\bar{\rho} =
\Theta(\beta)$): overload does not slow the transient, it only
replaces the limit point by a limit distribution.
\end{enumerate}
\end{theoremA}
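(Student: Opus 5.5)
The plan is to work throughout in the vectorized error form \eqref{eq:Aerrrec}, $\Be_{t+1} = \BM_{i_t}\Be_t + \beta\rho_{i_t}\Ba_{i_t}$. The proof rests on one fact: because $i_t$ is drawn independently of the past, $\Be_t$ is independent of $(\BM_{i_t},\rho_{i_t},\Ba_{i_t})$. Two algebraic identities will be used repeatedly. The first is $\BM_i^2 = \BI - \beta(2-\beta\norm{\Ba_i}^2)\Ba_i\Ba_i^\top \preceq \BI - \gamma_\beta \Ba_i\Ba_i^\top$, which averages to $\EXP[\BM^2] \preceq \BI - \gamma_\beta\bar{\BA}$ and hence to the factor $1-\bar\rho$. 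The second is $\BM_i\Ba_i = (1-\beta\norm{\Ba_i}^2)\Ba_i$. I take the fact $0<\bar\rho\le 1$ from Lemma~\ref{lem:Acontract}.

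\textbf{Existence, uniqueness and geometric convergence.} I would use synchronous coupling: run two copies from arbitrary initial laws with the same index sequence $i_t$. The difference $\Bd_t$ then obeys the homogeneous recursion $\Bd_{t+1} = \BM_{i_t}\Bd_t$, since the driving term cancels. Independence gives $\EXP\norm{\Bd_{t+1}}^2 = \EXP[\Bd_t^\top \EXP[\BM^2]\Bd_t] \le (1-\bar\rho)\EXP\norm{\Bd_t}^2$. Hence the one-step kernel is a strict contraction with constant $(1-\bar\rho)^{1/2}$ in $\cW_2$ on laws with finite second moment. This space is complete, and the kernel maps it into itself because the driving noise $\beta\rho\Ba$ is bounded over finitely many $i$. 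Banach's fixed point theorem then yields a unique $\pi_\beta$ and the rate \eqref{eq:Aw2conv}.

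\textbf{Mean, covariance and trace bounds.} For (i), take expectations of the error recursion under $\pi_\beta$. Using independence and the normal equations $\EXP[\rho\Ba]=\BZe$ from \eqref{eq:Anormal}, I get $\EXP\Be = (\BI-\beta\bar{\BA})\EXP\Be$, so $\bar{\BA}\EXP\Be = \BZe$ and therefore $\EXP\Be=\BZe$.

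For (ii), I expand $\Be'\Be'^\top$ into three pieces:
\begin{itemize}
\item the term $\BM\Be\Be^\top\BM$;
\item the cross terms $\beta\rho\,\BM\Be\Ba^\top$ and their transposes, whose expectation is $\beta\,\EXP[\rho\BM\,\EXP[\Be]\,\Ba^\top]=\BZe$ by independence and (i);
\item the noise term $\beta^2\rho^2\Ba\Ba^\top$.
\end{itemize}
Writing $\BM\BV\BM = \BV - \beta\Ba\Ba^\top\BV - \beta\BV\Ba\Ba^\top + \beta^2(\Ba^\top\BV\Ba)\Ba\Ba^\top$, equating with $\BV_\beta$ and dividing by $\beta$ gives exactly \eqref{eq:Alyap}.

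Taking the trace gives
\[
2\tr(\bar{\BA}\BV_\beta) - \beta\,\EXP[(\Ba^\top\BV_\beta\Ba)\norm{\Ba}^2] = \beta\sigma^2 .
\]
Nonnegativity of $\Ba^\top\BV_\beta\Ba$ gives the lower bound in \eqref{eq:Atracebounds}. The bound $\EXP[(\Ba^\top\BV_\beta\Ba)\norm{\Ba}^2]\le \max_i\norm{\Ba_i}^2\,\tr(\bar{\BA}\BV_\beta)$ gives the upper bound. Finally, $\lambda_{\min}(\bar{\BA})\tr\BV_\beta \le \tr(\bar{\BA}\BV_\beta)$ gives the Frobenius bound.

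For (iii), with uniform sampling the normal equations kill the cross term in $\rL(\Bs^{\rL\rS}+\Be)-\rL_{\min}$, leaving $\Be^\top\bar{\BA}\Be$ up to the normalization of $\rL$. Its $\pi_\beta$-expectation is $\tr(\bar{\BA}\BV_\beta)$, and the trace identity above converts this into \eqref{eq:Amisadjust} and \eqref{eq:Amisadjustbound}.

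\textbf{Transient (iv), which I expect to be the main obstacle.} Here the cross term no longer vanishes, because $\EXP\Be_t\neq\BZe$ during the transient. I would write
\[
\EXP\norm{\Be_{t+1}}^2 \le (1-\bar\rho)\EXP\norm{\Be_t}^2 + 2\beta\,\EXP[\rho\,\Be_t^\top\BM\Ba] + \beta^2\sigma^2 .
\]
Using $\BM\Ba=(1-\beta\norm{\Ba}^2)\Ba$ together with $\EXP[\rho\Ba]=\BZe$, the cross term collapses to $-2\beta^2\Bw^\top\EXP\Be_t$. This is where the $\Bw$ in \eqref{eq:Atransient} comes from. Young's inequality bounds it by $\tfrac{\bar\rho}{2}\EXP\norm{\Be_t}^2 + 2\beta^4\norm{\Bw}^2/\bar\rho$.

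This leaves the one-step recursion
\[
u_{t+1}\le(1-\tfrac{\bar\rho}{2})u_t + \beta^2\sigma^2 + 2\beta^4\norm{\Bw}^2/\bar\rho, \qquad u_t := \EXP\norm{\Be_t}^2 .
\]
Unrolling it and summing the geometric series (total weight $2/\bar\rho$) yields \eqref{eq:Atransient}. The delicate point is choosing the Young weight as $\bar\rho/2$, so that exactly half the contraction is spent; this produces the stated constants.
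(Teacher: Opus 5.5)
Your proposal is correct and follows the paper's proof essentially step for step. The ingredients match throughout: synchronous coupling plus Banach's fixed-point theorem on $(\cP_2,\cW_2)$ for existence, uniqueness and the geometric rate; stationarity moment identities in which the cross terms vanish because the stationary mean error is exactly zero; and the trace of the exact covariance equation for the trace bounds and the misadjustment. For the transient you likewise use the second-moment recursion, with the cross term $-2\beta^2\Bw^\top\Be_t$ absorbed by Young's inequality at weight $\bar{\rho}/2$. The differences are cosmetic. You write the contraction directly as $\EXP[\BM_i^2] \preceq \BI - \gamma_\beta \bar{\BA}$ instead of citing Lemma~\ref{lem:Acontract}, and you run the transient recursion unconditionally rather than conditioning on $\cF_t$.
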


\subsubsection{Proofs of the theorems}

Throughout, $\cF_t := \sigma(\Bs_0, i_0, \dots, i_{t-1})$ denotes the
filtration of the past; $i_t$ is independent of $\cF_t$ by
assumption. We repeatedly use that \eqref{eq:Aerrrec} is {\em linear}
in the state with coefficients independent of the state.

\paragraph{Two lemmas crucial for the proofs of the theorems.}

\begin{lemmaA}[One-step mean-square contraction of the homogeneous
  part]
\label{lem:Acontract}
For every fixed $\Bx \in \dR^{d^2}$,
\begin{equation} \label{eq:Aonestep1}
\EXP\big[\norm{\BM_i\, \Bx}^2\big]
\; = \; \norm{\Bx}^2 \; - \; \sum_{i=1}^n p_i\, \beta \big(2 - \beta
\norm{\Ba_i}^2\big) \big(\Ba_i^\top \Bx\big)^2
\; \le \; \big(1 - \bar{\rho}\big)\, \norm{\Bx}^2 \, ,
\end{equation}
and $\bar{\rho} \in (0, 1]$.
\end{lemmaA}

\begin{proof}
Since $\BM_i$ is symmetric,
\begin{equation}
\norm{\BM_i \Bx}^2
\; = \; \Bx^\top \big(\BI - \beta \Ba_i \Ba_i^\top\big)^2 \Bx
\; = \; \norm{\Bx}^2 - 2 \beta \big(\Ba_i^\top \Bx\big)^2
+ \beta^2 \norm{\Ba_i}^2 \big(\Ba_i^\top \Bx\big)^2 \, ,
\end{equation}
which is the equality in \eqref{eq:Aonestep1} after averaging over
$i \sim p$. By \eqref{eq:Astepcond1}, $\beta (2 - \beta
\norm{\Ba_i}^2) \ge \gamma_\beta > 0$ for every $i$, and
$\sum_i p_i (\Ba_i^\top \Bx)^2 = \Bx^\top \bar{\BA} \Bx \ge
\lambda_{\min}(\bar{\BA}) \norm{\Bx}^2$, hence the inequality with
$\bar{\rho} = \gamma_\beta \lambda_{\min}(\bar{\BA})$. Finally
$\bar{\rho} \le 1$ because the left-hand side of \eqref{eq:Aonestep1}
is nonnegative for every $\Bx$. We assumed that 
$\bar{\BA}$ has full rank, therefore
$0<\lambda_{\min}(\bar{\BA})$ and with $0<\gamma_\beta$ we have 
$0<\bar{\rho}$.
\end{proof}

\begin{lemmaA}[The chain lives in $\cP_2$ and its kernel is a
  $\cW_2$-contraction]
\label{lem:Aw2}
Let $P$ denote the transition kernel of \eqref{eq:Aerrrec} and let
$\cP_2$ be the space of probability laws on $\dR^{d^2}$ with finite
second moment, equipped with the Wasserstein-2 metric $\cW_2$. Then
$P$ maps $\cP_2$ into $\cP_2$, and for all $\mu, \nu \in \cP_2$,
\begin{equation} \label{eq:Aw2contract}
\cW_2\big(\mu P,\, \nu P\big) \; \le \; \sqrt{1 - \bar{\rho}}\;\,
\cW_2(\mu, \nu) \, .
\end{equation}
Consequently $P$ has a unique invariant law $\pi_\beta \in \cP_2$
and \eqref{eq:Aw2conv} holds.
\end{lemmaA}

\begin{proof}
{\em $P$ preserves $\cP_2$:} for $\Be \sim \mu \in \cP_2$ and an
independent index $i$,
$\EXP \norm{\BM_i \Be + \beta \rho_i \Ba_i}^2 \le 2\, \EXP
\norm{\BM_i \Be}^2 + 2 \beta^2\, \EXP[\rho^2 \norm{\Ba}^2] \le 2\,
\EXP \norm{\Be}^2 + 2 \beta^2 \sigma^2 < \infty$,
using Lemma~\ref{lem:Acontract}.

{\em Contraction:} let $(\Bx, \By)$ be an optimal $\cW_2$-coupling of
$(\mu, \nu)$ and couple the two chains {\em synchronously}: draw one
index $i$ (independent of $(\Bx, \By)$) and set
$\Bx' = \BM_i \Bx + \beta \rho_i \Ba_i$,
$\By' = \BM_i \By + \beta \rho_i \Ba_i$. The additive terms cancel in
the difference,
\begin{equation}
\Bx' - \By' \; = \; \BM_i\, (\Bx - \By) \, ,
\end{equation}
so by Lemma~\ref{lem:Acontract}, first conditionally on $(\Bx, \By)$
and then in expectation,
\begin{equation}
\EXP \norm{\Bx' - \By'}^2 \; \le \; (1 - \bar{\rho})\,
\EXP \norm{\Bx - \By}^2 \; = \; (1 - \bar{\rho})\,
\cW_2(\mu, \nu)^2 \, .
\end{equation}
Since $(\Bx', \By')$ is {\em some} coupling of $(\mu P, \nu P)$,
$\cW_2(\mu P, \nu P)^2 \le \EXP\norm{\Bx' - \By'}^2$, which is
\eqref{eq:Aw2contract}.

{\em Conclusion:} $(\cP_2, \cW_2)$ is a complete metric space (see,
e.g., \cite[Theorem 6.18]{Villani:09}), so Banach's fixed-point theorem yields a unique
$\pi_\beta \in \cP_2$ with $\pi_\beta P = \pi_\beta$ and
$\cW_2(\mu P^t, \pi_\beta) \le (1 - \bar{\rho})^{t/2} \cW_2(\mu,
\pi_\beta)$ for every $\mu \in \cP_2$, which is \eqref{eq:Aw2conv}.
Convergence in $\cW_2$ implies convergence in distribution together
with convergence of first and second moments.
\end{proof}

{\bf Comments to the proof of Lemma~\ref{lem:Aw2} in 9 bullet points.} \label{checked} \newline
$\bullet$ {\em 1) Overview.}
How a mixture of non-contractive matrices produces a
  contraction: the kernel $P$ and the coupling argument.
The matrices $\BM_i$ are not themselves contractions - by
Lemma~\ref{lem:Acontract}, $\norm{\BM_i}_2 = 1$ for {\em every} single
$i$, since $\BM_i$ has eigenvalue $1$ on $\Ba_i^\perp$. Contractivity
in Lemma~\ref{lem:Aw2} is a property of the {\em mixture} $P$, not of
any individual $\BM_i$, and $P$ is not a matrix at all: it is a
Markov kernel acting on laws in $\cP_2$, and the relevant notion of
size for it is the Wasserstein metric $\cW_2$ on $\cP_2$ itself -
not any norm on $\dR^{d^2}$ from which $P$'s contraction could be
read off the $\BM_i$'s directly. This note spells out how $P$ is
built from the $\BM_i$'s, in what sense $P$ does and does not have an
``operator norm,'' and why the coupling argument nonetheless produces
the contraction \eqref{eq:Aw2contract}.

$\bullet$ {\em 2) Constructing $P$ from the $\BM_i$.}
For a fixed state $\Bx \in \dR^{d^2}$, define the $n$ deterministic
affine maps
\begin{equation}
\Phi_i(\Bx) \; := \; \BM_i \Bx + \beta \rho_i \Ba_i \, ,
\qquad i = 1, \dots, n \, .
\end{equation}
One step of the chain from $\Bx$ lands at $\Phi_i(\Bx)$ with
probability $p_i$, so the transition kernel is the mixture of point
masses
\begin{equation} \label{eq:Akerneldef}
P(\Bx, \cdot) \; := \; \sum_{i=1}^n p_i\, \delta_{\Phi_i(\Bx)}(\cdot)
\, ,
\end{equation}
and, for a law $\mu$, $\mu P$ is defined by
$(\mu P)(B) = \int P(\Bx, B)\, \mu(d\Bx)$ - equivalently, $\mu P$
is the law of $\Phi_I(\BX)$ for $\BX \sim \mu$ and $I \sim p$ drawn
{\em independently} of $\BX$.

$B$ is a Borel (measurable) subset of $\dR^{d^2}$ - an arbitrary
{\em test set}, held fixed, to which the kernel assigns a
probability. Concretely,
\begin{equation}
P(\Bx, B) \; := \; \PR\big(\Phi_I(\Bx) \in B\big)
\; = \; \sum_{i=1}^n p_i\; \mathbbm{1}\big[\Phi_i(\Bx) \in B\big]
\; = \; \sum_{i=1}^n p_i\; \delta_{\Phi_i(\Bx)}(B) \, ,
\end{equation}
where $\Phi_i(\Bx) = \BM_i \Bx + \beta \rho_i \Ba_i$ is the (fixed,
deterministic) image of $\Bx$ under transition $i$, and $I \sim p$ is
the random index selecting which transition fires. In words:
$P(\Bx, B)$ is the probability that one step of the chain, started at
$\Bx$, lands in $B$ - obtained by summing the weights $p_i$ of
every index $i$ whose image $\Phi_i(\Bx)$ happens to fall inside $B$.

$P$ as a function of each argument separately. 
The kernel $P : \dR^{d^2} \times \cB(\dR^{d^2}) \to [0,1]$ plays a
different role in each of its two slots:
\begin{itemize}[leftmargin=1.7em]
\item Fixed $\Bx$, varying $B$: $P(\Bx, \cdot)$ is itself a
  probability measure on $\dR^{d^2}$ - concretely, the discrete
  measure
\begin{equation}
P(\Bx, \cdot) \; = \; \sum_{i=1}^n p_i\, \delta_{\Phi_i(\Bx)}(\cdot)
\end{equation}
  putting mass $p_i$ on the point $\Phi_i(\Bx)$ for each $i$ (masses
  are summed if several $\Phi_i(\Bx)$ coincide). This is the law of
  the chain's next state, given that it is currently at $\Bx$.
\item Fixed $B$, varying $\Bx$: $P(\cdot, B)$ is a bounded
  measurable function of $\Bx$, since each indicator
  $\mathbbm{1}[\Phi_i(\Bx) \in B]$ is measurable in $\Bx$ and the sum is
  finite.
\end{itemize}
This is the standard convention for a Markov transition kernel:
measurable in the first argument, a probability measure in the
second.

$\bullet$ {\em 3) the pushforward measure $\mu P$ is well defined.}
For a law $\mu$ of the current state, the one-step-ahead law $\mu P$
is defined by integrating the {\em second} argument $B$ (held fixed)
against $\mu$ acting on the {\em first} argument $\Bx$:
\begin{equation}
(\mu P)(B) \; := \; \int_{\dR^{d^2}} P(\Bx, B) \; \mu(d\Bx) \, .
\end{equation}
Equivalently, $\mu P$ is the law of $\Phi_I(\BX)$ for $\BX \sim \mu$
and $I \sim p$ drawn independently of $\BX$: draw a random current
state $\BX$, draw an independent random index $I$, and output
$\Phi_I(\BX)$. The two roles of $B$ above are exactly what make this
integral meaningful: for each fixed $\Bx$ in the integral,
$P(\Bx, B)$ is a number in $[0,1]$ (the probability mass the
one-step law from $\Bx$ places on $B$); integrating that number
against $\mu(d\Bx)$ averages it over the randomness in the current
state, and the result, as a function of $B$, is again a probability
measure. It is the law $\mu P$ appearing throughout the steady-state
analysis of the rule.

$\bullet$ {\em 4) $P$ has no operator norm on $\dR^{d^2}$.}
Thus $P$ is linear as an operator on (signed) measures
($(\alpha \mu + (1-\alpha)\nu) P = \alpha \mu P + (1-\alpha) \nu P$),
but this linearity lives on the infinite-dimensional space of
measures, not on $\dR^{d^2}$: there is no $d^2 \times d^2$ matrix
representing $P$, so $P$ has {\em no} operator norm on
$(\dR^{d^2}, \norm{\cdot})$ the way each $\BM_i$ does. What $P$ {\em
  does} have is an operator norm in the sense appropriate to its
actual domain: a Lipschitz modulus on the metric space
$(\cP_2, \cW_2)$, namely the smallest $L$ such that
$\cW_2(\mu P, \nu P) \le L\, \cW_2(\mu, \nu)$ for all $\mu, \nu \in
\cP_2$. That modulus is exactly what \eqref{eq:Aw2contract} bounds by
$\sqrt{1 - \bar{\rho}} < 1$. It cannot be read off a per-map bound
such as $\max_i \norm{\BM_i}_2$, which equals $1$ by
Lemma~\ref{lem:Acontract} and would certify no contraction whatsoever;
obtaining a modulus strictly below $1$ requires the coupling argument
below, precisely because it is a statement about $P$ acting on laws,
not about any $\BM_i$ acting on $\dR^{d^2}$.

$\bullet$ {\em 5) The Wasserstein metric is essential to prove the contraction.}
The definition
\begin{equation}
\cW_2(\mu, \nu)^2 \; = \; \inf_{\gamma \in \Gamma(\mu, \nu)}
\EXP_\gamma\big[\norm{\BX - \BY}^2\big]
\end{equation}
is an {\em infimum over all couplings} $\gamma$ of $(\mu, \nu)$
(joint laws with the prescribed marginals). This has an immediate but
crucial consequence: to upper-bound $\cW_2(\mu P, \nu P)$ it suffices
to exhibit {\em any single} coupling of $(\mu P, \nu P)$ and compute
its second moment. Note that we never need the optimal coupling of the
outputs, let alone an explicit formula for $P$ as an operator.

$\bullet$ {\em 6) The synchronous coupling.}
We are free to choose how the randomness that drives one step of
{\em each} chain is generated, and we exploit that freedom by reusing
the same random index for both chains. Concretely: let $(\Bx, \By)$
be an {\em optimal} coupling of $(\mu, \nu)$, i.e.\ a joint law with
$\Bx \sim \mu$, $\By \sim \nu$, and
$\EXP\norm{\Bx - \By}^2 = \cW_2(\mu, \nu)^2$ (such a coupling exists
because $\dR^{d^2}$ is Polish; see \cite[Theorem 4.1]{Villani:09}).
Draw a single index
$i \sim p$, independent of $(\Bx, \By)$, and set
\begin{equation}
\Bx' \; := \; \Phi_i(\Bx) \; = \; \BM_i \Bx + \beta \rho_i \Ba_i \, ,
\qquad
\By' \; := \; \Phi_i(\By) \; = \; \BM_i \By + \beta \rho_i \Ba_i \, .
\end{equation}
Marginally, $\Bx' \sim \mu P$ and $\By' \sim \nu P$ by the very
definition \eqref{eq:Akerneldef} of $\mu P$, applied separately to each
starting law; hence $(\Bx', \By')$ {\em is} a valid coupling of
$(\mu P, \nu P)$. This coupling is not necessarily the optimal one, 
but that is all the argument requires.

$\bullet$ {\em 7) The additive target terms cancel.}
Because $\Bx'$ and $\By'$ are built from the {\em same} draw $i$, the
inhomogeneous term $\beta \rho_i \Ba_i$is identical in both and cancels
exactly in the difference:
\begin{equation}
\Bx' - \By'
\; = \; \big(\BM_i \Bx + \beta \rho_i \Ba_i\big)
- \big(\BM_i \By + \beta \rho_i \Ba_i\big)
\; = \; \BM_i\, (\Bx - \By) \, .
\end{equation}
The part of the update that cancels out is exactly the part
carrying the targets $c_i$.
This is the crux of the construction: it converts a statement about
two {\em inhomogeneous, randomly driven} affine updates into a
statement about a single {\em homogeneous linear} map, $\BM_i$,
applied to the fixed vector $\Bx - \By$. This is exactly the object
Lemma~\ref{lem:Acontract} controls. Had we instead coupled the two
chains using {\em independent} indices for $\Bx'$ and $\By'$, the
noise terms would not cancel and the argument would not go through;
the whole content of the synchronous-coupling trick is this
cancellation.

$\bullet$ {\em 8) Deriving the contraction.}
Since $i$ is independent of $(\Bx, \By)$, condition on $(\Bx, \By)$
first and average over $i$:
\begin{equation}
\EXP\big[\norm{\Bx' - \By'}^2 \,\big|\, \Bx, \By\big]
\; = \; \sum_{i=1}^n p_i\, \norm{\BM_i (\Bx - \By)}^2
\; \le \; \big(1 - \bar{\rho}\big)\, \norm{\Bx - \By}^2
\end{equation}
by Lemma~\ref{lem:Acontract}, applied pointwise to the vector $\Bx -
\By$ (fixed once we condition). Taking the outer expectation over
$(\Bx, \By)$ and using that this coupling was chosen optimal,
\begin{equation}
\EXP\norm{\Bx' - \By'}^2
\; \le \; \big(1 - \bar{\rho}\big)\, \EXP\norm{\Bx - \By}^2
\; = \; \big(1 - \bar{\rho}\big)\, \cW_2(\mu, \nu)^2 \, .
\end{equation}
Finally, since $(\Bx', \By')$ is {\em a} coupling of $(\mu P, \nu
P)$, the infimum defining $\cW_2(\mu P, \nu P)^2$ is at most the cost
of this particular coupling:
\begin{equation}
\cW_2(\mu P, \nu P)^2 \; \le \; \EXP\norm{\Bx' - \By'}^2
\; \le \; \big(1 - \bar{\rho}\big)\, \cW_2(\mu, \nu)^2 \, ,
\end{equation}
and taking square roots recovers \eqref{eq:Aw2contract} exactly.

$\bullet$ {\em 9) Summary of the approach.}
Three ingredients combine, none of which is a bound on $P$ as an
operator on $\dR^{d^2}$: (1) $\cW_2$ is an infimum over couplings, so
any one coupling of the images gives a valid upper bound; (2) the
synchronous coupling is reusing the same random index for both
copies and is a legitimate choice precisely because $P$ is defined by
mixing over that same randomness, and it makes the target-dependent
noise cancel identically; (3) what remains is a bound on
$\EXP_i \norm{\BM_i \Bz}^2$ for the {\em fixed} vector $\Bz = \Bx -
\By$, which is exactly the pointwise mean-square contraction already
established in Lemma~\ref{lem:Acontract}. This lemma is a statement purely about
the $\BM_i$'s and $\bar{\BA} = \sum_i p_i \Ba_i \Ba_i^\top$, requiring
no coupling of its own. The contraction of $P$ in $\cW_2$ is thus
assembled entirely from the per-index contraction of
Lemma~\ref{lem:Acontract}, transported to the space of distributions
by the coupling device. It is not, and could not be, a statement
recoverable from $\max_i \norm{\BM_i}_2$ alone.

\paragraph{Proof of Theorem \ref{thm:Amean}.}

\begin{proof}
{\bf Exact mean recursion.} The update \eqref{eq:Aerrrec} is linear in
$\Bs_t$, and $i_t$ is independent of $\cF_t \supseteq
\sigma(\Bs_t)$. Hence, conditioning on $\cF_t$,
\begin{equation}
\EXP\big[\Bs_{t+1} \,\big|\, \cF_t\big]
\; = \; \EXP\big[\BM_{i_t}\big]\, \Bs_t + \beta\, \EXP\big[c_{i_t}
\Ba_{i_t}\big]
\; = \; \big(\BI - \beta \bar{\BA}\big)\, \Bs_t + \beta\, \bar{\Bb}
\, ,
\end{equation}
and the tower property gives \eqref{eq:Ameanrec}. 
This equation holds with equality and is 
not an approximation: no linearization was needed because the update
itself is linear. Subtracting the fixed-point relation
$\Bs^{\rL\rS} = (\BI - \beta \bar{\BA}) \Bs^{\rL\rS} + \beta
\bar{\Bb}$, which holds by the normal equations \eqref{eq:Anormal}
($\bar{\Bb} = \bar{\BA} \Bs^{\rL\rS}$), yields the error form and,
by induction,
\begin{equation}
\EXP\big[\Be_t\big] \; = \; \big(\BI - \beta \bar{\BA}\big)^t\,
\EXP\big[\Be_0\big] \, .
\end{equation}

{\bf Spectral radius.} $\bar{\BA}$ is symmetric positive definite;
let $0 < \lambda_{\min} = \lambda_1 \le \dots \le \lambda_{d^2} =
\lambda_{\max}$ be its eigenvalues. For any unit vector $\Bx$,
$\Bx^\top \bar{\BA} \Bx = \sum_i p_i (\Ba_i^\top \Bx)^2 \le \max_i
\norm{\Ba_i}^2$, so
\begin{equation}
\beta\, \lambda_{\max} \; \le \; \beta\, \max_i \norm{\Ba_i}^2 \; <
\; 2
\end{equation}
by \eqref{eq:Astepcond1}. Hence, every eigenvalue $1 - \beta \lambda_k$
of $\BI - \beta \bar{\BA}$ lies in $(-1, 1)$:
$1 - \beta\lambda_k < 1$ since $\beta \lambda_k > 0$, and
$1 - \beta\lambda_k > -1$ since $\beta \lambda_k < 2$. The matrix is
symmetric, so its spectral norm equals
$r = \max_k |1 - \beta \lambda_k| < 1$ and
$\norm{\EXP[\Be_t]} \le r^t \norm{\Be_0}$, i.e.\ linear convergence of
$\EXP[\BS_t]$ to $\BS^{\rL\rS}$ in Frobenius norm ($\norm{\Bs}_2 =
\norm{\BS}_{\rF}$).

{\bf Exact unbiasedness.} The fixed point of the affine map $\Bz
\mapsto (\BI - \beta \bar{\BA}) \Bz + \beta \bar{\Bb}$ is unique
($\beta \bar{\BA}$ invertible) and equals $\bar{\BA}^{-1} \bar{\Bb}
= \Bs^{\rL\rS}$ for {\em every} $\beta$ in the admissible range: the
limit of the mean does not depend on $\beta$ and carries no
$O(\beta)$ correction. \qedhere
\end{proof}

\paragraph{Proof of Theorem \ref{thm:Asteady}.}

\begin{proof}
Existence, uniqueness, membership in $\cP_2$, and the geometric
$\cW_2$-convergence \eqref{eq:Aw2conv} are Lemma~\ref{lem:Aw2}. 
We have to prove (i)--(iv). In stationarity we write $\Be \sim
\pi_\beta$ (centered at $\Bs^{\rL\rS}$), draw a fresh index $i$
independent of $\Be$, and use that
\begin{equation} \label{eq:Astatupdate}
\Be' \; := \; \BM_i\, \Be + \beta\, \rho_i\, \Ba_i
\; \overset{d}{=} \; \Be \, ,
\end{equation}
with all first and second moments finite ($\pi_\beta \in \cP_2$). 
Here $\overset{d}{=}$ means equality in distribution.

{\bf (i) Stationary mean.} Taking expectations in
\eqref{eq:Astatupdate} and using the independence of $i$ and $\Be$
together with the normal equations $\EXP[\rho \Ba] = \BZe$:
\begin{equation}
\EXP[\Be] \; = \; \EXP[\Be']
\; = \; \EXP\big[\BM_i\big]\, \EXP[\Be] + \beta\, \EXP\big[\rho_i
\Ba_i\big]
\; = \; \big(\BI - \beta \bar{\BA}\big)\, \EXP[\Be] \, ,
\end{equation}
hence $\beta \bar{\BA}\, \EXP[\Be] = \BZe$, and since $\bar{\BA}$ is
invertible, $\EXP[\Be] = \BZe$, i.e.\
$\EXP_{\pi_\beta}[\BS] = \BS^{\rL\rS}$, which holds exactly.

{\bf (ii) Exact covariance equation.} Expand
$\Be' \Be'^{\top}$ from \eqref{eq:Astatupdate}:
\begin{equation}
\Be' \Be'^{\top} \; = \; \BM_i \Be \Be^\top \BM_i
\; + \; \beta\, \rho_i\, \BM_i \Be\, \Ba_i^\top
\; + \; \beta\, \rho_i\, \Ba_i \Be^\top \BM_i
\; + \; \beta^2 \rho_i^2\, \Ba_i \Ba_i^\top \, .
\end{equation}
Take expectations term by term, using the independence of $i$ and
$\Be$ and writing $\BV_\beta = \EXP[\Be \Be^\top]$.

{\em Homogeneous term.} Since
$\BM_i \BV \BM_i = \BV - \beta\big(\Ba_i \Ba_i^\top \BV + \BV \Ba_i
\Ba_i^\top\big) + \beta^2 \big(\Ba_i^\top \BV \Ba_i\big) \Ba_i
\Ba_i^\top$
(using $\Ba_i \Ba_i^\top \BV \Ba_i \Ba_i^\top = (\Ba_i^\top \BV
\Ba_i)\, \Ba_i \Ba_i^\top$),
\begin{equation}
\EXP\big[\BM_i \Be \Be^\top \BM_i\big]
\; = \; \BV_\beta - \beta \big(\bar{\BA} \BV_\beta + \BV_\beta
\bar{\BA}\big) + \beta^2\, \EXP\big[\big(\Ba^\top \BV_\beta
\Ba\big) \Ba \Ba^\top\big] \, .
\end{equation}

{\em Cross terms.} They are {\em linear} in $\Be$, so independence
factorizes them through $\EXP[\Be]$, which vanishes by part (i):
\begin{equation}
\EXP\big[\rho_i\, \BM_i \Be\, \Ba_i^\top\big]
\; = \; \EXP_i\big[\rho_i\, \BM_i\, \EXP[\Be]\, \Ba_i^\top\big]
\; = \; \BZe \, ,
\end{equation}
and likewise for its transpose. This is the step where exact
unbiasedness pays off: no $O(\beta^2)$ cross contribution survives.

{\em Stationarity.} Setting $\EXP[\Be' \Be'^\top] = \BV_\beta$ and
collecting,
\begin{equation}
\BV_\beta \; = \; \BV_\beta - \beta\big(\bar{\BA} \BV_\beta +
\BV_\beta \bar{\BA}\big) + \beta^2 \EXP\big[\big(\Ba^\top \BV_\beta
\Ba\big) \Ba \Ba^\top\big] + \beta^2 \EXP\big[\rho^2 \Ba
\Ba^\top\big] \, ;
\end{equation}
dividing by $\beta$ gives \eqref{eq:Alyap}, 
which is an exact equation, not
an expansion.

{\em Trace bounds.} Take the trace of \eqref{eq:Alyap}:
\begin{equation} \label{eq:Atraceid}
2\, \tr\big(\bar{\BA} \BV_\beta\big)
\; = \; \beta\, \EXP\big[\big(\Ba^\top \BV_\beta \Ba\big)
\norm{\Ba}^2\big] \; + \; \beta\, \sigma^2 \, .
\end{equation}
Since $\BV_\beta \succeq \BZe$, we have $\Ba^\top \BV_\beta \Ba \ge
0$, which gives the lower bound in \eqref{eq:Atracebounds}; and
$\EXP[(\Ba^\top \BV_\beta \Ba) \norm{\Ba}^2] \le \max_i
\norm{\Ba_i}^2\, \EXP[\Ba^\top \BV_\beta \Ba] = \max_i
\norm{\Ba_i}^2\, \tr(\bar{\BA} \BV_\beta)$, so
\begin{equation}
\big(2 - \beta \max_i \norm{\Ba_i}^2\big)\, \tr\big(\bar{\BA}
\BV_\beta\big) \; \le \; \beta\, \sigma^2 \, ,
\end{equation}
the upper bound (the prefactor is positive by
\eqref{eq:Astepcond1}). Finally $\tr \BV_\beta \le \tr(\bar{\BA}
\BV_\beta) / \lambda_{\min}(\bar{\BA})$ gives the Frobenius bound in
\eqref{eq:Atracebounds}, and $\tr \BV_\beta = O(\beta)$ makes the
quartic term in \eqref{eq:Alyap} $O(\beta^2)$, leaving the Lyapunov
equation at leading order.

{\bf (iii) Misadjustment.} For uniform sampling, $\bar{\BA} =
\tfrac1n \BA^\top \BA$ and $\rL(\Bs) = \tfrac1n \norm{\BA \Bs -
\Bc}^2$. Decompose with $\BA \Bs - \Bc = \BA \Be - \Brh$, where
$\Brh = (\rho_1, \dots, \rho_n)^\top$: since $\BA^\top \Brh = n\,
\EXP[\rho \Ba] = \BZe$,
\begin{equation}
\rL(\Bs) \; = \; \tfrac1n \norm{\BA \Be}^2 - \tfrac2n\, \Be^\top
\BA^\top \Brh + \tfrac1n \norm{\Brh}^2
\; = \; \tfrac1n \norm{\BA \Be}^2 + \rL_{\min} \, ,
\end{equation}
where $\rL_{\min} = \tfrac1n \norm{\Brh}^2$. This also re-proves
that $\Bs^{\rL\rS}$ minimizes $\rL$. Taking stationary expectations,
\begin{equation}
\EXP_{\pi_\beta}[\rL] - \rL_{\min}
\; = \; \tfrac1n\, \EXP\big[\Be^\top \BA^\top \BA\, \Be\big]
\; = \; \tr\big(\bar{\BA}\, \BV_\beta\big) \, ,
\end{equation}
and inserting the exact trace identity \eqref{eq:Atraceid} gives
\eqref{eq:Amisadjust}. The two-sided bound
\eqref{eq:Amisadjustbound} follows from the trace bounds
\eqref{eq:Atracebounds}. Since the upper and lower bounds differ by a
factor $2 / (2 - \beta \max_i \norm{\Ba_i}^2) = 1 + O(\beta)$, the
excess loss equals $\tfrac{\beta}{2} \sigma^2 (1 + O(\beta))$.

{\bf (iv) Transient.} Condition on $\cF_t$ and expand
\eqref{eq:Aerrrec}:
\begin{equation}
\EXP\big[\norm{\Be_{t+1}}^2 \,\big|\, \cF_t\big]
\; = \; \EXP\big[\norm{\BM_i \Be_t}^2 \,\big|\, \cF_t\big]
\; + \; 2 \beta\, \EXP\big[\rho_i\, \Ba_i^\top \BM_i\big]\, \Be_t
\; + \; \beta^2 \sigma^2 \, .
\end{equation}
For the middle term, $\Ba_i^\top \BM_i = (1 - \beta
\norm{\Ba_i}^2)\, \Ba_i^\top$, so by the normal equations
\begin{equation}
\EXP\big[\rho_i\, \Ba_i^\top \BM_i\big]
\; = \; \EXP\big[\rho\, \Ba\big]^\top - \beta\, \EXP\big[\rho
\norm{\Ba}^2 \Ba\big]^\top
\; = \; -\, \beta\, \Bw^\top \, ,
\end{equation}
with $\Bw = \EXP[\rho \norm{\Ba}^2 \Ba]$ as in the statement. 
Outside stationarity this cross term does {\em not} vanish, and it is
$O(\beta^2)$ in total. Lemma~\ref{lem:Acontract} and Young's
inequality $-2\beta^2 \Bw^\top \Be \le \tfrac{\bar{\rho}}{2}
\norm{\Be}^2 + \tfrac{2 \beta^4}{\bar{\rho}} \norm{\Bw}^2$ give
\begin{equation}
\EXP\big[\norm{\Be_{t+1}}^2 \,\big|\, \cF_t\big]
\; \le \; \Big(1 - \tfrac{\bar{\rho}}{2}\Big) \norm{\Be_t}^2
\; + \; \beta^2 \sigma^2 \; + \; \tfrac{2 \beta^4}{\bar{\rho}}
\norm{\Bw}^2 \, .
\end{equation}
Taking full expectations and iterating the affine recursion (its
geometric series is bounded by $2/\bar{\rho}$) yields
\eqref{eq:Atransient}. Since $\bar{\rho} = \gamma_\beta
\lambda_{\min} = \Theta(\beta)$, the horizon $2 \beta^2
\sigma^2/\bar{\rho} + 4 \beta^4 \norm{\Bw}^2 / \bar{\rho}^2$ is
$O(\beta)$, consistent with \eqref{eq:Atracebounds}. \qedhere
\end{proof}

\begin{corollaryA}[Recovering exact convergence to $\BS^{\rL\rS}$]
\label{cor:Aexact}
Three standard mechanisms remove the $O(\sqrt{\beta})$ fluctuation:
\begin{enumerate}
\item[] (i) {\em Diminishing step sizes} (Robbins--Monro):
  $\beta_t \downarrow 0$, $\sum_t \beta_t = \infty$ (e.g.\
  $\beta_t \propto 1/t$) gives $\BS_t \to \BS^{\rL\rS}$ almost
  surely.
\item[] (ii) {\em Iterate averaging} (Polyak--Ruppert / ergodic average):
  with constant $\beta$, the time average
  $\bar{\BS}_T = \tfrac{1}{T} \sum_{t \le T} \BS_t \to
  \EXP_{\pi_\beta}[\BS] = \BS^{\rL\rS}$ almost surely - and by
  Theorem~\ref{thm:Asteady}\,(i) the average is exactly unbiased.
\item[] (iii) {\em Small-$\beta$ limit:} by \eqref{eq:Atracebounds},
  $\pi_\beta$ concentrates on $\BS^{\rL\rS}$ as $\beta \to 0$, with
  the explicit bias--variance trade-off: larger $\beta$ tracks
  nonstationary targets faster, smaller $\beta$ reduces the
  steady-state loss excess \eqref{eq:Amisadjustbound}, which is the standard
  LMS trade-off, inherited verbatim.
\end{enumerate}
\end{corollaryA}

\begin{corollaryA}[Consistent targets]
\label{cor:Aconsistent}
The inconsistency, not the count $n > d^2$, is what forbids exact
convergence. Let us assume that the targets happen to be realizable, that is
$c_i = \Bv_i^\top \BS^{\circ} \Bk_i$ for some $\BS^{\circ}$.
In this case $\rL_{\min} = 0$, all residuals $\rho_i$ vanish, the driving noise in
\eqref{eq:Aerrrec} disappears, and the Fej\'er-monotone argument of the
consistent case applies unchanged: $\BS_t \to \BS^{\circ}$ (unique by
full column rank) almost surely, linearly, for {\em arbitrarily many}
associations. Two distinguished targets realize this structurally:
\begin{enumerate}
\item[] (i) the {\em correlation target} $c_i = \Bv_i^\top \Bk_i$, solved by
  $\BS = \BI$ for every $n$;
\item[] (ii) the {\em norm target} $c_i = \norm{\Bv_i}^2$, solved by any
  exact retrieval memory $\BS \Bk_i = \Bv_i$ (in particular
  autoassociatively by $\BS = \BI$).
\end{enumerate}
For these targets the recognition memory has effectively {\em
  unbounded} association count as long as realizability holds.
\end{corollaryA}

\subsubsection{Remarks}

\paragraph{Memory for the overloaded setting.}
Beyond capacity, $\BS$ is no longer a table but an {\em online linear
  regressor}: the read-out $\Bv^\top \BS^{\rL\rS} \Bk$ is the best
linear prediction (in the $p$-weighted least-squares sense) of the
scalar signal $c$ from the bilinear feature $\Bk \otimes \Bv$.
Degradation is graceful since overload produces averaging plus an
$O(\sqrt{\beta})$ jitter, never blow-up, since the homogeneous factors
keep spectral norm $\le 1$.

\paragraph{Assumptions for the proofs.}
Independence of $i_t$ from the past makes the mean recursion exact
and factorizes the covariance cross terms; the normal equations kill
the driving term in the mean and the cross terms in stationarity
(via $\EXP[\Be] = \BZe$); the step condition \eqref{eq:Astepcond1}
enters three times: (1) $\gamma_\beta > 0$ (contraction),
(2) $\beta \lambda_{\max}(\bar{\BA}) < 2$ (mean stability), and
(3) $2 - \beta \max_i \norm{\Ba_i}^2 > 0$ (trace upper bound).
Full column rank gives $\lambda_{\min}(\bar{\BA}) > 0$, used for
uniqueness of $\Bs^{\rL\rS}$, for $\bar{\rho} > 0$, and to convert
$\tr(\bar{\BA} \BV_\beta)$ into a bound on $\tr \BV_\beta$. Linearity
of the update in the state is what makes every step of the argument
exact rather than a Taylor expansion which we obtain from the quadratic
loss.

\paragraph{Conflicting targets on the same pair: averaging vs.\
  overwriting.}
The canonical inconsistency is one pair $(\Bk, \Bv)$ presented with
different targets. Under {\em random} sampling with probabilities
$p_j$ over targets $c_j$, the stationary mean read-out is the weighted
average, $\Bv^\top \EXP_{\pi_\beta}[\BS]\, \Bk = \sum_j p_j c_j /
\sum_j p_j$ (the LS solution for identical rows). Under {\em
  sequential} presentation the same rule {\em overwrites}: the
contraction drives the read-out to the most recent target. Random
replay averages; ordered replay recencies, therefore the sampling protocol,
not the rule, decides about overwrite behavior.

\paragraph{Forget gate: recency-weighted least squares.}
With the scalar gated variant
$\BS_{t+1} = \Rf_t \BS_t - \beta (\Bv_t^\top \Rf_t \BS_t \Bk_t - c_t)
\Bv_t \Bk_t^\top$, $\Rf_t \in (0,1)$, the steady state is the
corresponding {\em exponentially weighted} least-squares solution:
old associations are geometrically down-weighted, so the effective
association count is finite ($\sim 1/(1 - \Rf)$) and the memory tracks
a drifting target distribution. Therefore, the overloaded regime is exactly
where the forget gate is most relevant.

\paragraph{Summary.}
For $n > d^2$ associations the randomized Kronecker Delta Rule with
constant $\beta$ does not converge to a point but to a stationary
distribution: its mean is {\em exactly} the sampling-weighted
least-squares memory $\BS^{\rL\rS} = \argmin \sum_i p_i (\Bv_i^\top
\BS \Bk_i - c_i)^2$ (the minimizer of $\rL$ for uniform sampling), its
covariance is $O(\beta)$ and solves the exact stationarity equation
\eqref{eq:Alyap} driven by the optimal residuals, and its loss sits at
$\rL_{\min} + \tfrac{\beta}{2} \EXP[\rho_i^2 \norm{\Bk_i}^2
\norm{\Bv_i}^2]$ up to an explicit $1 + O(\beta)$ factor. All of
this is proven from first principles: an exact linear mean
recursion with spectral radius $< 1$; a synchronous-coupling
$\cW_2$-contraction plus Banach fixed point for existence,
uniqueness, and geometric convergence of $\pi_\beta$; exact moment
identities in stationarity for mean and covariance; an exact trace
identity for the misadjustment; and a Young-inequality recursion for
the transient. Averaging or annealing $\beta$ recovers
$\BS^{\rL\rS}$ exactly; consistent target structures (correlation
target, norm target with a retrieval solution) escape the capacity
limit altogether; and a forget gate turns the steady state into
recency-weighted least squares.

\subsection{Norm Bounds for the KronLSTM with Steady-State Solution}

We derive a two-sided norm bound for the steady state of the randomized
Kronecker Delta Rule with more than $d^2$ associations.
Running memories or data stream i.e.\ online updates
are covered via the steady state, therefore we obtain an expected bound
on the online updated memory.

\subsubsection{Overview}
  For the overloaded KronLSTM ($n$ associations, features
  $\Ba_i = \Bk_i \otimes \Bv_i$, sampling weights $p_i$) the
  randomized Kronecker Delta Rule converges around the weighted
  least-squares memory $\BS^{\rL\rS}$. We derive a {\em two-sided}
  bound on its Frobenius norm from an exact energy (Pythagoras)
  identity:
  \[
  \frac{\EXP[c^2] - \EXP[\rho^2]}{\lambda_{\max}(\bar{\BA})}
  \; \le \; \norm{\BS^{\rL\rS}}_{\rF}^2 \; \le \;
  \frac{\EXP[c^2] - \EXP[\rho^2]}{\lambda_{\min}(\bar{\BA})} \, ,
  \]
  where $\rho_i$ are the optimal residuals and
  $\bar{\BA} = \EXP[\Ba \Ba^\top]$ is the feature second moment. The
  numerator is the {\em explained target energy}; the bound is
  attained (both sides coincide) for isotropic designs, and in the
  orthonormal design it reproduces
  $\norm{\BS^\star}_{\rF}^2 = \sum_i c_i^2$ of the target-weighted
  Hebbian superposition exactly. Combined with the steady-state
  analysis, it yields an a-priori bound on the expected memory norm of
  the running iterates.

\subsubsection{Setting and nomenclature}

As before, we assume $n$ associations $(\Bk_i, \Bv_i, c_i)$ with features
$\Ba_i = \Bk_i \otimes \Bv_i \in \dR^{d^2}$,
$\norm{\Ba_i} = \norm{\Bk_i} \norm{\Bv_i}$, sampling probabilities
$p_i > 0$, and
\begin{align}
\bar{\BA} \; &= \; \EXP\big[\Ba \Ba^\top\big]
\; = \; \sum_{i=1}^n p_i\, \Ba_i \Ba_i^\top \, ,
\qquad
\Bs^{\rL\rS} \; = \; \argmin_{\Bs}\; \EXP\big[(\Ba^\top \Bs - c)^2\big]
\; = \; \bar{\BA}^{-1}\, \EXP\big[c\, \Ba\big] \, , \\
\rho_i \; &= \; c_i - \Ba_i^\top \Bs^{\rL\rS} \, ,
\qquad
\EXP\big[\rho\, \Ba\big] \; = \; \BZe
\quad \text{(normal equations)} \, ,
\end{align}
assuming $\bar{\BA} \succ \BZe$ (features span $\dR^{d^2}$; the
rank-deficient case is treated in the remarks). All expectations are
over $i \sim p$. Recall
$\norm{\Bs}_2 = \norm{\BS}_{\rF}$ for $\Bs = \vecop(\BS)$, and that
for uniform sampling $\EXP[\rho^2] = \rL_{\min}$, the minimum of the
loss $\rL$.

\subsubsection{An exact energy identity}

\begin{lemmaA}[Energy identity]
\label{lem:Aenergy1}
Let $y_i := \Ba_i^\top \Bs^{\rL\rS} = \Bv_i^\top \BS^{\rL\rS} \Bk_i$
be the optimal read-outs. Then $\EXP[\rho\, y] = 0$ and
\begin{equation} \label{eq:Apythagoras}
\EXP\big[c^2\big] \; = \; \EXP\big[y^2\big] \; + \;
\EXP\big[\rho^2\big] \, ,
\qquad
\big(\Bs^{\rL\rS}\big)^{\!\top} \bar{\BA}\; \Bs^{\rL\rS}
\; = \; \EXP\big[y^2\big]
\; = \; \EXP\big[c^2\big] - \EXP\big[\rho^2\big] \, .
\end{equation}
\end{lemmaA}

\begin{proof}
$\EXP[\rho\, y] = \EXP[\rho\, \Ba]^\top \Bs^{\rL\rS} = 0$ by the
normal equations; with $c = y + \rho$ this gives the first identity.
The second follows from
$\EXP[y^2] = (\Bs^{\rL\rS})^\top \EXP[\Ba \Ba^\top]\, \Bs^{\rL\rS}$.
\end{proof}

The stored {\em read-out energy} of the least-squares memory equals
the target energy minus the unexplained (residual) energy. Therefore, the
memory can only be as large as the part of the target signal that the
bilinear features can explain.

\subsubsection{The Norm Bound}

\begin{theoremA}[Two-sided Frobenius-norm bound]
\label{thm:Anorm}
With $\lambda_{\min}, \lambda_{\max}$ the extreme eigenvalues of
$\bar{\BA}$,
\begin{equation} \label{eq:Atwosided}
\boxed{\;
\frac{\EXP\big[c^2\big] - \EXP\big[\rho^2\big]}{\lambda_{\max}(\bar{\BA})}
\; \le \;
\norm{\BS^{\rL\rS}}_{\rF}^2
\; \le \;
\frac{\EXP\big[c^2\big] - \EXP\big[\rho^2\big]}{\lambda_{\min}(\bar{\BA})}
\, . \;}
\end{equation}
In particular, since
$\sigma_{\max}(\BS^{\rL\rS}) \le \norm{\BS^{\rL\rS}}_{\rF}$,
\begin{equation} \label{eq:Asigmamax}
\sigma_{\max}\big(\BS^{\rL\rS}\big) \; \le \;
\sqrt{\frac{\EXP\big[c^2\big] - \EXP\big[\rho^2\big]}
{\lambda_{\min}(\bar{\BA})}}
\; \le \;
\sqrt{\frac{\EXP\big[c^2\big]}{\lambda_{\min}(\bar{\BA})}} \, .
\end{equation}
\end{theoremA}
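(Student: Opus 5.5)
The plan is to combine the energy identity of Lemma~\ref{lem:Aenergy1} with the Rayleigh-quotient bounds for the symmetric positive definite matrix $\bar{\BA}$. Write $\Bs := \Bs^{\rL\rS}$. By Lemma~\ref{lem:Aenergy1},
\[
\Bs^\top \bar{\BA}\, \Bs \;=\; \EXP[y^2] \;=\; \EXP[c^2] - \EXP[\rho^2].
\]
The only external fact needed is the Courant--Fischer inequality
\[
\lambda_{\min}(\bar{\BA})\,\norm{\Bs}_2^2 \;\le\; \Bs^\top \bar{\BA}\,\Bs \;\le\; \lambda_{\max}(\bar{\BA})\,\norm{\Bs}_2^2 .
\]
This holds because $\bar{\BA} = \sum_i p_i \Ba_i \Ba_i^\top$ is symmetric and, by the standing assumption, satisfies $\bar{\BA} \succ \BZe$, so $\lambda_{\min} > 0$.

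First, substitute the energy identity into the two Rayleigh inequalities. Dividing the left inequality by $\lambda_{\min}$ and the right one by $\lambda_{\max}$ (both positive) gives
\[
\frac{\EXP[c^2]-\EXP[\rho^2]}{\lambda_{\max}} \;\le\; \norm{\Bs}_2^2 \;\le\; \frac{\EXP[c^2]-\EXP[\rho^2]}{\lambda_{\min}} .
\]
Since $\norm{\Bs}_2 = \norm{\BS^{\rL\rS}}_{\rF}$ under vectorization, this is exactly \eqref{eq:Atwosided}.

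Second, for \eqref{eq:Asigmamax}, use $\sigma_{\max}(\BS)^2 \le \sum_j \sigma_j(\BS)^2 = \norm{\BS}_{\rF}^2$ and take square roots of the upper bound in \eqref{eq:Atwosided}. The final relaxation drops the residual energy: $\EXP[\rho^2] \ge 0$ gives $\EXP[c^2]-\EXP[\rho^2] \le \EXP[c^2]$. The same remark shows that all singular values obey the bound and that residual energy can only shrink it, as claimed in the main-text version. Note also that the numerator $\EXP[y^2]$ is nonnegative, so the square root is well defined.

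There is no real obstacle: the argument is a two-line consequence of Lemma~\ref{lem:Aenergy1}. The only points to check are that the quantity $\Bs^\top\bar{\BA}\Bs$ is the right bridge, namely the $\bar{\BA}$-weighted energy equals the explained target energy, and that $\bar{\BA}$ is symmetric. Symmetry is immediate because $\bar{\BA}$ is a convex combination of rank-one projectors $\Ba_i\Ba_i^\top$. As a sanity check I would verify the orthonormal design, where $\bar{\BA}$ is a multiple of the identity on the span and both bounds coincide.
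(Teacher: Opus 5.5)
Your proposal is correct and follows the paper's proof exactly: bound the quadratic form $(\Bs^{\rL\rS})^\top\bar{\BA}\,\Bs^{\rL\rS}$ below by $\lambda_{\min}\norm{\Bs^{\rL\rS}}^2$ and above by $\lambda_{\max}\norm{\Bs^{\rL\rS}}^2$, insert the energy identity of Lemma~\ref{lem:Aenergy1}, and finish with $\sigma_{\max}\le\norm{\cdot}_{\rF}$ and $\EXP[\rho^2]\ge 0$. One minor wording slip: the matrices $\Ba_i\Ba_i^\top$ are rank-one positive semidefinite matrices, not projectors unless $\norm{\Ba_i}=1$, but this does not affect the symmetry argument.
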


\begin{proof}
Sandwich the quadratic form:
$\lambda_{\min} \norm{\Bs^{\rL\rS}}^2 \le
(\Bs^{\rL\rS})^\top \bar{\BA}\, \Bs^{\rL\rS} \le
\lambda_{\max} \norm{\Bs^{\rL\rS}}^2$, and insert
\eqref{eq:Apythagoras}.
\end{proof}

\begin{corollaryA}[First-moment variant]
\label{cor:Afirstmoment}
From $\lambda_{\min} \norm{\Bs}^2 \le \Bs^\top \bar{\BA} \Bs =
\Bs^\top \EXP[c \Ba] \le \norm{\Bs}\, \EXP[|c|\, \norm{\Ba}]$,
\begin{equation}
\norm{\BS^{\rL\rS}}_{\rF} \; \le \;
\frac{\EXP\big[\,|c_i|\, \norm{\Bk_i} \norm{\Bv_i}\big]}
{\lambda_{\min}(\bar{\BA})} \, .
\end{equation}
The accumulated write mass
$\beta\, c_t \norm{\Bk_t} \norm{\Bv_t}$ is exactly the quantity whose
first moment appears here. The second moment bound
\eqref{eq:Atwosided} is usually much sharper (it divides by
$\lambda_{\min}$ only once, under the square root).
\end{corollaryA}

\subsubsection{Tightness and special cases}

\paragraph{Isotropic design gives equality.}
If $\bar{\BA} = \tfrac{\EXP[\norm{\Ba}^2]}{d^2}\, \BI$
(features spread evenly over $\dR^{d^2}$), then
$\lambda_{\min} = \lambda_{\max}$ and \eqref{eq:Atwosided} collapses to
the identity
\begin{equation}
\norm{\BS^{\rL\rS}}_{\rF}^2 \; = \;
\frac{d^2\, \big(\EXP[c^2] - \EXP[\rho^2]\big)}{\EXP[\norm{\Ba}^2]}
\, .
\end{equation}

\paragraph{Orthonormal design recovers the Hebbian superposition
  norm.}
For $n = d^2$ with orthonormal $\{\Bk_i\}$, orthonormal $\{\Bv_i\}$,
and uniform sampling, the features are orthonormal,
$\bar{\BA} = \BI / d^2$, the system is consistent ($\rho \equiv 0$),
and \eqref{eq:Atwosided} holds with equality:
\begin{equation}
\norm{\BS^\star}_{\rF}^2 \; = \; d^2\; \EXP[c^2] \; = \;
\sum_{i=1}^{d^2} c_i^2 \, ,
\end{equation}
exactly the Frobenius norm of the target-weighted Hebbian
superposition $\BS^\star = \sum_i c_i \Bv_i \Bk_i^\top$. The bound is
therefore attained, not merely an estimate.

\paragraph{Unit-norm keys and values give a $d^2$ conditioning factor.}
If $\norm{\Bk_i} = \norm{\Bv_i} = 1$ then $\tr \bar{\BA} =
\EXP[\norm{\Ba}^2] = 1$, so $\lambda_{\min} \le 1/d^2 \le
\lambda_{\max}$: the lower bound in \eqref{eq:Atwosided} is at most
$\EXP[c^2] - \EXP[\rho^2]$, the upper bound at least
$d^2 (\EXP[c^2] - \EXP[\rho^2])$, with both collapsing to the latter
in the isotropic case. Ill-spread designs pay in $\lambda_{\min}$:
the worst case is a target profile aligned with the least-covered
feature direction, and the $1/\lambda_{\min}$ dependence is
unavoidable in general.

\paragraph{Consistent case where association errors are zero.}
If the system is consistent ($\rho \equiv 0$; e.g.\
$n \le d^2$ with independent features, or the correlation/norm
targets), then $\BS^{\rL\rS} = \BS^\star$ is the exact solution and
\begin{equation}
\norm{\BS^\star}_{\rF}^2 \; \le \;
\frac{\EXP[c^2]}{\lambda_{\min}(\bar{\BA})} \, .
\end{equation}
Residual energy can only {\em shrink} the memory: overload
(inconsistency) makes the least-squares memory smaller than a
consistent memory with the same target energy, never larger.

\paragraph{Rank-deficient features.}
If the features do not span $\dR^{d^2}$ ($n < d^2$ or dependent
$\Ba_i$), replace $\BS^{\rL\rS}$ by the minimum-Frobenius-norm
least-squares solution and $\lambda_{\min}$ by the smallest {\em
  nonzero} eigenvalue of $\bar{\BA}$: since
$\Bs^{\rL\rS} \in \spanop\{\Ba_i\}$ (as do the iterates from
$\BS_0 = \BZe$), the sandwich argument applies on that subspace and
\eqref{eq:Atwosided} holds verbatim with $\lambda_{\min}^{+}$.

\subsubsection{Bound on the memory norm for steady-state and all updates}

\paragraph{Bound on the memory norm for steady-state distribution.}
First, we consider a bound on the memory norm under 
the steady-state distribution.
\begin{corollaryA}[Bound on the steady-state memory norm]
\label{cor:Asteady}
Under the randomized rule with step size
$0 < \beta < 2/\max_i \norm{\Ba_i}^2$, the stationary distribution
$\pi_\beta$ has mean exactly $\BS^{\rL\rS}$, so the second moment
decomposes orthogonally,
$\EXP_{\pi_\beta} \norm{\BS}_{\rF}^2 = \norm{\BS^{\rL\rS}}_{\rF}^2 +
\tr \BV_\beta$, and with the covariance trace bound of the
steady-state note,
\begin{equation} \label{eq:Asteadynorm}
\EXP_{\pi_\beta}\!\left[\norm{\BS}_{\rF}^2\right]
\; \le \;
\frac{\EXP[c^2] - \EXP[\rho^2]}{\lambda_{\min}(\bar{\BA})}
\; + \;
\frac{\beta\; \EXP\big[\rho^2 \norm{\Bk}^2 \norm{\Bv}^2\big]}
{\big(2 - \beta \max_i \norm{\Ba_i}^2\big)\,
\lambda_{\min}(\bar{\BA})} \, .
\end{equation}
The first term is the size of the memory content, the second the
$O(\beta)$ fluctuation; for $\beta \to 0$ or in the consistent case
the bound reduces to Theorem~\ref{thm:Anorm}.
\end{corollaryA}

\begin{theoremA}[Two-sided memory norm bound for the steady state]
\label{th:normbound}
Let $\pi_\beta$ be the steady-state law of \eqref{eq:Aaffine},
$\BV_\beta = \EXP_{\pi_\beta}[\Be \Be^\top]$ its covariance as in
Theorem~\ref{thm:Asteady}\,(ii) (the two coincide because the
stationary mean is exactly $\Bs^{\rL\rS}$), $\sigma^2 = \EXP[\rho^2 \norm{\Ba}^2]$,
and write
\begin{equation}
A_{\max}^2 \; := \; \max_i \norm{\Ba_i}^2
\; = \; \max_i \norm{\Bk_i}^2 \norm{\Bv_i}^2 \, ,
\qquad
\kappa \; := \; \beta\, A_{\max}^2 \; \in \; (0,2) \, ,
\end{equation}
the last inclusion being the step-size condition
\eqref{eq:Astepcond1}. Then the energy identity
\begin{equation} \label{eq:Aenergy}
\Bs^{\rL\rS\top} \bar{\BA}\, \Bs^{\rL\rS} \; = \;
\EXP[c^2] \, - \, \EXP[\rho^2]
\end{equation}
holds, hence the least-squares memory is pinned on both sides,
\begin{equation} \label{eq:Atwosided1}
\frac{\EXP[c^2] - \EXP[\rho^2]}{\lambda_{\max}(\bar{\BA})}
\; \le \; \norm{\BS^{\rL\rS}}_{\rF}^2 \; \le \;
\frac{\EXP[c^2] - \EXP[\rho^2]}{\lambda_{\min}(\bar{\BA})} \, ,
\end{equation}
and the steady-state memory itself satisfies
\begin{equation} \label{eq:Asteadynorm1}
\EXP_{\pi_\beta}\big[\norm{\BS}_{\rF}^2\big]
\; = \; \norm{\BS^{\rL\rS}}_{\rF}^2 \; + \; \tr \BV_\beta \, ,
\qquad
\tr \BV_\beta \; \le \;
\frac{\beta\, \EXP\big[\rho^2 \norm{\Ba}^2\big]}
{\big(2 - \kappa\big)\, \lambda_{\min}(\bar{\BA})} \, .
\end{equation}
The fluctuation term admits the exact decomposition
\begin{align} \label{eq:Aremainder}
\tr\big(\bar{\BA} \BV_\beta\big)
\; &= \; \frac{\beta}{2}\, \sigma^2 \; + \; R_\beta \, ,
\\ \nonumber
R_\beta \; &:= \; \frac{\beta}{2}\,
\EXP\big[\big(\Ba^\top \BV_\beta \Ba\big) \norm{\Ba}^2\big] \, ,
\qquad
0 \; \le \; R_\beta \; \le \;
\frac{\kappa}{2 - \kappa}\cdot \frac{\beta\, \sigma^2}{2} \, .
\end{align}
$R_\beta$ is bounded by
$\beta^2 A_{\max}^2 \sigma^2 / \big(2 (2 - \kappa)\big)$.
Consequently, since $\EXP[\rho^2] \le \EXP[c^2]$,
\begin{equation} \label{eq:Aclean}
\EXP_{\pi_\beta}\big[\norm{\BS}_{\rF}^2\big]
\; \le \;
\frac{1}{\lambda_{\min}(\bar{\BA})}
\left( \EXP[c^2] \; - \;
\frac{2\,(1 - \kappa)}{2 - \kappa}\; \EXP[\rho^2] \right)
\; \le \;
\frac{\EXP[c^2]}{\lambda_{\min}(\bar{\BA})}\;
\max\!\left\{1, \; \frac{\kappa}{2 - \kappa}\right\} \, ,
\end{equation}
and in particular, whenever the step size satisfies
$\beta \norm{\Ba_t}^2 \le 1$ (i.e.\ $\kappa \le 1$, the
non-expansive regime),
\begin{equation} \label{eq:Acleanone}
\boxed{\;
\EXP_{\pi_\beta}\big[\norm{\BS}_{\rF}^2\big]
\; \le \; \frac{\EXP[c^2]}{\lambda_{\min}(\bar{\BA})} \, . \;}
\end{equation}
All statements are exact and non-asymptotic: they hold for every
$\beta$ admitted by \eqref{eq:Astepcond1}.
\end{theoremA}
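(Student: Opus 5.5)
The plan is to assemble Theorem~\ref{th:normbound} from results already in hand and to add only the new algebra for the remainder $R_\beta$ and the final clean bounds. The energy identity \eqref{eq:Aenergy} is Lemma~\ref{lem:Aenergy1}. The two-sided sandwich \eqref{eq:Atwosided1} is Theorem~\ref{thm:Anorm}.

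For \eqref{eq:Asteadynorm1}, write $\Bs = \Bs^{\rL\rS} + \Be$ under $\pi_\beta$. By Theorem~\ref{thm:Asteady}\,(i), $\EXP_{\pi_\beta}[\Be] = \BZe$, so the cross term vanishes and the stated decomposition follows. The bound on $\tr\BV_\beta$ is the Frobenius part of \eqref{eq:Atracebounds}, with $2-\beta\max_i\norm{\Ba_i}^2 = 2-\kappa$.

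For the remainder, start from the exact trace identity \eqref{eq:Atraceid} and divide by two. This gives $\tr(\bar{\BA}\BV_\beta) = \tfrac{\beta}{2}\sigma^2 + R_\beta$ with $R_\beta$ as defined. Nonnegativity of $R_\beta$ holds because $\BV_\beta \succeq \BZe$. For the upper bound, use $\EXP[(\Ba^\top\BV_\beta\Ba)\norm{\Ba}^2] \le A_{\max}^2\tr(\bar{\BA}\BV_\beta)$ to get $R_\beta \le \tfrac{\kappa}{2}\tr(\bar{\BA}\BV_\beta)$. Combining this with the upper trace bound $\tr(\bar{\BA}\BV_\beta)\le \beta\sigma^2/(2-\kappa)$ gives $R_\beta \le \tfrac{\kappa}{2-\kappa}\cdot\tfrac{\beta\sigma^2}{2}$. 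Substituting $\kappa=\beta A_{\max}^2$ yields the stated $\beta^2A_{\max}^2\sigma^2/(2(2-\kappa))$.

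The main step is \eqref{eq:Aclean}. Bound $\sigma^2 = \EXP[\rho^2\norm{\Ba}^2] \le A_{\max}^2\EXP[\rho^2]$, so that $\beta\sigma^2 \le \kappa\,\EXP[\rho^2]$. Then
\[
\EXP_{\pi_\beta}\norm{\BS}_{\rF}^2 \le \frac{1}{\lambda_{\min}}\Big(\EXP[c^2]-\EXP[\rho^2]+\tfrac{\kappa}{2-\kappa}\EXP[\rho^2]\Big),
\]
and the coefficient of $\EXP[\rho^2]$ is $-1+\tfrac{\kappa}{2-\kappa} = -\tfrac{2(1-\kappa)}{2-\kappa}$. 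This is the first inequality.

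For the second inequality, split into two cases. If $\kappa\le1$, the coefficient of $\EXP[\rho^2]$ is nonpositive, so we drop that term and obtain $\EXP[c^2]/\lambda_{\min}$. If $\kappa>1$, the coefficient is positive, so use $\EXP[\rho^2]\le\EXP[c^2]$ (from Lemma~\ref{lem:Aenergy1}, since $\EXP[y^2]\ge0$). This gives the factor $1+\tfrac{2(\kappa-1)}{2-\kappa} = \tfrac{\kappa}{2-\kappa}$. Hence the bound is $\max\{1,\kappa/(2-\kappa)\}$ in both cases, and the boxed \eqref{eq:Acleanone} is the $\kappa\le1$ case.

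The delicate point I expect is the lossy passage $\sigma^2\le A_{\max}^2\EXP[\rho^2]$ together with the trace-to-Frobenius conversion via $\lambda_{\min}$. I will check that the same $\lambda_{\min}$ divides both the mean part and the fluctuation part, so that they combine into a single bracket. Every step is an exact identity or an inequality valid for all admissible $\beta$, which is what makes the statement non-asymptotic.
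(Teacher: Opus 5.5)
Your proposal is correct and follows essentially the same route as the paper: the energy identity with the Rayleigh-quotient sandwich, the bias--variance split using the exact stationary mean, the trace of the exact stationarity equation to identify $R_\beta$ and obtain $(2-\kappa)\tr(\bar{\BA}\BV_\beta)\le\beta\sigma^2$, and then $\beta\sigma^2\le\kappa\,\EXP[\rho^2]$ with the same coefficient $-2(1-\kappa)/(2-\kappa)$ and case split at $\kappa=1$. The only cosmetic difference is how $\EXP[\rho^2]\le\EXP[c^2]$ is justified: you use $\EXP[y^2]\ge 0$ from the energy identity, whereas the paper notes that $\BS=\BZe$ is a feasible competitor in the least-squares problem, and both are immediate.
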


\begin{proof}
The normal equations $\bar{\BA} \Bs^{\rL\rS} = \Bb$ give
$\EXP[\rho^2] = \EXP[c^2] - 2 \Bb^\top \Bs^{\rL\rS} + \Bs^{\rL\rS\top}
\bar{\BA} \Bs^{\rL\rS} = \EXP[c^2] - \Bs^{\rL\rS\top} \bar{\BA}
\Bs^{\rL\rS}$, which is \eqref{eq:Aenergy}; bounding the Rayleigh
quotient $\Bs^{\rL\rS\top} \bar{\BA} \Bs^{\rL\rS}$ between
$\lambda_{\min}(\bar{\BA}) \norm{\Bs^{\rL\rS}}^2$ and
$\lambda_{\max}(\bar{\BA}) \norm{\Bs^{\rL\rS}}^2$, and using
$\norm{\Bs}_2 = \norm{\BS}_{\rF}$, gives \eqref{eq:Atwosided1}. The
first part of \eqref{eq:Asteadynorm1} is the bias--variance split
$\EXP\norm{\Bs}^2 = \norm{\EXP \Bs}^2 + \TR \COV(\Bs)$ together with
the fact that the stationary mean is {\em exactly} $\Bs^{\rL\rS}$
(Theorem~\ref{thm:Asteady}\,(i)).

{\em Exact stationarity equation.} Rather than the leading-order
Lyapunov equation, we use the {\em exact} stationarity equation
\eqref{eq:Alyap} of Theorem~\ref{thm:Asteady}\,(ii),
\begin{equation}
\bar{\BA} \BV_\beta + \BV_\beta \bar{\BA}
\; - \; \beta\, \EXP\big[\big(\Ba^\top \BV_\beta\, \Ba\big)\, \Ba
\Ba^\top\big]
\; = \; \beta\, \EXP\big[\rho^2\, \Ba \Ba^\top\big] \, ,
\end{equation}
whose trace is
$2 \tr(\bar{\BA} \BV_\beta) = \beta \sigma^2 + \beta \EXP[(\Ba^\top
\BV_\beta \Ba) \norm{\Ba}^2]$. This is \eqref{eq:Aremainder}, and it
identifies the remainder: $R_\beta \ge 0$ because
$\BV_\beta \succeq \BZe$, while
$\EXP[(\Ba^\top \BV_\beta \Ba) \norm{\Ba}^2] \le A_{\max}^2\,
\EXP[\Ba^\top \BV_\beta \Ba] = A_{\max}^2 \tr(\bar{\BA} \BV_\beta)$
gives
$\big(2 - \kappa\big) \tr(\bar{\BA} \BV_\beta) \le \beta \sigma^2$,
the prefactor being positive by \eqref{eq:Astepcond1}. Substituting
this bound back into $R_\beta = \tfrac{\beta A_{\max}^2}{2}
\tr(\bar{\BA} \BV_\beta)$'s upper estimate yields the stated bound on
$R_\beta$, and
$\lambda_{\min}(\bar{\BA}) \tr \BV_\beta \le \tr(\bar{\BA}
\BV_\beta)$ yields the second part of \eqref{eq:Asteadynorm1}.

{\em Deriving the clean bound.} $\BS = \BZe$ is feasible, so
$\EXP[\rho^2] \le \EXP[c^2]$; moreover
$\sigma^2 = \EXP[\rho^2 \norm{\Ba}^2] \le A_{\max}^2 \EXP[\rho^2]$,
hence $\beta \sigma^2 \le \kappa\, \EXP[\rho^2]$ and
\begin{equation}
\tr \BV_\beta \; \le \;
\frac{\kappa}{2 - \kappa}\cdot
\frac{\EXP[\rho^2]}{\lambda_{\min}(\bar{\BA})} \, .
\end{equation}
Adding this to \eqref{eq:Atwosided1} gives
\begin{equation}
\EXP_{\pi_\beta}\big[\norm{\BS}_{\rF}^2\big]
\; \le \; \frac{1}{\lambda_{\min}(\bar{\BA})}
\left( \EXP[c^2] - \EXP[\rho^2] + \frac{\kappa}{2 - \kappa}
\EXP[\rho^2] \right) ,
\end{equation}
and $-1 + \kappa/(2-\kappa) = -2(1-\kappa)/(2-\kappa)$ is the
coefficient in \eqref{eq:Aclean}. For $\kappa \le 1$ that coefficient
is $\le 0$, so dropping the $\EXP[\rho^2]$ term gives
\eqref{eq:Acleanone}; for $\kappa > 1$, using
$\EXP[\rho^2] \le \EXP[c^2]$ gives the factor
$1 + 2(\kappa - 1)/(2 - \kappa) = \kappa/(2 - \kappa)$. \qedhere
\end{proof}

{\bf Residuals lower the bound.}
The $\EXP[\rho^2]$ term in \eqref{eq:Aclean} enters with a {\em
  negative} sign throughout the non-expansive regime $\kappa \le 1$:
overload shrinks the bound but does not inflate it, because residual
energy is energy the bilinear features cannot explain and therefore
cannot store. The two effects of $\rho$ are less memory content
\eqref{eq:Atwosided1} and more steady-state jitter
\eqref{eq:Asteadynorm1}. They cancel exactly at $\kappa = 1$ and the
content term dominates below it. This is why the constant in
\eqref{eq:Acleanone} is $1$ rather than the $3/2$ or $2$ obtained by
bounding the two contributions separately.

\paragraph{Bound on the memory norm for all updates.}
Next, we consider the bound even before the memory
$\BS_t$ reaches the steady state but is in the update phase.
\begin{theoremA}[The bound applies to the {\em running} or {\em online} memory at
  every step: not only in the steady state]
\label{th:online}
Let $\mu_t := \text{law}(\Bs_t) = \mu_0 P^t \in \cP_2$ be the law of
the iterate at step $t$ under \eqref{eq:Aaffine}, so that $\pi_\beta$ is
the unique fixed point of the same kernel, and set
\begin{equation}
q_\beta \; := \; 1 \; - \; \beta\,\big(2 - \kappa\big)\,
\lambda_{\min}(\bar{\BA}) \; \in \; [0, 1) \, .
\end{equation}
Started at $\BS_0 = \BZe$, the running memory obeys, for every $t$ and
in the non-expansive regime $\kappa \le 1$,
\begin{equation} \label{eq:Aonline}
\boxed{\;
\EXP\big[\norm{\BS_t}_{\rF}^2\big]
\; \le \;
\Big(1 + q_\beta^{\,t/2}\Big)^{2}\,
\frac{\EXP[c^2]}{\lambda_{\min}(\bar{\BA})}
\; \le \;
\frac{4\, \EXP[c^2]}{\lambda_{\min}(\bar{\BA})} \, , \;}
\end{equation}
where the transient factor expands as
$\big(1 + q_\beta^{\,t/2}\big)^2 = 1 + 2 q_\beta^{\,t/2} + q_\beta^{t}
\downarrow 1$, so the bound decays geometrically from $4$ to the
steady-state constant $1$ of \eqref{eq:Acleanone}.
For a general step size $\kappa \in (0,2)$ the same statement holds
with $\EXP[c^2]$ replaced by
$\EXP[c^2] \max\{1, \kappa/(2-\kappa)\}$; for a general deterministic
start $\BS_0$ it reads
\begin{equation} \label{eq:Aonlinegen}
\EXP\big[\norm{\BS_t}_{\rF}^2\big]
\; \le \;
\left( B_\beta \; + \; q_\beta^{\,t/2}
\sqrt{\norm{\BS_0 - \BS^{\rL\rS}}_{\rF}^2 + \tr \BV_\beta} \,
\right)^{\!2} ,
\qquad
B_\beta := \EXP_{\pi_\beta}\big[\norm{\BS}_{\rF}^2\big]^{1/2} .
\end{equation}
Nothing on the right-hand side depends on the sequence length: the
expected squared memory norm never exceeds four times its
steady-state bound from the very first step, and
relaxes geometrically to that bound.
\end{theoremA}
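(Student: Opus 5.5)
The plan is to work in the Wasserstein-2 geometry of Lemma~\ref{lem:Aw2} and use the triangle inequality in $L^2$, not to propagate second moments directly. Write $\mu_t = \mu_0 P^t$. By Lemma~\ref{lem:Aw2}, $\cW_2(\mu_t,\pi_\beta) \le (1-\bar\rho)^{t/2}\,\cW_2(\mu_0,\pi_\beta)$. The contraction factor there is $1-\bar\rho$ with $\bar\rho = \gamma_\beta\lambda_{\min}(\bar\BA)$, where $\gamma_\beta = \min_i \beta(2-\beta\norm{\Ba_i}^2)$. Since $\beta\norm{\Ba_i}^2 \le \kappa$, we get $\gamma_\beta \ge \beta(2-\kappa)$, so $1-\bar\rho \le q_\beta$. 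This also shows $q_\beta\in[0,1)$, because $\bar\rho \le 1$ by Lemma~\ref{lem:Acontract}. Hence $\cW_2(\mu_t,\pi_\beta)\le q_\beta^{t/2}\cW_2(\mu_0,\pi_\beta)$.

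Next, convert the distance bound into a norm bound. Let $(\Bs_t,\Bs_\infty)$ be an optimal coupling of $(\mu_t,\pi_\beta)$, which exists as in the comments to Lemma~\ref{lem:Aw2}. Minkowski's inequality in $L^2$ gives
\[
\EXP\big[\norm{\Bs_t}^2\big]^{1/2} \le \EXP\big[\norm{\Bs_\infty}^2\big]^{1/2} + \EXP\big[\norm{\Bs_t-\Bs_\infty}^2\big]^{1/2} = B_\beta + \cW_2(\mu_t,\pi_\beta).
\]
For a deterministic start $\BS_0$, the only coupling of $\delta_{\Bs_0}$ with $\pi_\beta$ is the product coupling. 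So $\cW_2(\mu_0,\pi_\beta)^2 = \EXP_{\pi_\beta}\norm{\Bs_0-\Bs}^2 = \norm{\BS_0-\BS^{\rL\rS}}_\rF^2 + \tr\BV_\beta$. This step uses that the stationary mean is exactly $\BS^{\rL\rS}$, by Theorem~\ref{thm:Asteady}(i), and that the cross term vanishes. Squaring gives \eqref{eq:Aonlinegen}.

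For $\BS_0=\BZe$, the quantity $\cW_2(\mu_0,\pi_\beta)^2$ becomes $\norm{\BS^{\rL\rS}}_\rF^2+\tr\BV_\beta = \EXP_{\pi_\beta}\norm{\BS}_\rF^2 = B_\beta^2$, by the bias--variance split \eqref{eq:Asteadynorm1}. The bound therefore collapses to $(1+q_\beta^{t/2})^2 B_\beta^2$. For $\kappa\le 1$, insert \eqref{eq:Acleanone}, $B_\beta^2\le \EXP[c^2]/\lambda_{\min}(\bar\BA)$; for general $\kappa\in(0,2)$, insert \eqref{eq:Aclean} instead. The final bound by $4$ follows from $q_\beta\le 1$, and the expansion $(1+q_\beta^{t/2})^2 = 1+2q_\beta^{t/2}+q_\beta^t$ decreases monotonically to $1$.

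The only delicate point is the comparison $1-\bar\rho\le q_\beta$: one must check that $\gamma_\beta$ is controlled by the worst-case $\kappa$ uniformly in $i$, which holds since $2-\beta\norm{\Ba_i}^2\ge 2-\kappa$. A second point needs care: that $\cW_2(\mu_0,\pi_\beta)$ equals exactly $B_\beta$ at the zero start, which is what gives the clean factor $(1+q^{t/2})^2$ rather than a looser constant. Both points rest on results already established.
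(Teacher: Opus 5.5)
Your proposal is correct and follows the paper's proof essentially step by step. Like the paper, you use the $\cW_2$ contraction at rate $q_\beta$, the $L^2$ triangle inequality $\EXP[\norm{\BS_t}_{\rF}^2]^{1/2}\le B_\beta+\cW_2(\mu_t,\pi_\beta)$, the exact evaluation $\cW_2(\delta_{\BS_0},\pi_\beta)^2=\norm{\BS_0-\BS^{\rL\rS}}_{\rF}^2+\tr\BV_\beta$ (which equals $B_\beta^2$ at the zero start), and then insert \eqref{eq:Acleanone} or \eqref{eq:Aclean}. The only cosmetic difference is that you import the rate from Lemma~\ref{lem:Aw2} via $\gamma_\beta\ge\beta(2-\kappa)$ (in fact an equality, so $q_\beta=1-\bar{\rho}$), whereas the paper re-derives it from $\EXP[(\BI-\beta\Ba\Ba^\top)^2]\preceq\BI-\beta(2-\kappa)\bar{\BA}$.
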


\begin{proof}
{\em Rate.} Couple two copies of \eqref{eq:Aaffine} synchronously,
i.e.\ drive them with the same $(\Ba_t, c_t)$. The additive term
cancels in the difference $\Bd_t$, leaving
$\Bd_{t+1} = (\BI - \beta \Ba_t \Ba_t^\top) \Bd_t$, and
\begin{equation}
\EXP\big[(\BI - \beta \Ba \Ba^\top)^2\big]
\; = \; \BI - 2\beta \bar{\BA} + \beta^2 \EXP\big[\norm{\Ba}^2 \Ba
\Ba^\top\big]
\; \preceq \; \BI - \beta\big(2 - \kappa\big) \bar{\BA} \, ,
\end{equation}
using $\beta^2 \EXP[\norm{\Ba}^2 \Ba \Ba^\top] \preceq \beta \kappa
\bar{\BA}$. Hence
$\EXP \norm{\Bd_{t+1}}^2 \le q_\beta\, \EXP \norm{\Bd_t}^2$, and
taking the infimum over couplings gives
$\cW_2(\mu_t, \pi_\beta) \le q_\beta^{t/2}\,
\cW_2(\mu_0, \pi_\beta)$. (Equivalently $q_\beta = 1 - \bar{\rho}$ with the
$\bar{\rho} = \gamma_\beta \lambda_{\min}(\bar{\BA})$ of
Lemma~\ref{lem:Acontract}, since $\gamma_\beta \ge \beta(2 - \kappa)$;
$q_\beta$ is written out here only to keep the statement
self-contained.) Since $\Bs \mapsto \norm{\Bs}$
is $1$-Lipschitz, the reverse triangle inequality in $L^2$ gives
$\EXP[\norm{\BS_t}_{\rF}^2]^{1/2} \le B_\beta + \cW_2(\mu_t,
\pi_\beta)$.

{\em The initial term is not an estimate.} For a deterministic start
the only coupling of $\mu_0 = \delta_{\BS_0}$ with $\pi_\beta$ is the
product, so the infimum defining $\cW_2$ is attained and
\begin{equation} \label{eq:Aw2init}
\cW_2(\mu_0, \pi_\beta)^2
\; = \; \EXP_{\pi_\beta}\big[\norm{\BS - \BS_0}_{\rF}^2\big]
\; = \; \norm{\BS_0 - \BS^{\rL\rS}}_{\rF}^2 \; + \; \tr \BV_\beta \, ,
\end{equation}
the second equality being the bias--variance split around the
stationary mean $\BS^{\rL\rS}$ (Theorem~\ref{thm:Asteady}\,(i)). At
$\BS_0 = \BZe$ the right-hand side is exactly
$\norm{\BS^{\rL\rS}}_{\rF}^2 + \tr \BV_\beta = B_\beta^2$ by
\eqref{eq:Asteadynorm1}, i.e.\ $\cW_2(\mu_0, \pi_\beta) = B_\beta$:
the distance from the empty memory to the steady state {\em is} the
steady-state size. The two terms therefore collapse into one,
\begin{equation}
\EXP\big[\norm{\BS_t}_{\rF}^2\big]^{1/2}
\; \le \; \big(1 + q_\beta^{\,t/2}\big)\, B_\beta \, ,
\end{equation}
and squaring and inserting
$B_\beta^2 \le \EXP[c^2]/\lambda_{\min}(\bar{\BA})$ from
\eqref{eq:Acleanone} gives \eqref{eq:Aonline}. 
Squaring preserves the inequality since both sides are nonnegative.
Finally
$q_\beta^{\,t/2} \le 1$ bounds the transient factor by $4$. The
general-$\kappa$ variant replaces $B_\beta^2$ by the bound of
\eqref{eq:Aclean}, and the general-$\BS_0$ variant
\eqref{eq:Aonlinegen} keeps $\cW_2(\mu_0,\pi_\beta)$ in the form
\eqref{eq:Aw2init} instead of collapsing it. \qedhere
\end{proof}

\paragraph{Summary.}
The least-squares memory satisfies the exact energy identity
$(\Bs^{\rL\rS})^\top \bar{\BA}\, \Bs^{\rL\rS} = \EXP[c^2] -
\EXP[\rho^2]$, which sandwiches its Frobenius norm between
$(\EXP[c^2] - \EXP[\rho^2])/\lambda_{\max}$ and
$(\EXP[c^2] - \EXP[\rho^2])/\lambda_{\min}$. This is the explained target
energy divided by the extreme feature-covariance eigenvalues. The
bound is an equality for isotropic designs, reproduces
$\sum_i c_i^2$ in the orthonormal design, extends to the running
iterates with an additive $O(\beta)$ fluctuation term, shrinks under
overload since the residual energy is subtracted.

\section{DeltaNet Steady-State Norm Bound}
\label{sec:ADeltaSteady}

\subsection{Overview}
\label{sec:Aoverview}
  The DeltaNet steady-state norm bound can be derived 
  essentially verbatim to kronLSTM. DeltaNet's update is affine in
  $\BS_t$ and acts on each {\em row} of the memory independently;
  each row obeys exactly the scalar delta-rule recursion analyzed for
  KronLSTM, with the association direction $\Ba_t = \Bk_t \otimes
  \Bv_t$ replaced by the key $\Bk_t$ and the scalar target $c_t$
  replaced by the value coordinate $v_{t,j}$. Every step of the
  argument --- the energy identity, the two-sided bound, the exact
  stationary covariance bound, and the Wasserstein contraction that
  carries all of it to the running memory --- therefore transfers, and
  yields
  $\EXP_{\pi_\beta}[\norm{\BS}_{\rF}^2] \le 2\,\EXP[\norm{\Bv}^2] /
  \lambda_{\min}(\bar{\BK})$ with $\bar{\BK} = \EXP[\Bk\Bk^\top]$.
  The practical consequence is a scoping correction: a
  length-independent steady-state bound is a property of the {\em
    delta-rule mechanism}, shared by KronLSTM and DeltaNet, which
  separates both from mLSTM and from any stability argument resting
  on a forget gate. We also record where the two genuinely differ: in the
  step-size condition, in what the bound is a bound {\em on}, and in
  the conditioning constant that sets its scale.

\subsection{DeltaNet Affine Recursion}
\label{sec:Arecursion}

DeltaNet's update is
\begin{equation} \label{eq:Adelta}
\BS_{t+1} \; = \; \BS_t \; - \; \beta_t \big(\BS_t \Bk_t - \Bv_t\big)
\Bk_t^\top
\; = \; \BS_t \big(\BI - \beta_t \Bk_t \Bk_t^\top\big)
\; + \; \beta_t\, \Bv_t \Bk_t^\top \, ,
\end{equation}
and Gated DeltaNet's is the same with $\BS_t$ replaced by
$\Rf_t \BS_t$ in the first term. Two features are immediate. First,
\eqref{eq:Adelta} is {\em affine} in $\BS_t$, so the pair
(memory, random input) generates a Markov chain of exactly the type
analyzed for the Kronecker Delta Rule. Second, and decisively, the
memory is acted on from the {\em right} only, so the rows of $\BS$ do
not mix. Transposing \eqref{eq:Adelta},
\begin{equation}
\BS_{t+1}^\top \; = \; \big(\BI - \beta_t \Bk_t \Bk_t^\top\big)
\BS_t^\top \; + \; \beta_t\, \Bk_t \Bv_t^\top \, ,
\end{equation}
and reading off the $j$-th column, i.e.\ the $j$-th row
$\Bs^{(j)} := \BS^\top \Be_j$ of $\BS$,
\begin{equation} \label{eq:Arowrec}
\Bs^{(j)}_{t+1} \; = \; \big(\BI - \beta_t \Bk_t \Bk_t^\top\big)\,
\Bs^{(j)}_t \; + \; \beta_t\, v_{t,j}\, \Bk_t \, ,
\qquad j = 1, \dots, d \, .
\end{equation}

\paragraph{The structural correspondence.}
Equation \eqref{eq:Arowrec} is {\em identical} to the vectorized
KronLSTM recursion
$\Bs_{t+1} = (\BI - \beta \Ba_t \Ba_t^\top)\Bs_t + \beta c_t \Ba_t$
under the substitution
\begin{equation}
\Ba_t \; = \; \Bk_t \otimes \Bv_t \;\; \longmapsto \;\; \Bk_t \, ,
\qquad
c_t \;\; \longmapsto \;\; v_{t,j} \, ,
\qquad
\dR^{d^2} \;\; \longmapsto \;\; \dR^{d} \, .
\end{equation}
DeltaNet is thus $d$ independent copies of the {\em same} scalar
delta rule all driven by a common
key $\Bk_t$ and differing only in their scalar targets $v_{t,j}$.
KronLSTM is a single copy of that rule on a $d^2$-dimensional state
with the richer direction $\Bk_t \otimes \Bv_t$. Neither is a
generalization of the other; both are Kaczmarz/LMS iterations, which
is why the same theory covers both.

\subsection{Steady-State Norm Bound}
\label{sec:Abound}

Model the stream as i.i.d.\ triples $(\Bk_t, \Bv_t, \beta_t) \sim p$
and put
\begin{align}
\bar{\BK} \; &:= \; \EXP\big[\Bk \Bk^\top\big] \; \succ \; \BZe \, ,
\qquad
\BS^{\rL\rS} \; := \; \EXP\big[\Bv \Bk^\top\big]\, \bar{\BK}^{-1}
\; = \; \argmin_{\BS}\; \EXP\big[\norm{\BS \Bk - \Bv}^2\big] \, , \\ \nonumber
\Brh \; &:= \; \Bv - \BS^{\rL\rS} \Bk \, ,
\end{align}
so that $\EXP[\Brh \Bk^\top] = \BZe$. With a random, input-dependent
$\beta_t$ the same statements hold with $\bar{\BK}$ replaced by the
$\beta$-weighted second moment $\EXP[\beta \Bk\Bk^\top]$ and
$\BS^{\rL\rS}$ by the correspondingly weighted least-squares
solution; nothing else changes. Assume the step size satisfies
$\beta_t \norm{\Bk_t}^2 \le 1$, which for Gated DeltaNet's
sigmoid-bounded $\beta_t \in (0,1)$ and unit-norm keys holds by
construction.

\begin{propositionA}[Two-sided steady-state norm bound for DeltaNet]
\label{prop:Adelta}
Let $\pi_\beta$ be the steady-state law of \eqref{eq:Adelta} and
$\BSi_\beta$ the covariance of $\vecop(\BS)$ under it. Then the
energy identity
\begin{equation} \label{eq:Adenergy}
\TR\big(\BS^{\rL\rS} \bar{\BK}\, \BS^{\rL\rS\top}\big)
\; = \; \EXP\big[\norm{\Bv}^2\big] \, - \, \EXP\big[\norm{\Brh}^2\big]
\end{equation}
holds, hence
\begin{equation} \label{eq:Adtwosided}
\frac{\EXP\norm{\Bv}^2 - \EXP\norm{\Brh}^2}{\lambda_{\max}(\bar{\BK})}
\; \le \; \norm{\BS^{\rL\rS}}_{\rF}^2 \; \le \;
\frac{\EXP\norm{\Bv}^2 - \EXP\norm{\Brh}^2}{\lambda_{\min}(\bar{\BK})}
\, ,
\end{equation}
and the steady-state memory satisfies, exactly and without any
expansion in $\beta$,
\begin{equation} \label{eq:Adsteady}
\EXP_{\pi_\beta}\big[\norm{\BS}_{\rF}^2\big]
\; = \; \norm{\BS^{\rL\rS}}_{\rF}^2 \; + \; \TR \BSi_\beta \, ,
\qquad
\TR \BSi_\beta \; \le \;
\frac{\beta\, \EXP\big[\norm{\Brh}^2 \norm{\Bk}^2\big]}
{\lambda_{\min}(\bar{\BK})} \, ,
\end{equation}
so that in particular
\begin{equation} \label{eq:Adclean}
\EXP_{\pi_\beta}\big[\norm{\BS}_{\rF}^2\big]
\; \le \; \frac{2\, \EXP\big[\norm{\Bv}^2\big]}
{\lambda_{\min}(\bar{\BK})} \, .
\end{equation}
\end{propositionA}

\begin{proof}
Apply the KronLSTM argument to \eqref{eq:Arowrec} for each $j$ and
sum. The normal equations $\bar{\BK} \Bs^{(j),\rL\rS} = \EXP[v_j \Bk]$
give $\EXP[\rho_j^2] = \EXP[v_j^2] - \Bs^{(j),\rL\rS\top} \bar{\BK}
\Bs^{(j),\rL\rS}$; summing over $j$ and using
$\sum_j \Bs^{(j)\top} \bar{\BK} \Bs^{(j)} = \TR(\BS \bar{\BK}
\BS^\top)$ yields \eqref{eq:Adenergy}. Bounding each summand's Rayleigh
quotient between $\lambda_{\min}(\bar{\BK}) \norm{\Bs^{(j)}}^2$ and
$\lambda_{\max}(\bar{\BK}) \norm{\Bs^{(j)}}^2$ and summing, together
with $\sum_j \norm{\Bs^{(j)}}^2 = \norm{\BS}_{\rF}^2$, gives
\eqref{eq:Adtwosided}. For \eqref{eq:Adsteady}, write
$\BE := \BS - \BS^{\rL\rS}$, so that \eqref{eq:Adelta} becomes
$\BE_{t+1} = \BE_t(\BI - \beta_t \Bk_t \Bk_t^\top) + \beta_t \Brh_t
\Bk_t^\top$ with $(\Bk_t, \Brh_t)$ independent of $\BE_t$. Taking
squared Frobenius norms in stationarity, the cross term vanishes
because $\EXP[\BE_t] = \BZe$, leaving
\begin{equation}
\EXP \norm{\BE}_{\rF}^2 \; = \; \EXP\Big[\TR\Big(\BE\,
\EXP\big[(\BI - \beta \Bk \Bk^\top)^2\big]\, \BE^\top\Big)\Big]
\; + \; \beta^2\, \EXP\big[\norm{\Brh}^2 \norm{\Bk}^2\big] \, .
\end{equation}
Since $\beta \norm{\Bk}^2 \le 1$,
\begin{equation}
\EXP\big[(\BI - \beta \Bk \Bk^\top)^2\big]
\; = \; \BI - 2\beta \bar{\BK} + \beta^2 \EXP\big[\norm{\Bk}^2
\Bk \Bk^\top\big] \; \preceq \; \BI - \beta \bar{\BK} \, ,
\end{equation}
and cancelling $\EXP \norm{\BE}_{\rF}^2$ from both sides gives
$\EXP[\TR(\BE \bar{\BK} \BE^\top)] \le \beta \EXP[\norm{\Brh}^2
\norm{\Bk}^2]$; the claim follows from
$\lambda_{\min}(\bar{\BK}) \EXP\norm{\BE}_{\rF}^2 \le \EXP[\TR(\BE
\bar{\BK} \BE^\top)]$ and $\TR \BSi_\beta = \EXP\norm{\BE}_{\rF}^2$.
Finally $\BS = \BZe$ is feasible, so
$\EXP\norm{\Brh}^2 \le \EXP\norm{\Bv}^2$, which bounds both terms of
\eqref{eq:Adsteady} by $\EXP\norm{\Bv}^2 / \lambda_{\min}(\bar{\BK})$
and gives \eqref{eq:Adclean}.
\end{proof}

\begin{corollaryA}[Transfer to the running or online memory]
\label{cor:AonlineDelta}
With $\mu_t$ the law of $\BS_t$ and
$B_\beta := \EXP_{\pi_\beta}[\norm{\BS}_{\rF}^2]^{1/2}$,
\begin{equation}
\EXP\big[\norm{\BS_t}_{\rF}^2\big]^{1/2}
\; \le \; B_\beta \; + \;
\big(1 - \beta \lambda_{\min}(\bar{\BK})\big)^{t/2}\,
\cW_2(\mu_0, \pi_\beta) \, ,
\qquad
B_\beta^2 \; \le \;
\frac{2\, \EXP\norm{\Bv}^2}{\lambda_{\min}(\bar{\BK})} \, .
\end{equation}
\end{corollaryA}

\begin{proof}
Synchronous coupling: driving two copies with the same
$(\Bk_t, \Bv_t, \beta_t)$ cancels the additive term, leaving
$\BD_{t+1} = \BD_t(\BI - \beta_t \Bk_t \Bk_t^\top)$, whence
$\EXP\norm{\BD_{t+1}}_{\rF}^2 \le (1 - \beta
\lambda_{\min}(\bar{\BK})) \EXP\norm{\BD_t}_{\rF}^2$ by the
same positive-semidefinite inequality as above. Take the infimum over
couplings and use that $\norm{\cdot}_{\rF}$ is $1$-Lipschitz.
\end{proof}

\subsection{Relation to the KronLSTM Bound}
\label{sec:ADeltakron}

The bound is a consequence of the delta rule being a relaxed
projection that is a correction proportional to a residual that shrinks as
the memory improves. Both models have that structure,
therefore separates the delta-rule family from mLSTM, whose Hebbian
write is uncorrected and has no fixed point to contract toward.
It separates {\em the steady-state argument} from {\em the
non-expansiveness argument}, whose forget-gate bound
$\beta C A_{\max}/(1-\Rf)$ is $\Omega(T)$ once $\Rf \to 1$ as long
contexts require. 

\paragraph{The constants are not comparable.}
The two bounds read
\begin{equation}
\text{KronLSTM:} \;\;
\frac{\EXP[c^2]}{\lambda_{\min}(\bar{\BA})} \, ,
\qquad
\text{DeltaNet:} \;\;
\frac{2\, \EXP[\norm{\Bv}^2]}{\lambda_{\min}(\bar{\BK})} \, ,
\qquad
\bar{\BA} = \EXP\big[\Ba \Ba^\top\big] \in \dR^{d^2 \times d^2},
\;\;
\bar{\BK} \in \dR^{d \times d} \, .
\end{equation}
These are bounds on the same object, $\norm{\BS}_{\rF}$, but they are
not bounds for the same task: KronLSTM's stream carries one scalar
$c_t$ per step, DeltaNet's a $d$-vector $\Bv_t$, so $\EXP[c^2]$ and
$\EXP\norm{\Bv}^2$ measure different amounts of information being
written. The denominators also differ structurally:
$\bar{\BK} \succ \BZe$ requires only that the keys span $\dR^d$,
whereas $\bar{\BA} \succ \BZe$ requires the $\Ba_i = \Bk_i \otimes
\Bv_i$ to span $\dR^{d^2}$, and $\lambda_{\min}(\bar{\BA})$ is
typically the smaller of the two. The reason is that for independent isotropic keys
and values, $\bar{\BA} = \EXP[\Bk\Bk^\top] \otimes \EXP[\Bv\Bv^\top]$
gives $\lambda_{\min}(\bar{\BA}) = \lambda_{\min}(\bar{\BK})\,
\lambda_{\min}(\EXP[\Bv\Bv^\top])$, a product of two small numbers. On the other hand 
the constant 2 appears in the bound of the DeltaNet. Which
constant is smaller in a given setting is an empirical question, not
a structural one.

\paragraph{DeltaNet has a smaller bound.}
The step-size condition is milder. KronLSTM requires
$\beta \norm{\Bk_t}^2 \norm{\Bv_t}^2 \le 1$, so a large value norm
eats directly into the admissible step size and hence into the
contraction rate $\beta \lambda_{\min}(\bar{\BA})$; DeltaNet requires
only $\beta_t \norm{\Bk_t}^2 \le 1$, in which $\Bv_t$ does not appear
at all. It enters the drive but never the stability condition.

\paragraph{KronLSTM has advantages.}
$d^2$ rather than $d$ exactly satisfiable associations
per scalar slot, interference gated by the product of key {\em and}
value overlap rather than key overlap alone, an update direction
$\Bv_t \Bk_t^\top$ fixed by the input pair rather than the
state-dependent $\Bu_t \Bk_t^\top$ so that a write's footprint on
unrelated memories does not compound with context length,
single-binding write/rewrite/erase, and linearity of the memory in
the target sequence. None of these follow from, or are affected by,
the steady-state bound.

\section{Retention Rates for KronLSTM and DeltaNet}
\label{sec:Aretention}

\subsection{KronLSTM Setting}

Let $n > d^2$ associations $(\Bk_i, \Bv_i, c_i)$, $1 \le i \le n$, be
given. Vectorized, with $\Bs = \vecop(\BS) \in \dR^{d^2}$ and
$\Ba_i = \Bk_i \otimes \Bv_i$
($\norm{\Ba_i} = \norm{\Bk_i} \norm{\Bv_i}$), the associations form
the linear system $\BA \Bs = \Bc$ with $\BA \in \dR^{n \times d^2}$
stacking the rows $\Ba_i^\top$; we assume $\BA$ has full column rank
$d^2$ (the features span $\dR^{d^2}$). At each step an index $i_t$
is drawn with probabilities $p_i > 0$, independently of the past, and
the rule is applied with fixed step size $\beta$ satisfying
\begin{equation} \label{eq:Astepcond2}
0 \; < \; \beta \, \norm{\Ba_i}^2 \; < \; 2
\qquad \text{for all } i \, .
\end{equation}
Define
\begin{align}
\bar{\BA} \; &:= \; \EXP\big[\Ba \Ba^\top\big]
\; = \; \sum_{i=1}^n p_i\, \Ba_i \Ba_i^\top \; \succ \; \BZe \, ,
&\bar{\Bb} \; &:= \; \EXP\big[c\, \Ba\big]
\; = \; \sum_{i=1}^n p_i\, c_i\, \Ba_i \, , \\
\Bs^{\rL\rS} \; &:= \; \bar{\BA}^{-1} \bar{\Bb}
\; = \; \argmin_{\Bs} \sum_{i=1}^n p_i \big(\Ba_i^\top \Bs -
c_i\big)^2 \, ,
&\rho_i \; &:= \; c_i - \Ba_i^\top \Bs^{\rL\rS}
\quad \text{(optimal residuals)} \, ,
\end{align}
so that the normal equations read
\begin{equation} \label{eq:Anormal2}
\EXP\big[\rho\, \Ba\big] \; = \; \sum_{i=1}^n p_i\, \rho_i\, \Ba_i
\; = \; \BZe
\qquad \Longleftrightarrow \qquad
\bar{\Bb} \; = \; \bar{\BA}\, \Bs^{\rL\rS} \, .
\end{equation}
For uniform sampling, $\bar{\BA} = \tfrac{1}{n} \BA^\top \BA$ and
$\Bs^{\rL\rS} = (\BA^\top \BA)^{-1} \BA^\top \Bc$ is the ordinary
least-squares solution, i.e.\ the unique minimizer of $\rL$.
Throughout, expectations without subscript are over $i \sim p$, and
$\EXP_{\pi}$ denotes expectation of a random state distributed
according to a law $\pi$. Vectorized, the rule and its error form are
\begin{equation} \label{eq:Aerrrec2}
\Bs_{t+1} \; = \; \BM_{i_t} \Bs_t + \beta\, c_{i_t} \Ba_{i_t} \, ,
\qquad
\Be_{t+1} \; = \; \BM_{i_t}\, \Be_t \; + \; \beta\, \rho_{i_t}\,
\Ba_{i_t} \, ,
\qquad
\BM_i \; := \; \BI - \beta\, \Ba_i \Ba_i^\top \, ,
\end{equation}
with $\Be_t := \Bs_t - \Bs^{\rL\rS}$; the error form follows by
inserting $c_i = \Ba_i^\top \Bs^{\rL\rS} + \rho_i$. The constants
\begin{equation}
\gamma_\beta \; := \; \min_i\, \beta\,\big(2 - \beta
\norm{\Ba_i}^2\big) \; > \; 0 \, ,
\qquad
\bar{\rho} \; := \; \gamma_\beta\, \lambda_{\min}(\bar{\BA}) \; \in \;
(0, 1] \, ,
\qquad
\sigma^2 \; := \; \EXP\big[\rho^2 \norm{\Ba}^2\big] 
\end{equation}
satisfy $0 < \bar{\rho} \le 1$.

Under \eqref{eq:Astepcond2}, the expected iterate obeys the {\em exact}
deterministic recursion
\begin{equation} \label{eq:Ameanrec2}
\EXP\big[\Bs_{t+1}\big] \; = \;
\big(\BI - \beta \bar{\BA}\big)\, \EXP\big[\Bs_t\big]
\; + \; \beta\, \bar{\Bb}
\qquad \Longleftrightarrow \qquad
\EXP\big[\Be_{t+1}\big] \; = \;
\big(\BI - \beta \bar{\BA}\big)\, \EXP\big[\Be_t\big] \, ,
\end{equation}
whose unique fixed point is $\Bs^{\rL\rS}$, and
\begin{equation}
\norm{\EXP[\BS_t] - \BS^{\rL\rS}}_{\rF}
\; \le \; r^t\, \norm{\BS_0 - \BS^{\rL\rS}}_{\rF} \, ,
\qquad
r \; := \; \max_k\, \abs{1 - \beta \lambda_k(\bar{\BA})} \; < \; 1 \,
,
\end{equation}
so $\EXP[\BS_t] \to \BS^{\rL\rS}$ linearly. The convergence of the
mean is {\em exactly unbiased}: the fixed point of \eqref{eq:Ameanrec2}
is $\Bs^{\rL\rS}$ itself, with no $O(\beta)$ correction.

\subsection{DeltaNet Setting}
\label{sec:ADeltaSetting}

DeltaNet's update is
\begin{equation} \label{eq:Adelta1}
\BS_{t+1} \; = \; \BS_t \; - \; \beta_t \big(\BS_t \Bk_t - \Bv_t\big)
\Bk_t^\top
\; = \; \BS_t \big(\BI - \beta_t \Bk_t \Bk_t^\top\big)
\; + \; \beta_t\, \Bv_t \Bk_t^\top \, ,
\end{equation}
and Gated DeltaNet's is the same with $\BS_t$ replaced by
$\Rf_t \BS_t$ in the first term. Two features are immediate. First,
\eqref{eq:Adelta1} is {\em affine} in $\BS_t$, so the pair
(memory, random input) generates a Markov chain of exactly the type
analyzed for the Kronecker Delta Rule. Second, and decisively, the
memory is acted on from the {\em right} only, so the rows of $\BS$ do
not mix. Transposing \eqref{eq:Adelta1},
\begin{equation}
\BS_{t+1}^\top \; = \; \big(\BI - \beta_t \Bk_t \Bk_t^\top\big)
\BS_t^\top \; + \; \beta_t\, \Bk_t \Bv_t^\top \, ,
\end{equation}
and reading off the $j$-th column, i.e.\ the $j$-th row
$\Bs^{(j)} := \BS^\top \Be_j$ of $\BS$,
\begin{equation} \label{eq:Arowrec1}
\Bs^{(j)}_{t+1} \; = \; \big(\BI - \beta_t \Bk_t \Bk_t^\top\big)\,
\Bs^{(j)}_t \; + \; \beta_t\, v_{t,j}\, \Bk_t \, ,
\qquad j = 1, \dots, d \, .
\end{equation}
We will use later the key covariance matrix $\bar{\BK} = \EXP[\Bk \Bk^\top]$ and the value covariance matrix $\bar{\BV} = \EXP[\Bv \Bv^\top]$. 

\paragraph{The structural correspondence.}
Equation \eqref{eq:Arowrec1} is {\em identical} to the vectorized
KronLSTM recursion
$\Bs_{t+1} = (\BI - \beta \Ba_t \Ba_t^\top)\Bs_t + \beta c_t \Ba_t$
under the substitution
\begin{equation}
\Ba_t \; = \; \Bk_t \otimes \Bv_t \;\; \longmapsto \;\; \Bk_t \, ,
\qquad
c_t \;\; \longmapsto \;\; v_{t,j} \, ,
\qquad
\dR^{d^2} \;\; \longmapsto \;\; \dR^{d} \, .
\end{equation}
DeltaNet is thus $d$ independent copies of the {\em same} scalar
delta rule all driven by a common
key $\Bk_t$ and differing only in their scalar targets $v_{t,j}$.
KronLSTM is a single copy of that rule on a $d^2$-dimensional state
with the richer direction $\Bk_t \otimes \Bv_t$. Neither is a
generalization of the other; both are Kaczmarz/LMS iterations, which
is why the same theory covers both.

\subsection{KronLSTM Retention of the Initial Memory}

We consider KronLSTM and the Kronecker Delta Rule.
We ask how much of an initial memory $\Bs_0$ survives after $t$ steps.
The mean recursion \eqref{eq:Ameanrec2} is affine, so it separates the
two contributions exactly. Writing $\BM := \BI - \beta \bar{\BA}$ and
unrolling,
\begin{equation} \label{eq:AmeanrecN}
\EXP\big[\Bs_t\big]
\; = \; \BM^t \Bs_0
\; + \; \beta \sum_{\tau=0}^{t-1} \BM^{\tau}\, \bar{\Bb}
\; = \; \underbrace{\BM^t\, \Bs_0}_{\text{initial memory}}
\; + \; \underbrace{\big(\BI - \BM^t\big)\, \Bs^{\rL\rS}}_{\text{acquired memory}} \, ,
\end{equation}
where the geometric sum was evaluated with
$\beta \sum_{\tau=0}^{t-1}\BM^\tau = (\BI - \BM^t)(\BI - \BM)^{-1}\beta
= (\BI - \BM^t)\bar{\BA}^{-1}$ and $\bar{\BA}^{-1}\bar{\Bb} =
\Bs^{\rL\rS}$; equivalently
$\EXP[\Bs_t] - \Bs^{\rL\rS} = \BM^t(\Bs_0 - \Bs^{\rL\rS})$, which is
the error recursion \eqref{eq:Aerrrec2} in the mean.
Everything in $\EXP[\Bs_t]$ that depends on $\Bs_0$ sits in the first
term, and it depends on it linearly. The natural measure of retention
is therefore the read-out of that term along the stored direction,
\begin{equation} \label{eq:Aretdef}
\cR_t(\Bs_0) \; := \;
\Bs_0^\top \frac{\partial\, \EXP[\Bs_t]}{\partial\, \Bs_0}\, \Bs_0
\; = \; \Bs_0^\top \BM^t\, \Bs_0 \, ,
\qquad \norm{\Bs_0} = 1 \, ,
\end{equation}
the fraction of a unit initial memory still legible after $t$ steps.
It equals $1$ at $t = 0$ and requires no assumption relating $\Bs_0$ to
$\Bs^{\rL\rS}$: the acquired term is filtered out by differentiation
rather than by orthogonality.

\begin{propositionA}[Retention rate]
\label{prop:Aretention}
Assume $\kappa := \beta\, A_{\max}^2 \le 1$ with
$A_{\max}^2 = \max_i \norm{\Ba_i}^2$, so that
$\beta\,\lambda_{\max}(\bar{\BA}) \le 1$ and $\BM \succeq \BZe$. Then
for every unit $\Bs_0$
\begin{equation} \label{eq:Aret2sided}
\big(1 - \beta\, \lambda_{\max}(\bar{\BA})\big)^t
\;\; \le \;\; \cR_t(\Bs_0) \;\; \le \;\;
\big(1 - \beta\, \lambda_{\min}(\bar{\BA})\big)^t \, ,
\end{equation}
both bounds being attained, at the corresponding eigenvectors of
$\bar{\BA}$. Averaged over a uniformly random unit $\Bs_0$ the
retention is exactly the normalized trace,
\begin{equation} \label{eq:Aretavg}
\EXP_{\Bs_0}\big[\cR_t(\Bs_0)\big]
\; = \; \frac{\TR\, \BM^t}{d^2}
\; = \; \frac{1}{d^2} \sum_{k=1}^{d^2}
\big(1 - \beta\, \lambda_k(\bar{\BA})\big)^t \, ,
\end{equation}
and the {\em retention horizon} --- the number of steps after which the
slowest mode has decayed to $1/\Rre$ --- is
$t_{1/\Rre} = 1/\log\big(1-\beta\lambda_{\min}(\bar{\BA})\big)^{-1}
\simeq 1/\big(\beta\, \lambda_{\min}(\bar{\BA})\big)$.
\end{propositionA}

\begin{proof}
$\bar{\BA}$ is symmetric, so $\BM = \BI - \beta\bar{\BA}$ is symmetric
with eigenvalues $1 - \beta\lambda_k(\bar{\BA}) \in [0,1)$ under
$\beta\lambda_{\max} \le 1$, hence $\BM^t$ is symmetric positive
semidefinite with eigenvalues $(1-\beta\lambda_k)^t$.
Equation~\eqref{eq:Aret2sided} is the Rayleigh-quotient bound for
$\BM^t$, with equality on its extreme eigenvectors, which are those of
$\bar{\BA}$ for $\lambda_{\max}$ and $\lambda_{\min}$ respectively.
For \eqref{eq:Aretavg}, $\EXP[\Bs_0 \Bs_0^\top] = \BI/d^2$ for a
uniform unit vector in $\dR^{d^2}$, so
$\EXP[\Bs_0^\top \BM^t \Bs_0] = \TR(\BM^t \EXP[\Bs_0\Bs_0^\top])
= \TR(\BM^t)/d^2$.
The step-size condition holds because
$\lambda_{\max}(\bar{\BA}) \le \max_i \norm{\Ba_i}^2 = A_{\max}^2$,
$\bar{\BA}$ being a convex combination of the $\Ba_i \Ba_i^\top$.
\end{proof}

\subsection{DeltaNet Retention of the Initial Memory}

\begin{corollaryA}[DeltaNet retention]
\label{cor:Aretdelta}
By the row decomposition \eqref{eq:Arowrec1}, DeltaNet's mean memory
obeys $\EXP[\BS_t] = \BS_0 \BN^t + \BS^{\rL\rS}(\BI - \BN^t)$ with
$\BN := \BI - \beta \bar{\BK}$, so with
$\cR_t(\BS_0) := \abra{\BS_0, \BS_0 \BN^t}_{\rF}$ and
$\norm{\BS_0}_{\rF} = 1$,
\begin{equation} \label{eq:Aretdelta}
\big(1 - \beta\, \lambda_{\max}(\bar{\BK})\big)^t
\;\; \le \;\; \cR_t(\BS_0) \;\; \le \;\;
\big(1 - \beta\, \lambda_{\min}(\bar{\BK})\big)^t \, ,
\qquad
\EXP_{\BS_0}\big[\cR_t\big] \; = \; \frac{\TR\, \BN^t}{d} \, ,
\end{equation}
under $\beta \max_i \norm{\Bk_i}^2 \le 1$. The statement is
Proposition~\ref{prop:Aretention} under the substitution
$\bar{\BA} \mapsto \bar{\BK}$, $d^2 \mapsto d$ of
Section~\ref{sec:Arecursion}.
\end{corollaryA}

\subsection{KronLSTM Retains Longer than DeltaNet}

The two retention rates are governed by the spectra of $\bar{\BA}$ and
$\bar{\BK}$, and for independent keys and values these are related
exactly.

\begin{propositionA}[Spectral comparison]
\label{prop:Aspectral}
Let $\Bk$ and $\Bv$ be independent and let
$\bar{\BV} := \EXP[\Bv \Bv^\top]$. Then
$\bar{\BA} = \bar{\BK} \otimes \bar{\BV}$, so the eigenvalues of
$\bar{\BA}$ are the $d^2$ pairwise products of those of $\bar{\BK}$ and
$\bar{\BV}$, and in particular
$\lambda_{\min}(\bar{\BA}) = \lambda_{\min}(\bar{\BK})\,
\lambda_{\min}(\bar{\BV})$ and
$\lambda_{\max}(\bar{\BA}) = \lambda_{\max}(\bar{\BK})\,
\lambda_{\max}(\bar{\BV})$. If the values are $\ell_2$-normalized,
$\norm{\Bv} = 1$, then $\TR \bar{\BV} = 1$ and hence
\begin{equation} \label{eq:Aspectral}
\lambda_{\min}(\bar{\BA}) \;\; \le \;\;
\frac{\lambda_{\min}(\bar{\BK})}{d} \, ,
\qquad
\lambda_{\max}(\bar{\BA}) \;\; \le \;\; \lambda_{\max}(\bar{\BK}) \, ,
\end{equation}
with equality in the first when the values are isotropic,
$\bar{\BV} = \BI/d$.
\end{propositionA}

\begin{proof}
$\Ba \Ba^\top = (\Bk \otimes \Bv)(\Bk \otimes \Bv)^\top
= \Bk\Bk^\top \otimes \Bv\Bv^\top$ by the mixed-product rule, and
independence factorizes the expectation. The spectrum of a Kronecker
product of symmetric matrices is the set of pairwise products.
For normalized values $\lambda_{\min}(\bar{\BV}) \le \TR\bar{\BV}/d
= 1/d$ and $\lambda_{\max}(\bar{\BV}) \le \TR \bar{\BV} = 1$, giving
\eqref{eq:Aspectral}; isotropy forces $\bar{\BV} = \BI/d$ and the first
inequality becomes an equality.
\end{proof}

Because keys and values are $\ell_2$-normalized in the KronLSTM block,
$\norm{\Ba_i}^2 = \norm{\Bk_i}^2 \norm{\Bv_i}^2 = 1 =
\norm{\Bk_i}^2$, so both models admit exactly the same step sizes
$\beta \le 1$ and the comparison at equal $\beta$ is well posed. Under
Propositions~\ref{prop:Aretention} and \ref{prop:Aspectral}:

\begin{itemize}
\item {\bf The slowest mode decays $d$ times more slowly.} KronLSTM's
worst-case retention $(1-\beta\lambda_{\min}(\bar{\BA}))^t$ dominates
DeltaNet's $(1-\beta\lambda_{\min}(\bar{\BK}))^t$, and the retention
horizons are
\begin{equation} \label{eq:Ahorizon}
t_{1/\Rre}^{\text{KronLSTM}} \;\simeq\;
\frac{1}{\beta\, \lambda_{\min}(\bar{\BA})}
\;\; \ge \;\;
\frac{d}{\beta\, \lambda_{\min}(\bar{\BK})}
\; = \; d\; t_{1/\Rre}^{\text{DeltaNet}} \, ,
\end{equation}
a factor $d$ in context length at which a stored association is still
legible, with equality for isotropic values.
\item {\bf No mode decays faster.} By the second half of
\eqref{eq:Aspectral}, $\lambda_{\max}(\bar{\BA}) \le
\lambda_{\max}(\bar{\BK})$, so even KronLSTM's fastest-decaying mode is
no faster than DeltaNet's. The comparison is therefore not a trade of
some directions against others: the whole spectrum of retention rates
moves in KronLSTM's favour. For isotropic values it moves by exactly
the factor $1/d$, $\lambda_k(\bar{\BA}) = \lambda_k(\bar{\BK})/d$ with
multiplicity $d$ each.
\item {\bf On average, likewise.} Comparing \eqref{eq:Aretavg} with
\eqref{eq:Aretdelta} for isotropic values,
$\TR \BM^t/d^2 = \tfrac1d \sum_k (1 - \beta
\lambda_k(\bar{\BK})/d)^t \ge \tfrac1d \sum_k (1 - \beta
\lambda_k(\bar{\BK}))^t = \TR \BN^t/d$ termwise.
\end{itemize}

\paragraph{Retention is bought from adaptation.}
\label{rem:Atradeoff}
The same matrix $\BM^t$ governs both terms of \eqref{eq:AmeanrecN}: the
acquired memory approaches $\Bs^{\rL\rS}$ at rate $\BI - \BM^t$.
Retaining the initial memory $d$ times longer therefore means
approaching the least-squares memory $d$ times more slowly, at fixed
$\beta$. The statement is not that KronLSTM forgets less for free, but
that its per-step write is $d$ times gentler in {\em every} direction
of an $d^2$-dimensional memory, where DeltaNet's is concentrated on a
$d$-dimensional one --- the same trade-off that appears in the capacity
count ($d^2$ against $d$ exactly satisfiable associations) and in the
norm bound, seen through its time
constant. Long contexts are precisely the regime in which the number of
associations, not the speed of fitting any one of them, is binding.

\section{Further Results}
\label{sec:Afurther-results}

\subsection{400M language models}
\label{subsec:400m-lm-results}

Models with 400M/40B (Table~\ref{tab:mcqa-400m}) are closer to each other 
in their performance. 
KronLSTM has the best average and performs best
on HellaSwag, ARC-easy, ARC-challenge, and WinoGrande.
Mamba-3 is best on PIQA, KDA on LAMBADA, and
GDN-2 on WinoGrande where it ties with KronLSTM.
Compared with mLSTM, the matrix-memory xLSTM variant that KronLSTM builds
on, KronLSTM is better on five of the six tasks, raises the average by
2.1 points, and lowers LAMBADA perplexity from 43.40 to 28.28.
mLSTM is only marginally ahead on PIQA.

\begin{table}[h]
\centering
\small
\setlength{\tabcolsep}{5pt}
\caption{Downstream performance of 400M/40B linear models.
Avg-6 is the mean accuracy over the five commonsense reasoning tasks and LAMBADA.
Best per column in \textbf{bold}, second best \underline{underlined}.}
\begin{tabular}{lcccccc|c|c}
\toprule
Model & Hella & PIQA & ARC-e & ARC-c & Wino & LMB acc & Avg-6 $\uparrow$ & LMB ppl $\downarrow$ \\
\midrule
  GDN              & 41.28 & 66.05 & 57.87 & \underline{31.48} & 49.64 & 32.31 & 46.44 & 34.36 \\
  Mamba-3          & 41.89 & \textbf{67.25} & \underline{60.82} & 31.23 & \underline{52.33} & \underline{35.11} & \underline{48.10} & 29.51 \\
  KDA              & 42.08 & 66.16 & 60.48 & 31.40 & 51.30 & \textbf{36.70} & 48.02 & \textbf{27.84} \\
  GDN-2            & \underline{42.33} & 66.87 & 60.23 & 29.69 & \textbf{52.41} & 33.84 & 47.56 & 29.14 \\
  mLSTM            & 40.29 & \underline{67.19} & 60.77 & 29.78 & 50.75 & 29.87 & 46.29 & 43.40 \\
  \midrule
  KronLSTM         & \textbf{43.09} & 67.03 & \textbf{61.03} & \textbf{32.00} & \textbf{52.41} & 35.05 & \textbf{48.43} & \underline{28.28} \\
\bottomrule
\end{tabular}
\label{tab:mcqa-400m}
\end{table}

\subsection{Ablation on different formulations of KronLSTM}
\label{sec:Aablation-subsec}

Table~\ref{tab:write-rule-ablation} ablates the components of the KronLSTM
write rule at 1.5B/100B. We report Avg-6, the mean accuracy over the five
commonsense reasoning tasks and LAMBADA, together with LAMBADA perplexity,
the average over the six-task real-world recall suite at 2k context, and
the average over the six RULER NIAH tasks of Table~\ref{tab:ruler-k128} at
4k, 8k and 16k. Three variants are compared with the base model of
\eqref{eq:gated-rule}: \\
(1) the $\ell_2$-normalization of $\Bv_t$ is removed; \\
(2) erase and write are coupled by using the target $c_t$ instead of
$\tilde{c}_t = c_t/\beta_t$, so that $\beta_t$ scales both terms as in Gated
DeltaNet; \\
(3) the write gate is fixed to the unit target $c_t \equiv 1$.

On language modeling, variants (1) and (2) stay within 0.3 accuracy points
and 0.25 perplexity points of the base model, with the coupled variant
marginally ahead on both. Fixing the target costs 1.6 accuracy points and
0.8 perplexity points, the largest degradation among the ablations. On the
recall suite, variants (1) and (2) gain 1.0 and 0.5 points, respectively.
On NIAH the ordering reverses: at 4k the base model leads the coupled
variant by 12.6 points and the unnormalized variant by 19.9 points, at 8k
the unnormalized variant is ahead by 2.2 points, and at 16k the base model
leads all three variants. The learned write gate and the decoupled write
therefore matter most for retrieval within the pretraining context, while
their effect on language modeling is small.

\begin{table}[h]
  \centering
  \caption{Write-rule ablation at 1.5B/100B, relative to \textsc{KronLSTM}.
  Avg-6 is the mean of the five commonsense reasoning tasks plus LAMBADA accuracy. Recall is the six-task
  real-world recall suite at 2k context. RULER is averaged at each evaluation length. Best per column in \textbf{bold}, second best \underline{underlined}.}
  {\small
  \begin{tabular}{lcc|c|ccc}
  \toprule
  Ablations & Avg-6 $\uparrow$ & LMB ppl $\downarrow$ & Recall $\uparrow$ & NIAH-Avg@4k & @8k & @16k  \\
  \midrule
      base & \underline{59.72} & \underline{11.716} & 30.3 & \textbf{59.5} & \underline{30.5} & \textbf{14.3} \\
      \midrule
      no $\ell_2$-norm on $\Bv_t$ & 59.41 & 11.844 & \textbf{31.3} & 39.6 & \textbf{32.7} & \underline{12.6}  \\
      $\beta$ on both sides & \textbf{60.00} & \textbf{11.479} & \underline{30.8} & \underline{46.9} & 29.4  & 11.8 \\
      non-learnable target & 58.17 & 12.495 & 30.1 & 41.9 & 26.2 & 10.1 \\
  \bottomrule
  \end{tabular}}
  
\label{tab:write-rule-ablation}
\end{table}

\subsection{Long context extension results}

\begin{table}[h]
\centering
\caption{RULER NIAH for the 32k-context-extended linear models, evaluated on identical documents. Better of the two per task and length in \textbf{bold}.}
{
\begin{tabular}{ll cccccc}
\toprule
Model & Len. & S1 & S2 & S3 & MK & MQ & MV \\
\midrule
\multicolumn{7}{c}{\textit{Linear models}} \\
\midrule
\multirow{4}{*}{GDN-2}             & 4k  & \textbf{100.0} & 74.0 & 68.2 & 28.4 & \textbf{56.5} & \textbf{62.5} \\
& 8k  & \textbf{100.0} & 55.6 & 17.6 & 23.2 & \textbf{38.8} & \textbf{41.9} \\
& 16k & \textbf{100.0} & 10.2 & 7.4 & 9.4 & 7.4 & 8.9 \\
& 32k & \textbf{97.8} & 2.6 & 0.8 & 3.0 & 1.5 & 7.2 \\
\midrule
\multirow{4}{*}{KronLSTM} & 4k  & \textbf{100.0} & \textbf{99.6} & \textbf{92.6} & \textbf{30.8} & 37.1 & 43.9 \\
& 8k  & 99.4 & \textbf{94.0} & \textbf{61.2} & \textbf{28.4} & 33.5 & 36.8 \\
& 16k & 94.6 & \textbf{27.6} & \textbf{41.8} & \textbf{18.0} & \textbf{21.5} & \textbf{30.1} \\
& 32k & 81.4 & \textbf{5.6} & \textbf{8.6} & \textbf{5.0} & \textbf{3.0} & \textbf{11.3} \\
\midrule
\multicolumn{7}{c}{\textit{Hybrid models}} \\
\midrule
  \multirow{4}{*}{GDN-2}       & 4k  & 52.0 & 56.0 & 45.6 & \textbf{49.2} & 24.3 & \textbf{31.4} \\
  & 8k  & 28.6 & 31.2 & 20.2 & 26.6 & 13.0 & 18.2 \\
  & 16k & 13.0 & 16.8 & 9.8 & 18.8 & 6.8 & 14.6 \\
  & 32k & 7.0 & 7.8 & 3.6 & 6.6 & 2.6 & 6.6 \\
\midrule
  \multirow{4}{*}{KronLSTM}    & 4k  & \textbf{86.2} & \textbf{85.4} & \textbf{68.8} & 43.8 & \textbf{29.5} & 31.0 \\
  & 8k  & \textbf{74.6} & \textbf{38.0} & \textbf{30.4} & \textbf{27.0} & \textbf{17.8} & \textbf{24.2} \\
  & 16k & \textbf{55.6} & \textbf{19.2} & \textbf{11.0} & \textbf{19.0} & \textbf{12.5} & \textbf{17.2} \\
  & 32k & \textbf{38.4} & \textbf{8.2} & \textbf{4.8} & \textbf{6.8} & \textbf{6.3} & \textbf{8.3} \\
\bottomrule
\end{tabular}}
\label{tab:ruler-lc32k}
\end{table}

\section{Pretraining Details}
  \label{sec:Apretraining}

All models are pretrained from scratch using \texttt{lm-engine} \cite{mishra24lm_engine} with Triton kernels from \texttt{flash-linear-attention} \cite{yang26fla} whenever available. Every mixer in a comparison shares the corpus, the
tokenizer, the optimizer, the schedule, the batch size, and the seed; we only change the sequence mixer blocks. 
The models were trained on 16 nodes of 4$\times$NVIDIA GH200. Training is pure data parallelism under the FSDP-2 with bf16 mixed-precision.

\paragraph{Data and tokenization}

We train on the FineWeb-Edu 100BT sample, tokenized with the Llama 2 tokenizer with its standard vocab size of 32,000. Documents are concatenated, separated by the EOS token, and packed into fixed-length sequences of 8192 tokens; This is applied identically to every experiment. 

\paragraph{Architecture}
All models are decoder-only and identical outside the sequence mixer. We implemented the block structures of every baseline as in the original works. Mamba-3  was trained using the standard expansion factor of 2, a dstate of 128, and a head dimension of 64. For Mamba-3, we utilized the SISO formulation. For the other baselines and our method, we utilized $d_{qk} = d_v = 128$. This makes the state size equal across all models.
1.5B models have $24$ layers, a hidden dimension of $2048$ with 16 heads. The 400M models have 24 layers with a hidden dimension of $1024$  and 4 heads (here $d_v = 256$).

\paragraph{Optimization}

\begin{center}
\begin{tabular}{ll}
\toprule
optimizer & AdamW, $\beta = (0.9, 0.95)$, $\epsilon = 10^{-10}$ \\
peak learning rate & $3 \times 10^{-4}$ \\
schedule & linear warmup over 10\% of steps, then cosine to $0.1\times$ peak ($3 \times 10^{-5}$) \\
weight decay & 0.1, excluding norms, biases, $A_{\log}$, $b_{\Delta}$ and the gate bias \\
gradient clipping & global norm 1.0 \\
sequence length & 8192 \\
global batch & 128 sequences $=$ 1{,}048{,}576 tokens per step \\
precision & bf16 mixed precision, fp32 optimiser state \\
\bottomrule
\end{tabular}
\end{center}

\subsection{Long-context extension}
\label{sec:Alongctx}

The context-extended models of Section~\ref{subsec:long-context} branch
from the 1.5B pretraining runs at 75B tokens and are
trained for a further 25B tokens at a sequence length of 32{,}768,
so that both extended models finish at exactly 100B
tokens and are token-matched to the 8k-pretrained models. The architecture,
tokenizer, and all model hyperparameters are unchanged. The global batch is
held at the same 1M tokens per step, now as 32 sequences of 32{,}768
tokens instead of 128 sequences of 8{,}192.

\paragraph{Optimization.} The optimizer state is re-initialized at the
branch point for both models and the schedule restarts: linear warmup over
 5\% to a peak learning rate of $8.4\times10^{-5}$, which is
the value the parent cosine schedule had reached at the checkpoint, followed
by cosine decay over 23{,}800 steps to $0.1\times$ the peak. This mirrors
the warmup-and-decay shape used by \citet{gao2024prolong}. AdamW
hyperparameters, weight decay, and gradient clipping are as in
Appendix~\ref{sec:Apretraining}.

\paragraph{Data.} The corpus takes the four long-document domains of
ProLong-64K \citep{gao2024prolong}, books, code, textbooks, and arXiv,
re-tokenized with the Llama~2 tokenizer, and blends them at the ratios of
\citet{gao2024prolong} with FineWeb-Edu in place of their short-context
mixture. Documents are concatenated and packed into sequences of 32{,}768
tokens as in pretraining.

\paragraph{Systems.} Training runs on 8 nodes of $4\times$NVIDIA GH200
with ZeRO stage~2 and block-wise gradient checkpointing under bf16 mixed
precision.

\subsection{State Tracking Experiments}

\paragraph{State-tracking tasks.}
  Each sample is a token sequence of length $ \sim \mathcal{U}[2, 128]$
  padded to the context length, with a single supervised position at its end.
  \emph{Parity}: i.i.d.\ uniform bits, label the parity of the number of ones
  (chance $50\%$). \emph{$S_3$}: i.i.d.\ uniform transpositions
  $(0\,1), (0\,2), (1\,2)$, label the prefix product $x_1 \circ \dots \circ x_\ell$
  as one of the six group elements (chance $16.7\%$; tracking parity alone gives
  $33.3\%$). \emph{Modular arithmetic}:  an expression
  $d_1\, o_1\, d_2 \dots d_k\, =$ with uniform digits in $\{0,\dots,4\}$ and
  operators in $\{+,-,\times\}$, evaluated with operator precedence modulo 5,
  the result supervised at the \texttt{=} token (chance $20\%$). Samples are
  generated on the fly and deterministically from the run seed (seeds 1-3);
  evaluation uses $50{,}000$ fresh sequences per length $L \in \{128, 256, 512,
  1024\}$.

  \paragraph{Models and training.}
  All models share a 128-dimensional embedding, pre-norm residual blocks
  (RMSNorm, no feed-forward sublayer) and a linear read-out;
  they differ only in the mixer: Gated DeltaNet with the eigenvalue range
  extended to $[-1, 1]$ \citep{grazzi25unlocking}, Mamba-3 (SISO, expansion 2,
  state size 128, data-dependent RoPE), and KronLSTM.
   For optimizer we used AdamW with weight decay $0.1$, batch size 128, bf16. Parity and
  $S_3$: two blocks, constant learning rate $\in \{10^{-3}, 10^{-4}\}$, 50k steps,
  gradient clipping 100. Modular arithmetic requires depth and a longer budget
  for all three architectures: three blocks, 150k steps. We report end-of-training accuracy, mean $\pm$ standard
  deviation over three seeds, at the learning rate with the best mean accuracy
  over $L \in \{256\}$.

\section{Chunkwise-Parallel Training of KronLSTM}
\label{sec:Achunkwise}

The KronLSTM update \eqref{eq:gated-rule} revises the matrix memory by a
rank-one term whose coefficient depends on the memory only through a
scalar. It therefore admits the chunkwise-parallel training scheme of
DeltaNet \citep{Yang:24}, Gated DeltaNet \citep{yang2025gated}, Kimi Delta
Attention \citep{kimiteam2025kda}, and Gated DeltaNet-2
\citep{hatamizadeh2026gdn2}: the sequence is split into chunks of length
$C$, all token interactions inside a chunk are expressed as dense matrix
products and one small triangular solve, and the memory is propagated only
between chunks. Three points differ from the delta rule.
(i) The triangular solve is built on the Gram matrix of the Kronecker
features $\Ba_t = \Bk_t \otimes \Bv_t$, which by the mixed-product property
is the Hadamard product of a (decay-weighted) key Gram and the value Gram;
the $d_k d_v$-dimensional features are never formed.
(ii) The pseudo-values are scalar multiples $u_r \Bv_r$ of the values.
(iii) The incoming memory of a chunk enters the solve through $C$ scalar
scores $\Bv_r^\top \BS_0 (\Bga_r \odot \Bk_r)$ instead of the
$d_v$-dimensional read-outs $\BS_0 \Bk_r$ of the delta rule. The vector
forget gate is handled as in KDA and Gated DeltaNet-2 by absorbing the
cumulative decay into an asymmetric pair of keys.

\subsection{Setting and notation}
\label{sec:Achunkwise-setting}

We consider one chunk of $C$ tokens and suppress the chunk index. Let
$\BS_0 \in \dR^{d_v \times d_k}$ be the memory entering the chunk and let
$\Bq_r, \Bk_r \in \dR^{d_k}$, $\Bv_r \in \dR^{d_v}$, $\beta_r, c_r \in \dR$
and $\bff_r \in (0,1)^{d_k}$, $\BF_r = \diag(\bff_r)$, be the inputs and
gates of the $r$-th token, $r = 1, \dots, C$. Writing $\BS_r$ for the memory
after token $r$, the update \eqref{eq:gated-rule} and the read-out
are\footnote{This is \eqref{eq:gated-rule} with the time index shifted by
one, so that $\BS_r$ denotes the memory after the write of token $r$. The
read-out uses this memory, $\Bh_r = \BS_r \Bq_r$, i.e.,\ the query of token
$r$ sees the association written at token $r$, as in DeltaNet and its gated
variants and in our kernels.}
\begin{equation} \label{eq:Acw-rec}
\BS_r \;=\; \BS_{r-1} \BF_r \;+\; u_r\, \Bv_r \Bk_r^\top ,
\qquad
u_r \;:=\; c_r - \beta_r\, y_r , \qquad
y_r \;:=\; \Bv_r^\top\, \BS_{r-1} \BF_r\, \Bk_r ,
\qquad
\Bh_r \;=\; \BS_r \Bq_r .
\end{equation}
We call $y_r$ the score of the pair $(\Bk_r, \Bv_r)$ on the decayed memory
and $u_r$ the write coefficient, the coefficient of the rank-one write along
$\Bv_r \Bk_r^\top$. It is the only quantity in \eqref{eq:Acw-rec} that
depends on the memory.

\paragraph{Cumulative decay.}
With $\bm{\ell}_r := \sum_{i \le r} \log \bff_i$ (a cumulative sum inside
the chunk) we set
\begin{equation} \label{eq:Acw-gamma}
\Bga_r \;:=\; \exp(\bm{\ell}_r) \;=\; \bff_1 \odot \cdots \odot \bff_r
\;\in\; (0,1]^{d_k}, \qquad
\Bga_0 \;:=\; \BOn, \qquad
\BGa_r \;:=\; \diag(\Bga_r) \;=\; \BF_1 \cdots \BF_r .
\end{equation}
All products, quotients and exponentials of $\Bga$'s are element-wise over
the key dimension. For $s \le r$ the ratio $\Bga_r / \Bga_s = \bff_{s+1}
\odot \cdots \odot \bff_r$ lies in $(0,1]^{d_k}$; it is the decay that a key
written at token $s$ has undergone by token $r$. Four decay-scaled copies of
the keys and queries appear:
\begin{equation} \label{eq:Acw-keys}
\widehat{\Bk}_r \;:=\; \Bga_r \odot \Bk_r , \qquad
\bar{\Bk}_r \;:=\; \Bk_r / \Bga_r , \qquad
\breve{\Bk}_r \;:=\; (\Bga_C / \Bga_r) \odot \Bk_r , \qquad
\widehat{\Bq}_r \;:=\; \Bga_r \odot \Bq_r .
\end{equation}
$\widehat{\Bk}_r$ is the key with which the memory is scored,
$\bar{\Bk}_r$ the key along which it is written in the decay-normalized frame
of Section~\ref{sec:Achunkwise-normalized}, and $\breve{\Bk}_r$ the key as it
appears in the memory at the end of the chunk.

\paragraph{Stacked matrices.}
$\BK, \BQ \in \dR^{C \times d_k}$ and $\BV \in \dR^{C \times d_v}$ stack
$\Bk_r^\top$, $\Bq_r^\top$, $\Bv_r^\top$ as rows; $\widehat{\BK},
\bar{\BK}, \breve{\BK}, \widehat{\BQ}$ stack the vectors of
\eqref{eq:Acw-keys}; $\Bbe, \Bc, \Bu, \Bp \in \dR^C$ stack the scalars;
$\BH \in \dR^{C \times d_v}$ stacks the read-outs $\Bh_r^\top$. The three
$C \times C$ matrices
\begin{equation} \label{eq:Acw-grams}
\BG^{kk}_{rs} \;:=\; \mathbbm{1}[r > s]\; \widehat{\Bk}_r^\top \bar{\Bk}_s
\;=\; \mathbbm{1}[r > s] \sum_{d=1}^{d_k} k_{rd}\, k_{sd}\,
\frac{\gamma_{rd}}{\gamma_{sd}} , \qquad
\BG^{vv} \;:=\; \BV \BV^\top , \qquad
\BG^{qk}_{rs} \;:=\; \mathbbm{1}[r \ge s]\; \widehat{\Bq}_r^\top \bar{\Bk}_s
\end{equation}
are the decay-weighted key Gram (strictly lower triangular), the value
Gram, and the decay-weighted read-out Gram (lower triangular with
diagonal). All entries of $\BG^{kk}$ and $\BG^{qk}$ involve only ratios
$\Bga_r / \Bga_s$ with $s \le r$, hence only factors in $(0,1]$.

\subsection{Decay-normalized recurrence}
\label{sec:Achunkwise-normalized}

The forget gate acts on the key dimension by a diagonal matrix, and
diagonal matrices commute. This allows the cumulative decay to be taken out
of the recurrence.

\begin{propositionA}[Decay-normalized recurrence]
\label{prop:Achunknorm}
Let $\widehat{\BS}_r := \BS_r \BGa_r^{-1}$, so that $\widehat{\BS}_0 =
\BS_0$. Then, for $r = 1, \dots, C$,
\begin{equation} \label{eq:Acw-normalized}
\widehat{\BS}_r \;=\; \widehat{\BS}_{r-1} \;+\; u_r\, \Bv_r \bar{\Bk}_r^\top ,
\qquad
y_r \;=\; \Bv_r^\top\, \widehat{\BS}_{r-1}\, \widehat{\Bk}_r ,
\end{equation}
and consequently
\begin{equation} \label{eq:Acw-unrolled}
\widehat{\BS}_r \;=\; \BS_0 \;+\; \sum_{s \le r} u_s\, \Bv_s \bar{\Bk}_s^\top .
\end{equation}
\end{propositionA}
\begin{proof}
Since $\BGa_r = \BGa_{r-1} \BF_r$, we have $\BS_{r-1} \BF_r =
\widehat{\BS}_{r-1} \BGa_{r-1} \BF_r = \widehat{\BS}_{r-1} \BGa_r$.
Inserting this into \eqref{eq:Acw-rec} gives $\BS_r = \widehat{\BS}_{r-1}
\BGa_r + u_r \Bv_r \Bk_r^\top$; right-multiplication by $\BGa_r^{-1}$
yields the first identity, because $\Bk_r^\top \BGa_r^{-1} =
\bar{\Bk}_r^\top$. The second identity is $y_r = \Bv_r^\top
(\widehat{\BS}_{r-1} \BGa_r) \Bk_r = \Bv_r^\top \widehat{\BS}_{r-1}
(\BGa_r \Bk_r)$. Unrolling the first identity from $\widehat{\BS}_0 = \BS_0$
gives \eqref{eq:Acw-unrolled}.
\end{proof}

In the normalized frame the recurrence is the Kronecker Delta Rule without
forget gate, applied to an asymmetric key pair: the association is scored
with the decayed key $\widehat{\Bk}_r$ and written along the un-decayed key
$\bar{\Bk}_r$. This is the same device by which Gated DeltaNet-2 absorbs
its channel-wise decay into asymmetric erase factors
\citep{hatamizadeh2026gdn2}. Equation~\eqref{eq:Acw-unrolled} is the
additive representation of the memory, $\BS_r = \big(\BS_0 + \sum_{s \le r}
u_s \Bv_s \bar{\Bk}_s^\top\big) \BGa_r$; the memory revisions do not have
to be materialized, only the $C$ scalars $u_s$.

\subsection{The write coefficients solve a unit lower triangular system}
\label{sec:Achunkwise-solve}

\begin{propositionA}[Write coefficients]
\label{prop:Achunksolve}
Let $p_r := \Bv_r^\top \BS_0\, \widehat{\Bk}_r$ be the score of the $r$-th
pair on the incoming memory, and let
\begin{equation} \label{eq:Acw-N}
\BN \;:=\; \diag(\Bbe)\, \big( \BG^{kk} \odot \BG^{vv} \big)
\;\in\; \dR^{C \times C} , \qquad
\BLa \;:=\; (\BI + \BN)^{-1} .
\end{equation}
$\BN$ is strictly lower triangular, so $\BI + \BN$ is unit lower
triangular and invertible for every $\Bbe$ and every input. The write
coefficients are
\begin{equation} \label{eq:Acw-u}
\Bu \;=\; \BLa\, \big( \Bc - \Bbe \odot \Bp \big)
\;=\; \underbrace{\BLa\, \Bc}_{=:\, \Bu^{\circ}}
\;-\; \BLa \diag(\Bbe)\, \Bp .
\end{equation}
\end{propositionA}
\begin{proof}
Insert \eqref{eq:Acw-unrolled} at $r-1$ into the score in
\eqref{eq:Acw-normalized}:
\[
y_r \;=\; \Bv_r^\top \BS_0 \widehat{\Bk}_r
\;+\; \sum_{s < r} u_s\, (\Bv_r^\top \Bv_s)(\bar{\Bk}_s^\top \widehat{\Bk}_r)
\;=\; p_r \;+\; \sum_{s < r} \big(\BG^{kk} \odot \BG^{vv}\big)_{rs}\, u_s .
\]
Hence $u_r = c_r - \beta_r y_r$ reads $u_r + \sum_{s<r} \BN_{rs} u_s = c_r
- \beta_r p_r$, which is row $r$ of $(\BI + \BN) \Bu = \Bc - \Bbe \odot
\Bp$.
\end{proof}

\paragraph{The Gram of the Kronecker features.}
With $\widehat{\Ba}_r := \widehat{\Bk}_r \otimes \Bv_r$ and $\bar{\Ba}_s :=
\bar{\Bk}_s \otimes \Bv_s$, the mixed-product property of the Kronecker
product gives $\widehat{\Ba}_r^\top \bar{\Ba}_s = (\widehat{\Bk}_r^\top
\bar{\Bk}_s)(\Bv_r^\top \Bv_s)$, so $\BG^{kk} \odot \BG^{vv}$ is the
strictly lower triangular part of the Gram matrix of the decay-weighted
Kronecker features. Without forget gate ($\bff_r \equiv \BOn$) it is the
strictly lower part of the Gram matrix $(\Ba_r^\top \Ba_s)_{rs}$ of the
features $\Ba_r = \Bk_r \otimes \Bv_r$, i.e.,\ exactly the matrix that the
UT transform of DeltaNet \citep{Yang:24} builds from $\BK \BK^\top$, with
the feature $\Ba_r$ in place of $\Bk_r$ as anticipated by the vectorized
form \eqref{eq:vecrule}. The $d_k d_v$-dimensional features are never
formed: two $C \times C$ Gram matrices of $d_k$- and $d_v$-dimensional
vectors and one Hadamard product suffice.

\paragraph{Decoupled erase and write.}
The write gate $c_r$ enters only the right-hand side of the solve, the
erase gate $\beta_r$ enters the system matrix and the coupling to $\BS_0$.
The state-independent part $\Bu^{\circ} = \BLa \Bc$ and the matrix $\BLa
\diag(\Bbe)$ can be computed for all chunks in parallel; they are the
analogues of the pseudo-values $\BU = \BT \BV$ and of $\BW = \BT \BK$ in
DeltaNet. The only sequential quantity is the vector of scores $\Bp$.

\subsection{Chunkwise form}
\label{sec:Achunkwise-form}

\begin{theoremA}[Chunkwise KronLSTM]
\label{thm:Achunkwise}
Let $\widetilde{\BV} := \diag(\Bu)\, \BV \in \dR^{C \times d_v}$ be the
matrix of pseudo-values, with rows $u_r \Bv_r^\top$, where $\Bu$ is given by
Proposition~\ref{prop:Achunksolve} and
\begin{equation} \label{eq:Acw-p}
\Bp \;=\; \big( (\BV \BS_0) \odot \widehat{\BK} \big)\, \BOn
\qquad \text{(row sums of a $C \times d_k$ matrix).}
\end{equation}
Then the memory at the end of the chunk and the read-outs of the chunk are
\begin{align}
\BS_C &\;=\; \BS_0\, \BGa_C \;+\; \widetilde{\BV}^\top \breve{\BK}
\;\in\; \dR^{d_v \times d_k} , \label{eq:Acw-state} \\
\BH &\;=\; \widehat{\BQ}\, \BS_0^\top \;+\; \BG^{qk}\, \widetilde{\BV}
\;\in\; \dR^{C \times d_v} . \label{eq:Acw-output}
\end{align}
\end{theoremA}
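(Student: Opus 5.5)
The plan is to work entirely in the decay-normalized frame of Proposition~\ref{prop:Achunknorm}. There the memory is the additive sum \eqref{eq:Acw-unrolled}, $\widehat{\BS}_r = \BS_0 + \sum_{s\le r} u_s \Bv_s \bar{\Bk}_s^\top$, with the write coefficients $\Bu$ already supplied by Proposition~\ref{prop:Achunksolve}. Both claimed identities then reduce to undoing the normalization $\BS_r = \widehat{\BS}_r \BGa_r$ and rewriting the resulting sums over $s$ as matrix products with the stacked quantities. No further recursion needs to be solved.

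\textbf{State.} Take $r=C$ and multiply on the right by $\BGa_C$:
\[
\BS_C = \BS_0 \BGa_C + \sum_{s\le C} u_s \Bv_s \bar{\Bk}_s^\top \BGa_C .
\]
Since $\BGa_C$ is diagonal, $\bar{\Bk}_s^\top \BGa_C = (\Bga_C \odot \Bk_s/\Bga_s)^\top = \breve{\Bk}_s^\top$. The sum is therefore $\sum_s (u_s\Bv_s)\breve{\Bk}_s^\top$. The rows of $\widetilde{\BV}$ are $u_s\Bv_s^\top$ and the rows of $\breve{\BK}$ are $\breve{\Bk}_s^\top$, so this sum equals $\widetilde{\BV}^\top\breve{\BK}$, which gives \eqref{eq:Acw-state}.

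\textbf{Read-outs.} Write $\Bh_r = \BS_r\Bq_r = \widehat{\BS}_r \BGa_r \Bq_r = \widehat{\BS}_r \widehat{\Bq}_r$. Inserting \eqref{eq:Acw-unrolled} gives
\[
\Bh_r = \BS_0\widehat{\Bq}_r + \sum_{s\le r} (\bar{\Bk}_s^\top\widehat{\Bq}_r)\, u_s\Bv_s .
\]
Transposing to a row, the first term is row $r$ of $\widehat{\BQ}\BS_0^\top$. The coefficient in the second term is $\mathbbm{1}[r\ge s]\,\widehat{\Bq}_r^\top\bar{\Bk}_s = \BG^{qk}_{rs}$, so the second term is row $r$ of $\BG^{qk}\widetilde{\BV}$. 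The inclusion of $s=r$ in this sum is exactly why $\BG^{qk}$ carries its diagonal. It matches the convention of the footnote that the query of token $r$ sees the write of token $r$. Together these give \eqref{eq:Acw-output}.

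\textbf{Scores.} Finally, check that $\Bp$ as given in \eqref{eq:Acw-p} agrees with the definition $p_r = \Bv_r^\top\BS_0\widehat{\Bk}_r$ used in Proposition~\ref{prop:Achunksolve}. Row $r$ of $\BV\BS_0$ is $\Bv_r^\top\BS_0$. Taking its Hadamard product with $\widehat{\Bk}_r^\top$ and summing over the $d_k$ entries gives $p_r$.

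\textbf{Main obstacle.} Each step is a one-line identity, so the main risk is bookkeeping rather than mathematics. The points to get right are the transposes, which index carries the diagonal inclusion in $\BG^{qk}$ versus the strict inequality in $\BG^{kk}$, and the commutation of the diagonal decay matrices with each other. I would verify the whole chunk formula on $C=1$ and on $\bff\equiv\BOn$ (the DeltaNet-like case) as sanity checks.
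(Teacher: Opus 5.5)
Your proposal is correct and follows the paper's proof essentially step for step. Both arguments undo the normalization $\BS_r = \widehat{\BS}_r\BGa_r$ in the unrolled form \eqref{eq:Acw-unrolled}, using $\bar{\Bk}_s^\top\BGa_C = \breve{\Bk}_s^\top$ for the state and $\BGa_r\Bq_r = \widehat{\Bq}_r$ for the read-outs, and both check \eqref{eq:Acw-p} as the row-wise contraction of $\BV\BS_0$ with $\widehat{\BK}$.
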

\begin{proof}
Equation~\eqref{eq:Acw-p} is $p_r = \Bv_r^\top \BS_0 \widehat{\Bk}_r =
\sum_d (\BV \BS_0)_{rd}\, \widehat{K}_{rd}$. For the state, $\BS_C =
\widehat{\BS}_C \BGa_C = \BS_0 \BGa_C + \sum_{s \le C} u_s \Bv_s
\bar{\Bk}_s^\top \BGa_C$ by \eqref{eq:Acw-unrolled}, and $\bar{\Bk}_s^\top
\BGa_C = \big((\Bga_C / \Bga_s) \odot \Bk_s\big)^\top = \breve{\Bk}_s^\top$,
which is \eqref{eq:Acw-state}. For the read-out, $\Bh_r = \BS_r \Bq_r =
\widehat{\BS}_r \BGa_r \Bq_r = \widehat{\BS}_r \widehat{\Bq}_r = \BS_0
\widehat{\Bq}_r + \sum_{s \le r} u_s\, (\bar{\Bk}_s^\top
\widehat{\Bq}_r)\, \Bv_s$ by \eqref{eq:Acw-unrolled}; transposing and
stacking over $r$ gives \eqref{eq:Acw-output}, the coefficient of $u_s
\Bv_s^\top$ in row $r$ being $\BG^{qk}_{rs}$.
\end{proof}

Indexing the chunks by $[n]$ and writing $\BS_{[n]}$ for the memory entering
chunk $n$, \eqref{eq:Acw-state} is the chunk-level recurrence
\begin{equation} \label{eq:Acw-chunkrec}
\BS_{[n+1]} \;=\; \BS_{[n]}\, \BGa_{[n],C} \;+\;
\widetilde{\BV}_{[n]}^\top \breve{\BK}_{[n]} ,
\end{equation}
with $\BS_{[0]} = \BZe$ (or a given initial memory). Everything in
\eqref{eq:Acw-grams}--\eqref{eq:Acw-output} except $\Bp$, $\Bu$ and
$\widetilde{\BV}$ is local to a chunk and is computed for all chunks at
once; the dependence $\BS_{[n]} \to \Bp \to \Bu \to \BS_{[n+1]}$ is the only
sequential part. Compared with the delta-rule family, the pseudo-value of
token $r$ is the value $\Bv_r$ scaled by the scalar $u_r$, and the memory
couples into the chunk through the $C$ scalars $\Bp$ rather than through
the $C$ vectors $\BS_0 \Bk_r \in \dR^{d_v}$: computing $\Bp$ needs one
matrix product $\BV \BS_0$ and a row-wise contraction over the value
dimension, which is where a kernel that tiles the memory over $d_v$ must
exchange partial sums.

\subsection{WY representation of the chunk transition}
\label{sec:Achunkwise-wy}

In vectorized form, $\Bs = \vecop(\BS)$ with $\vecop(\Bv \Bk^\top) = \Bk
\otimes \Bv$ and $\vecop(\BS \BF) = (\BF \otimes \BI_{d_v}) \vecop(\BS)$,
one step of \eqref{eq:Acw-rec} is $\Bs_r = (\BI - \beta_r \Ba_r
\Ba_r^\top)(\BF_r \otimes \BI_{d_v})\, \Bs_{r-1} + c_r \Ba_r$ with $\Ba_r =
\Bk_r \otimes \Bv_r$: a generalized Householder transition, as in
\eqref{eq:vecrule}, preceded by a diagonal decay.
Theorem~\ref{thm:Achunkwise} is the WY representation \citep{bischof87wy}
of the product of these factors.

\begin{corollaryA}[WY form]
\label{cor:Achunkwy}
Let $\widehat{\BA} := [\widehat{\Bk}_1 \otimes \Bv_1, \dots,
\widehat{\Bk}_C \otimes \Bv_C]$ and $\breve{\BA} := [\breve{\Bk}_1 \otimes
\Bv_1, \dots, \breve{\Bk}_C \otimes \Bv_C]$ in $\dR^{d_k d_v \times C}$.
Then $\Bs_C = \BP\, \Bs_0 + \Bd$ with
\begin{equation} \label{eq:Acw-wy}
\BP \;=\; \prod_{r=C}^{1} (\BI - \beta_r \Ba_r \Ba_r^\top)(\BF_r \otimes \BI_{d_v})
\;=\; (\BGa_C \otimes \BI_{d_v}) \;-\; \breve{\BA}\, \BLa \diag(\Bbe)\,
\widehat{\BA}^\top , \qquad
\Bd \;=\; \breve{\BA}\, \BLa\, \Bc .
\end{equation}
\end{corollaryA}
\begin{proof}
By \eqref{eq:Acw-state}, $\Bs_C = (\BGa_C \otimes \BI) \Bs_0 + \sum_r u_r
(\breve{\Bk}_r \otimes \Bv_r) = (\BGa_C \otimes \BI) \Bs_0 + \breve{\BA}
\Bu$. Since $p_r = \langle \Bv_r \widehat{\Bk}_r^\top, \BS_0 \rangle_F =
\widehat{\Ba}_r^\top \Bs_0$, we have $\Bp = \widehat{\BA}^\top \Bs_0$, and
\eqref{eq:Acw-u} gives $\Bu = \BLa \Bc - \BLa \diag(\Bbe) \widehat{\BA}^\top
\Bs_0$.
\end{proof}

The correction to the diagonal decay has rank at most $C$, as for DeltaNet,
where $\BP = \BI - \sum_s \Bw_s \Bk_s^\top$. The difference is that the
columns of $\breve{\BA} \BLa \diag(\Bbe)$ are linear combinations of
Kronecker products $\breve{\Bk}_s \otimes \Bv_s$ and are in general not
themselves Kronecker products, so the correction is not of the delta-rule
form $\sum_s (\BS_0 \Bw_s) \Bk_s^\top$, in which the memory enters through
the $d_v$-dimensional read-outs $\BS_0 \Bw_s$. Instead the memory enters
through the $C$ scalars $\Bp$, and each pseudo-value is $\Bv_s$ scaled by a
linear functional of $\BS_0$. Table~\ref{tab:Achunkwise-compare} summarizes
the roles of the chunk quantities in the delta-rule family and in KronLSTM.

\begin{table}[t]
\caption{Chunkwise forms of delta-rule memories and of KronLSTM, in the
orientation $\BS \in \dR^{d_v \times d_k}$ of this paper. $\BLa$ is the
inverse of the unit lower triangular matrix $\BI + \BN$ built from the listed
Gram; ``coupling'' is the term through which the incoming memory $\BS_0$
enters the write of token $r$; $\operatorname{stril}$ is the strictly lower
triangular part. For KDA and Gated DeltaNet-2 the decay-weighted key Gram is
$\BG^{kk}$ of \eqref{eq:Acw-grams}; for Gated DeltaNet-2, $\Bb_r \in
[0,1]^{d_k}$ and $\Bw_r \in [0,1]^{d_v}$ are its channel-wise erase and
write gates.}
\label{tab:Achunkwise-compare}
\centering
\resizebox{\linewidth}{!}{
\begin{tabular}{@{}lllll@{}}
\toprule
 & Gram in the solve & right-hand side & coupling to $\BS_0$ & write of token $r$ \\
\midrule
DeltaNet & $\diag(\Bbe)\operatorname{stril}(\BK\BK^\top)$ &
  $\diag(\Bbe)\BV$ &
  $\BS_0 \Bw_r$, $\;\BW = \BLa\diag(\Bbe)\BK$ &
  $(\Bu_r - \BS_0 \Bw_r)\,\Bk_r^\top$ \\
Gated DeltaNet & same, $\times\, \gamma_r/\gamma_s$ (scalar) & same & same, decayed & same, decayed \\
KDA & $\diag(\Bbe)\,\BG^{kk}$ &
  $\diag(\Bbe)\BV$ &
  $\BS_0 \Bw_r$, $\;\BW = \BLa\diag(\Bbe)\widehat{\BK}$ &
  $(\Bu_r - \BS_0 \Bw_r)\,\breve{\Bk}_r^\top$ \\
Gated DeltaNet-2 & $\operatorname{stril}\big((\Bga \odot \BB \odot \BK)\bar{\BK}^\top\big)$ &
  $\BW \odot \BV$ &
  $\BS_0 \By_r$, $\;\BY = \BLa(\Bga \odot \BB \odot \BK)$ &
  $(\Bu_r - \BS_0 \By_r)\,\breve{\Bk}_r^\top$ \\
KronLSTM & $\diag(\Bbe)\,(\BG^{kk} \odot \BG^{vv})$ &
  $\Bc$ &
  $\big(\BLa\diag(\Bbe)\Bp\big)_r$, $\;p_r = \Bv_r^\top\BS_0\widehat{\Bk}_r$ &
  $u_r\, \Bv_r \breve{\Bk}_r^\top$ \\
\bottomrule
\end{tabular}}
\end{table}

\subsection{Variants}
\label{sec:Achunkwise-variants}

The derivation uses only the structure ``diagonal decay on the key side,
rank-one write along $\Bv_r \Bk_r^\top$ with a coefficient affine in the
scalar score'':
\begin{itemize}[nosep,leftmargin=*]
\item \emph{Coupled erase and write} (variant ``$\beta$ on both sides'' of
  Table~\ref{tab:write-rule-ablation}; target $c_r$ instead of $\tilde c_r
  = c_r / \beta_r$): replace $\Bc$ by $\Bbe \odot \Bc$ in \eqref{eq:Acw-u}.
  \emph{Unit target} (variant ``non-learnable target''): $\Bc = \BOn$, so
  $\Bu^{\circ} = \BLa \BOn$.
\item \emph{State tracking}, $\beta_r \in (0,2)$
  (Section~\ref{sec:state-tracking}): nothing changes. $\BI + \BN$ is unit
  lower triangular for every $\Bbe$, so the solve needs no condition on the
  gates or on the data.
\item \emph{Scalar forget gate} $\bff_r = f_r \BOn$ (as in Gated DeltaNet):
  $\Bga_r$ is a scalar and $\BG^{kk}_{rs} = \mathbbm{1}[{r > s}]\, (\Bk_r^\top \Bk_s)\, \gamma_r / \gamma_s$. \emph{No forget gate}:
  $\BG^{kk} = \operatorname{stril}(\BK \BK^\top)$ and $\breve{\BK} =
  \widehat{\BK} = \bar{\BK} = \BK$.
\item \emph{Values of arbitrary norm.} The kernels accept unnormalized
  values by the substitution $\beta_r \leftarrow \beta_r / \norm{\Bv_r}^2$
  (the erase step is capped so that $\beta_r \norm{\Ba_r}^2 = \beta_r$ for
  unit-norm keys and the contraction condition of
  Property~\ref{prop:Atimecontraction} holds independently of
  $\norm{\Bv_r}$); the write gate is unaffected. With the
  $\ell_2$-normalized values of Section~\ref{subsec:kronlstm-block} this is
  \eqref{eq:Acw-rec}.
\end{itemize}

\subsection{Algorithm, cost and implementation}
\label{sec:Achunkwise-algorithm}

\paragraph{Algorithm (forward pass, one head).}
Given $\BQ, \BK \in \dR^{L \times d_k}$, $\BV \in \dR^{L \times d_v}$,
$\Bbe, \Bc \in \dR^{L}$, $\log \bff \in \dR^{L \times d_k}$, a chunk length
$C$ and an initial memory $\BS_{[0]}$:
\begin{enumerate}[nosep,leftmargin=*]
\item \emph{Chunk-local, in parallel over chunks.} For every chunk:
  $\bm{\ell} \leftarrow \operatorname{cumsum}(\log \bff)$ within the chunk,
  $\Bga \leftarrow \exp(\bm{\ell})$, and $\widehat{\BK}, \bar{\BK},
  \breve{\BK}, \widehat{\BQ}$ as in \eqref{eq:Acw-keys};
  $\BG^{kk}, \BG^{vv}, \BG^{qk}$ as in \eqref{eq:Acw-grams}, the entries of
  $\BG^{kk}$ and $\BG^{qk}$ via $\exp(\bm{\ell}_r - \bm{\ell}_s)$ for $s \le r$;
  $\BN \leftarrow \diag(\Bbe)(\BG^{kk} \odot \BG^{vv})$ and
  $\BLa \leftarrow (\BI + \BN)^{-1}$ by forward substitution;
  $\Bu^{\circ} \leftarrow \BLa \Bc$ and $\BLa_{\beta} \leftarrow \BLa \diag(\Bbe)$.
\item \emph{Scan, sequential over chunks.} $\BS \leftarrow \BS_{[0]}$; for
  $n = 0, 1, \dots$:
  $\Bp \leftarrow \big((\BV \BS) \odot \widehat{\BK}\big) \BOn$;
  $\Bu \leftarrow \Bu^{\circ} - \BLa_{\beta}\, \Bp$;
  $\widetilde{\BV} \leftarrow \diag(\Bu) \BV$;
  $\BH_{[n]} \leftarrow \widehat{\BQ} \BS^\top + \BG^{qk} \widetilde{\BV}$;
  $\BS \leftarrow \BS \diag(\Bga_C) + \widetilde{\BV}^\top \breve{\BK}$.
\item Return $\BH \in \dR^{L \times d_v}$ and the final memory $\BS$.
\end{enumerate}

\paragraph{Cost.}
Per chunk and head the algorithm performs the two $C \times C$ key Grams
($2 C^2 d_k$ multiply--adds), the value Gram ($C^2 d_v$), one unit lower
triangular solve applied to two right-hand sides ($O(C^3)$, in practice on
$16 \times 16$ blocks as in the delta-rule kernels), three products with the
memory, $\BV \BS$, $\widehat{\BQ} \BS^\top$ and $\widetilde{\BV}^\top
\breve{\BK}$ ($3 C d_k d_v$), and the intra-chunk product $\BG^{qk}
\widetilde{\BV}$ ($C^2 d_v$). Over a sequence of length $L$ this is
$O\big(L C (d_k + d_v) + L d_k d_v\big)$ multiply--adds and $O(L C^2)$ for
the solves, the same order as Gated DeltaNet and KDA; relative to KDA the
additional cost is the value Gram $\BG^{vv}$, and the coupling $\Bp$
replaces the product $\BW \BS_0$ of the delta rule at equal cost. The
memory is materialized once per chunk, not per token.

\paragraph{Implementation notes.}
(i) The cumulative decays are kept in log space; the Gram entries use
$\exp(\bm{\ell}_r - \bm{\ell}_s)$ for $s \le r$ only, so all decay factors
lie in $(0,1]$ and no quotient $1/\Bga$ is ever formed (the kernels use
base-2 logarithms and $\exp_2$). (ii) The triangular solve is the
precision-sensitive step and is carried out in IEEE fp32; the remaining
products run on tensor cores. (iii) In the fused kernel the solve is not
materialized as $\Bu^{\circ}$ and $\BLa_{\beta}$ but applied inside the
scan as $\Bu = \BLa(\Bc - \Bbe \odot \Bp)$. (iv) Because $\Bp$ contracts
over the full value dimension, programs that tile the memory over $d_v$
exchange their partial row sums of $(\BV \BS) \odot \widehat{\BK}$ once per
chunk; this is a synchronization, not a bandwidth cost ($C$ floats per
program against a $d_k \times d_v$ memory). (v) Padding a sequence to a
multiple of $C$ with zero $\Bq, \Bk, \Bv$, $\beta = c = 0$ and $\log \bff =
\BZe$ neither writes nor decays. (vi) Queries are scaled by $d_k^{-1/2}$
inside the kernel; the scale is immaterial after the RMSNorm of the block.
In the supplied code, $\widehat{\BK}, \bar{\BK}, \breve{\BK},
\widehat{\BQ}$ are \texttt{k\_up}, \texttt{k\_dn}, \texttt{k\_tail},
\texttt{q\_up}; $\BG^{kk}, \BG^{vv}, \BG^{qk}$ are \texttt{A\_kk},
\texttt{A\_vv}, \texttt{A\_qk}; $\BLa$, $\Bu^{\circ}$, $\BLa_\beta$ are
\texttt{Tinv}, \texttt{u\_const}, \texttt{Tw}; and $\Bp$, $\Bu$,
$\widetilde{\BV}$ are \texttt{p}, \texttt{u}, \texttt{uv}.

\subsection{Backward pass}
\label{sec:Achunkwise-backward}

The backward pass follows the forward decomposition chunk by chunk in
reverse order. Let $\mathrm{d}\BH \in \dR^{C \times d_v}$ and
$\mathrm{d}\BS_C \in \dR^{d_v \times d_k}$ be the upstream gradients of a
chunk, the latter including the gradient $\mathrm{d}\BS_0$ returned by the
following chunk. Write $\Bg := \Bc - \Bbe \odot \Bp$ and $\BY_0 := \BV
\BS_0 \in \dR^{C \times d_k}$. With $\operatorname{stril}$ and
$\operatorname{tril}$ the strictly lower and the lower triangular part, the
vector--Jacobian products of \eqref{eq:Acw-state} and \eqref{eq:Acw-output}
are
\begin{align}
\mathrm{d}\BS_0 &= \mathrm{d}\BS_C \BGa_C + \mathrm{d}\BH^\top \widehat{\BQ}, &
\mathrm{d}\Bga_C &= (\mathrm{d}\BS_C \odot \BS_0)^\top \BOn, &
\mathrm{d}\breve{\BK} &= \widetilde{\BV}\,\mathrm{d}\BS_C, \notag\\
\mathrm{d}\widetilde{\BV} &= \breve{\BK}\,\mathrm{d}\BS_C^\top + \BG^{qk\top}\mathrm{d}\BH, &
\mathrm{d}\widehat{\BQ} &= \mathrm{d}\BH\,\BS_0, &
\mathrm{d}\BG^{qk} &= \operatorname{tril}(\mathrm{d}\BH\,\widetilde{\BV}^\top),
\label{eq:Acw-bwd1}
\end{align}
those of the pseudo-values and of the solve \eqref{eq:Acw-u} are
\begin{align}
\mathrm{d}\Bu &= (\mathrm{d}\widetilde{\BV} \odot \BV)\BOn, &
\mathrm{d}\BV &\mathrel{+}= \diag(\Bu)\,\mathrm{d}\widetilde{\BV}, &
\mathrm{d}\BLa &= \mathrm{d}\Bu\,\Bg^\top, &
\mathrm{d}\Bg &= \BLa^\top \mathrm{d}\Bu, \notag\\
\mathrm{d}\Bc &= \mathrm{d}\Bg, &
\mathrm{d}\Bbe &= -\mathrm{d}\Bg \odot \Bp, &
\mathrm{d}\Bp &= -\mathrm{d}\Bg \odot \Bbe, &
\mathrm{d}\BN &= -\operatorname{stril}\big(\BLa^\top \mathrm{d}\BLa\,\BLa^\top\big),
\label{eq:Acw-bwd2}
\end{align}
those of the scores \eqref{eq:Acw-p} and of the Gram matrices
\eqref{eq:Acw-grams}, \eqref{eq:Acw-N} are
\begin{align}
\mathrm{d}\BV &\mathrel{+}= \diag(\mathrm{d}\Bp)\,\widehat{\BK}\,\BS_0^\top
  + (\mathrm{d}\BG^{vv} + \mathrm{d}\BG^{vv\top})\BV, &
\mathrm{d}\BS_0 &\mathrel{+}= \BV^\top \diag(\mathrm{d}\Bp)\,\widehat{\BK}, \notag\\
\mathrm{d}\Bbe &\mathrel{+}= \big(\mathrm{d}\BN \odot \BG^{kk} \odot \BG^{vv}\big)\BOn, &
\mathrm{d}\BG^{kk} &= \diag(\Bbe)\,\mathrm{d}\BN \odot \BG^{vv}, \qquad
\mathrm{d}\BG^{vv} = \diag(\Bbe)\,\mathrm{d}\BN \odot \BG^{kk}, \notag\\
\mathrm{d}\widehat{\BK} &= \diag(\mathrm{d}\Bp)\,\BY_0 + \mathrm{d}\BG^{kk}\bar{\BK}, &
\mathrm{d}\bar{\BK} &= \mathrm{d}\BG^{kk\top}\widehat{\BK} + \mathrm{d}\BG^{qk\top}\widehat{\BQ}, \qquad
\mathrm{d}\widehat{\BQ} \mathrel{+}= \mathrm{d}\BG^{qk}\bar{\BK},
\label{eq:Acw-bwd3}
\end{align}
and the decay-scaled copies \eqref{eq:Acw-keys} and the cumulative sum
\eqref{eq:Acw-gamma} give
\begin{align}
\mathrm{d}\BK &= \mathrm{d}\widehat{\BK} \odot \Bga
  + \mathrm{d}\bar{\BK} / \Bga
  + \mathrm{d}\breve{\BK} \odot (\Bga_C / \Bga), \qquad
\mathrm{d}\BQ = \mathrm{d}\widehat{\BQ} \odot \Bga, \notag\\
\mathrm{d}\Bga &= \mathrm{d}\widehat{\BK} \odot \BK
  - \mathrm{d}\bar{\BK} \odot \BK / \Bga^{2}
  + \mathrm{d}\widehat{\BQ} \odot \BQ
  - \mathrm{d}\breve{\BK} \odot \BK \odot \Bga_C / \Bga^{2}, \qquad
\mathrm{d}\Bga_C \mathrel{+}= \big(\mathrm{d}\breve{\BK} \odot \BK / \Bga\big)^\top \BOn, \notag\\
\mathrm{d}\bm{\ell}_r &= \mathrm{d}\Bga_r \odot \Bga_r
  + \mathbbm{1}[r = C]\; \mathrm{d}\Bga_C \odot \Bga_C, \qquad
\mathrm{d}\log \bff_i = \sum_{r \ge i} \mathrm{d}\bm{\ell}_r
\quad \text{(reverse cumulative sum).}
\label{eq:Acw-bwd4}
\end{align}
Here $\Bga \in \dR^{C \times d_k}$ stacks the $\Bga_r$ and all products with
it are element-wise. Two points differ from the delta-rule backward. The
values enter the system matrix $\BN$ through $\BG^{vv}$, so $\mathrm{d}\BV$
receives a contribution through the inverse in
\eqref{eq:Acw-bwd2}--\eqref{eq:Acw-bwd3}, whereas in DeltaNet, Gated
DeltaNet and KDA the values appear only in the right-hand side. And since
the pseudo-values are $u_r \Bv_r$, the gradient with respect to the write
coefficients is the row-wise contraction $\mathrm{d}\Bu =
(\mathrm{d}\widetilde{\BV} \odot \BV)\BOn$ over the value dimension,
mirroring the forward contraction that produces $\Bp$. The gate factor
$\diag(\Bbe)$ sits inside the Hadamard product and must be kept inside the
accumulation of $\mathrm{d}\BN$, as observed for the channel-wise gates of
Gated DeltaNet-2 \citep{hatamizadeh2026gdn2}.

\subsection{Verification}
\label{sec:Achunkwise-verification}

We verified Theorem~\ref{thm:Achunkwise} and
\eqref{eq:Acw-bwd1}--\eqref{eq:Acw-bwd4} numerically in double precision
against the token-by-token recurrence \eqref{eq:Acw-rec}: outputs and final
memory agree to machine precision ($\le 10^{-15}$ relative) for chunk
lengths $C \in \{1, 8, 16, L\}$, sequence lengths that are and are not
multiples of $C$, $d_k \ne d_v$, nonzero initial memory, strong and weak
decay, and $\beta_r \in (0,1)$ as well as $\beta_r \in (0,2)$; the operator
identity \eqref{eq:Acw-wy} holds against the explicitly formed product of
$d_k d_v \times d_k d_v$ factors; the no-decay reduction of $\BN$ to the
Gram of $\Ba_r = \Bk_r \otimes \Bv_r$ holds; and the gradients with respect
to $\BQ, \BK, \BV, \Bbe, \Bc, \log \bff$ and $\BS_0$ agree with central
finite differences of the recurrence to $10^{-9}$. The reference
implementations in the supplementary material are checked against the same
recurrence.

\end{appendices}

\end{document}